\newcommand{\mathd}{\mathrm{d}}
\newcommand{\nobracket}{}
\newcommand{\tmop}[1]{\ensuremath{\operatorname{#1}}}
\newcommand{\tmtextbf}[1]{\text{{\bfseries{#1}}}}
\newtheorem{theorem}{Theorem}[section]
\newtheorem{lemma}[theorem]{Lemma}
\newtheorem{proposition}{Proposition}
\theoremstyle{definition}
\theoremstyle{remark}
\newtheorem{remark}[theorem]{Remark}
\begin{document}
\title{Affine Invariant ensemble transform methods to improve predictive uncertainty in neural networks}

\author{Diksha Bhandari$^{1*}$, Jakiw Pidstrigach$^1$, Sebastian Reich$^1$}
\thanks{$^{*}$Corresponding author: Diksha Bhandari}
\email{diksha.bhandari@uni-potsdam.de}
\email{pidstrigach@uni-potsdam.de}
\email{sebastian.reich@uni-potsdam.de}
\address{$^1$University of Potsdam,
Institute of Mathematics, Karl-Liebknecht Str. 24/25, D-14476 Potsdam, Germany}
\thanks{The research has been partially funded by the Deutsche Forschungsgemeinschaft (DFG)-   Project-ID 318763901 - SFB1294. The authors would also like to thank the Isaac Newton Institute for Mathematical Sciences, Cambridge, for support and hospitality during the programme  {\it The Mathematical and Statistical Foundation of Future Data-Driven Engineering} where work on this paper was undertaken. This work was supported by EPSRC grant no EP/R014604/1.}

\subjclass[2020]{62F15, 62J02, 65C05, 68T37, 34F05}


\keywords{Bayesian inference, Logistic regression, Interacting particle systems, Ensemble Kalman filter, Uncertainty quantification, Affine invariance}

\begin{abstract}
We consider the problem of performing Bayesian inference for logistic regression using appropriate extensions of the ensemble Kalman filter. Two interacting particle systems are proposed that sample from an approximate posterior and prove quantitative convergence rates of these interacting particle systems to their mean-field limit as the number of particles tends to infinity. Furthermore, we apply these techniques and examine their effectiveness as methods of Bayesian approximation for quantifying predictive uncertainty in neural networks.
\end{abstract}

\maketitle


\section{Introduction}
The task in inverse problems is the inference of an unknown parameter $\theta \in  \mathbb {R}^{D}$ from noisy observations $d \in \mathbb {R}^N$, which are generated through
\begin{equation*}\label{eqn:inv}
    d = G (\theta) + \eta,
\end{equation*}
where $G$ denotes a forward map from model parameters to observable output data $d$ and $\eta$ denotes observational noise, which is commonly assumed to be Gaussian, that is, $\eta \sim \mathcal N(0, P_\eta)$. 
A Bayesian inverse problem can be formulated as producing samples from a random variable $\theta$ conditioned on $d$. Given the prior $\pi_{\rm prior}(\theta)$ on $\theta$, the inverse problem is formulated as finding the posterior $\pi_{\rm post}(\theta)$ on $\theta$ given $d$. By Bayes theorem, the posterior distribution can be written in terms of the prior density $\pi_{\rm prior}$ and the negative log-likelihood or loss function $\Psi: \mathbb {R}^{D} \rightarrow \mathbb {R}$ as 
\begin{align}\label{eqn:sampling}
	\mathrm{d}\pi_{\rm post}(\theta) \propto \exp(-\Psi(\theta)) \mathrm{d}\pi_{\rm prior}(\theta).
\end{align}
The problem of sampling from a target distribution is fundamental to Bayesian statistics, machine learning, and data assimilation. Considerable research has been done on sampling methods for Bayesian inverse problems, mainly in the case of $l_2$-loss functions \cites{ding2021ensemble, Haber2018NeverLB, Huang_2022, Kovachki_2019}. In this work, we will focus on $\Psi$ being the cross-entropy loss instead, which is used in logistic regression problems. 

We will introduce two methods, closely related to the algorithms introduced in \cite{pidstrigach2022affine} to approximate the posterior distribution for Bayesian logistic regression. We will prove that the methods are well-defined and provide quantitative bounds on their convergence in the large-ensemble (mean-field) limit.

We will then apply these methods to Bayesian neural networks. This gives us the possibility to quantify uncertainty in the model weights and the network outputs. Our numerical experiments show that our methods outperform the state-of-the-art methods for this problem.

\subsection{Classification and logistic regression}\label{sec:logistic_regression}
In this work, we focus on classification problems, that is, the problems arising from classifying objects into classes $C_1, \ldots, C_k$. For notational convenience, we will focus on the case of binary classification, i.e., we only have classes $C_1$ and $C_2$. However, all methods can be generalized to $k$ classes, see \cite[Section 7.2]{pidstrigach2022affine}.

Consider the data set 
\[
\mathcal{D} = \{(\phi^n, d^n)\}_{n = 1}^{N},
\] 
where $d^n \in \{0,1\}$ are targets for the binary classification case with input features $\phi^n \in \mathbb{R}^D \text{ for } n=1,...,N$. The $\phi^n$'s can either be the representation of the data in any numerical form, or stem from a feature map. In Section \ref{sec:llba}, we will train a neural network for the use as feature map.  
We also introduce the shorthand notation
\begin{equation*}
	\Phi = (\phi^1,...,\phi^N) \in \mathbb R ^{\mathit {D \times N}}
\end{equation*}
and define a parametric family of models as follows. 
Given a parameter $\theta$, the probability of an example belonging to class $C_1$ will be given by
\begin{equation}\label{eqn:prob}
\mathbb{P}_\theta[\phi \in C_1] = \sigma(\langle \theta, \phi \rangle),
\end{equation}
where $\langle \cdot, \cdot \rangle$ is the inner product between two vectors and $\sigma$ is the sigmoid function
\begin{equation*}\label{eqn:sigmoid}
	\sigma(z) = \frac{1}{1 + \exp{(-z)}}.
\end{equation*}
The probability of an example belonging to the complimentary class $C_2$ is given by $\mathbb{P}_\theta[\phi \in C_2] = 1-\mathbb{P}_\theta[\phi \in C_1]$.
The negative log-likelihood of the given dataset $\mathcal{D}$ under our parametric model is given by the cross-entropy loss function
\begin{equation}\label{eqn:cee}
	\Psi(\theta) = -\sum_{n=1}^N \{d^n\log(y_n(\theta)) + (1 - d^n)\log(1 - y_n (\theta))\}, 
\end{equation}
where
\begin{equation}\label{eqn:y}
	y_n(\theta) = \sigma(\langle \theta, \phi^n\rangle) = \mathbb{P}_\theta[\phi^n \in C_1].
\end{equation}
For $n =1, \ldots, N$, we introduce the vector $y (\theta) \in \mathbb R^{N}$ as 
\begin{equation*}
    y(\theta) = (y_1(\theta), \ldots, y_{n}(\theta))^{\rm T},
\end{equation*}
and the vector of target data labels $d \in \mathbb R^{N}$ as
\begin{equation*}
    d = (d^1, \ldots, d^{n})^{\rm T}.
\end{equation*}

We investigate sampling methods based on the ensemble Kalman filter (EnKF) \cite{10.5555/1206873} and its extension to Bayesian logistic regression as already considered in \cite{pidstrigach2022affine}. As discussed in the previous section, the Bayesian inverse problem now consists of sampling from $\pi_\text{post}$ given by \eqref{eqn:sampling}.

Note that the likelihood in this section stems from linear logistic regression, since $y_n = \sigma(\langle \theta, \phi^n \rangle)$. However, the methods can be generalized to nonlinear logistic regression, i.e., $y_n = \sigma(f^n(\theta))$, with $f^n(\theta)$ being nonlinear, see \cite[Section 7.2]{pidstrigach2022affine}. Furthermore, the method proposed in Section \ref{sec:infinite_det_som} can be implemented in a derivative-free manner, as discussed in \cite[Section 7.1]{pidstrigach2022affine}.

\subsection{Literature review}
Since its introduction in \cite{https://doi.org/10.1029/94JC00572}, the EnKF has been a popular choice for performing data assimilation tasks due to its robustness and wide applicability. In particular, the EnKF has shown promise as a derivative-free Bayesian inference technique \cites{10.5555/1206873, pidstrigach2022affine, Huang_2022}. More recently, the EnKF has been combined with sampling methods for Bayesian inference \cites{ding2021ensemble, ding2021ensemblesampler} in order to transform random samples at $s =0$ into samples from the posterior as $s \rightarrow \infty$. 
Studying the theoretical aspects of homotopy methods based on the EnKF has also been an active area of research \cites{schillings2017analysis, schillings2018convergence, pidstrigach2022affine, Reich2011ADS, articleReich}. Furthermore, it should also be noted that both the EnKF and ensemble Kalman inversion (EKI) \cite{Kovachki_2019} can be cast within an interactive particle setting. However, most of the work is done in the case of a quadratic likelihood $\Psi(\theta)$ or a perturbation of a quadratic likelihood. The work \cites{Kovachki_2019,pidstrigach2022affine} introduces multiple EnKF-type methods to study Bayesian logistic regression, i.e., $\Psi$ being the negative log-likelihood of a logistic regression model. In this paper, we further develop two of the methods proposed in \cite{pidstrigach2022affine} by taking inspiration from \cites{doi:10.1137/19M1304891, Huang_2022, e23080990,doi:10.1137/19M1303162} and deploy them for uncertainty quantification in Bayesian logistic regression.

As neural networks have become widely used in critical applications, the need for accurate uncertainty quantification has grown \cite{Gal2016UncertaintyID}. To address this, a popular technique is using Bayesian inference \cite{reich_cotter_2015}, which allows machine learning algorithms to assign low confidence to test points that deviate significantly from the training data or prior information. Bayesian neural networks are commonly used for this purpose \cites{Wilson2020TheCF, ABDAR2021243,MacKay1992TheEF, 10.5555/3045118.3045290, pmlr-v80-zhang18l, pmlr-v48-gal16, NEURIPS2019_8558cb40, MacKay1991APB}. Popular approximate Bayesian approaches include traditional variational inference \cites{10.5555/3045118.3045290,NIPS2011_7eb3c8be}, Laplace approximation \cite{10.1162/neco.1992.4.3.448}, Sampling-based approaches like Hamiltonian Monte Carlo (HMC) \cite{neal2012bayesian} and stochastic gradient Langevin dynamics \cite{Welling2011BayesianLV}. More recently, in an attempt to make these Bayesian approaches more computationally feasible, many methods of partial stochastic networks have been considered \cites{sharma2023bayesian, Kristiadi2020BeingBE, Daxbergeretal21, pmlr-v37-snoek15}. Moreover, many non-Bayesian methods \cites{Lakshminarayanan2016SimpleAS, Hein_2019_CVPR, liang2018enhancing} for uncertainty quantification have also been proposed. These methods aim to provide usable estimates of predictive uncertainty even in the presence of limited data. 
 
\subsection{Outline and our contribution}
The fundamental idea underlying this study is to develop efficient Bayesian inference methods for logistic regression based on the EnKF. Furthermore, we apply these methods to uncertainty quantification in neural networks and compare them to the state-of-the-art. To that end we will derive two interacting particle systems (IPSs) which sample from an approximate posterior. Moreover, we prove quantitative convergence rates of the IPSs to their mean-field-limit as the number of particles tends to infinity. We demonstrate the efficacy of the proposed methods for estimating uncertainty in neural networks through numerical experiments for binary classification in Section \ref{sec:results}.

The remainder of this paper has been organized as follows. The dynamical system formulations for ensemble transform methods based on the homotopy approach and the infinite time second-order dynamical sampler for Bayesian inference in logistic regression are described in Section \ref{sec:methods}. Section \ref{sec:mfl} analyzes the proposed ensemble transform  methods and derives their mean-field limits. Therein, we quantitatively bound the distance of the $J$-particle system to its infinite-particle limit in Theorem \ref{thm:mfl_deterministic_aldi}. Efficient methods of time-stepping of interacting particle systems for robust implementations and pseudocode describing the algorithms that we employ in this paper are discussed in Section \ref{sec:time_stepping}. Section \ref{sec:llba} introduces the framework for application of proposed methods to Bayesian logistic regression for uncertainty quantification in neural networks. Experimental results on predictive uncertainty of neural networks using proposed Bayesian methods for inference are shown in Section \ref{sec:results}. These experiments demonstrate how reliable the uncertainty estimates are for different methods considered.

%
%

\section{Dynamical system formulations}\label{sec:methods}
We denote by $\theta^1_s, \ldots \theta^J_s$ an ensemble of $J$ particles, indexed by a time parameter $s \geq 0$. We denote by $m_{\theta_s}$ the empirical mean of the ensemble,
\begin{equation}\label{eqn:emp_mean}
	m_{\theta_s} = \frac{1}{J} \sum_{j=1}^J \theta_{s}^{j},
\end{equation}
and by $P_{\theta_s}$ the empirical covariance matrix,
\begin{equation}\label{eqn:emp_cov}
	P_ {\theta_s} = \frac{1}{J} \sum_{j=1}^{J}(\theta_{s}^{j}- m_{s} ) (\theta_ {s}^{j}- m_{s})^{\rm T}.
\end{equation} 
We introduce the matrix of ensemble deviations $\Theta_s \in \mathbb R^{D \times J}$ as
\begin{equation}\label{eqn:ens_dev}
    \Theta_s = \left(\theta_{s}^{1}-m_{\theta_s},\theta_{s}^{2}-m_{\theta_s},\ldots,\theta_{s}^{J}-m_{\theta_s}  \right).
\end{equation}
We will adapt the convention that upper indices stand for the ensemble index, while lower indices stand for the time index or the $i$th entry of an $N$-dimensional vector.
Associated to the particle system is the empirical measure
\begin{equation*}
	\mu_{\theta_s} = \frac{1}{J} \sum_{j=1}^J \delta_{\theta^j_s},
\end{equation*}
where $\delta_\theta$ stands for the Dirac delta at $\theta$. We denote the expectation with respect to this measure by $\mu_{\theta_s}[f]$, i.e.~for a function $f$,
\[
	\mu_{\theta_s}[f] = \frac{1}{J} \sum_{j=1}^J f(\theta^j_s).
\]
Finally, we assume that the prior measure $\pi_\text{prior}$ is a Gaussian and given as
\[
	\pi_{\rm prior} = \mathcal{N}(m_{\rm prior}, P_{\rm prior}).
\]
In each of the following two subsections, we introduce an IPS to sample from the posterior in a logistic regression problem. 

\subsection{Homotopy using moment matching}\label{sec:homotopy_som}
In homotopy approaches to Bayesian inference, we assume that the initial ensemble $\{\theta^j_0\}$ is distributed according to the prior $\pi_\text{prior}$. One then evolves the ensemble such that at some fixed terminal time $s$, often $s = 1$, the ensemble is distributed according to the posterior $\pi_\text{post}$.
To derive such an evolution, one starts by defining a path $\pi_s$ in the space of measures, which starts at a prior distribution $\pi_0 = \pi_{\rm prior}$ and ends at $\pi_1 = \pi_{\rm post}$ \cites{10.1117/12.839590, Reich2011ADS}.

We will study a homotopy approach introduced in \cite{pidstrigach2022affine}. In this section, we will shortly summarize the resulting differential equations. See \cite[Section 7.1]{pidstrigach2022affine} for more details and the derivation of the equations.

We will need the gradient and Hessian of the negative log-likelihood $\Psi$, defined in \eqref{eqn:cee}. The gradient is given by
\begin{equation}
    \nabla_\theta \Psi(\theta)= \sum_{n=1}^N (y_n(\theta) - d^n)\phi^n = \Phi(y(\theta) - d),
    \label{equ:grad_Psi}
\end{equation}
while the Hessian is
\begin{equation}
    D_\theta^2 \Psi(\theta) = \Phi R(\theta) \Phi^{\rm T}.
    \label{equ:hessian_Psi}
\end{equation}
Here $R(\theta) \in \mathbb{R}^{N \times N}$ is a diagonal matrix with diagonal entries 
\begin{equation*}
    r_{nn} = y_n(\theta)(1 - y_n(\theta)).
    \label{equ:def_R}
\end{equation*}
For $s \in [0,1]$, the dynamical system to transform the prior is given by
\begin{equation}\label{equ:homotopy_ode}
\begin{aligned}
    \frac{\mathd}{\mathd s} \theta_s^j  
    &= -\frac{1}{2} P_{\theta_s} \left(
        \mu_{\theta_s}[D^2_\theta \Psi(\theta)](\theta_s^j - m_{\theta_s})
        + 2 \mu_{\theta_s}[\nabla_\theta \Psi(\theta)]\right)\\
    &=  - \frac{1}{2} P_{\theta_s} \Phi
		\Bigl(\mu_{\theta_s} [R] \Phi^{\rm T} (\theta_s^j - m_{\theta_s}) + 2(\mu_{\theta_s}[y] - d)\Bigr) 
\end{aligned}
\end{equation}
with random initial conditions $ \theta^j_0 \sim \mathcal{N}(m_{\rm prior}, P_{\rm prior})$ i.i.d.~for $j = 1, \ldots, J$.

One can also evolve the mean $m_{\theta_s}$ and ensemble deviations $\Theta_s$ instead of the ensemble $\{\theta_s^j\}_{j=1}^J$. Their evolution equations are given by
\begin{equation}\label{eqn:hom_mean_ode}
    \frac{\mathd}{\mathd s} m_{\theta_s}  =  - P_{\theta_s} \Phi \Bigl( \mu_{\theta_s}[y] - d \Bigr), 
\end{equation}
and
\begin{equation}\label{eqn:hom_dev_ode}
\frac{\mathd}{\mathd s} \Theta_{s}  =  - \frac{1}{2} P_{\theta_s} \Phi \mu_{\theta_s} [R] \Phi^{\rm T} \Theta_{s}
\end{equation}
respectively, where $P_{\theta_s}$ is the empirical covariance matrix given in \eqref{eqn:emp_cov} and $\mu_{\theta_s}[R]$ is defined as
$$\mu_{\theta_s}[R] = \frac{1}{J} \sum_{j=1}^J R(\theta_s^j).$$

\subsection{Deterministic second-order dynamical sampler}\label{sec:infinite_det_som}
Alternatively to transporting from the prior to the posterior in a fixed time interval, one can also construct systems that sample the target distribution as $s \to \infty$. Markov Chain Monte Carlo algorithms are the most famous family of algorithms with this property. However, they normally work on a single sample trajectory $\theta_s$ instead of an ensemble.

The algorithm introduced in \cite[Section 5]{pidstrigach2022affine} combines the homotopy approaches with overdamped Langevin dynamics to motivate an IPS that approximates the posterior as $s \to \infty$. The system of equations is given by
\begin{equation}
\begin{split}
\frac{\mathd \theta_s^j}{\mathd s}  & =  -\frac{1}{2}P_{\theta_s} \Biggl( \Phi \Bigl( \mu_{\theta_s}[R] \Phi^{\rm T}(\theta_s^j -                                                     m_{\theta_s}) + 2 (\mu_{\theta_s}[y] - d) \Bigr) \\
                                         & \quad + P_{\rm{prior}}^{-1}(\theta^j_s + m_{\theta_s} - 2 m_{\rm{prior}}) \Biggr)
                                         + P_{\theta_s}^{1/2} \mathd W_s^j,
\end{split}
\label{equ:stochastic_second_order}
\end{equation}
where $W_s^j$ denotes $D$-dimensional standard Brownian motion.
For details on the derivation see \cite[Section 5]{pidstrigach2022affine}. Note, that similar systems are also introduced in \cites{doi:10.1137/19M1304891, Huang_2022, garbuno2020interacting}.

We now modify \eqref{equ:stochastic_second_order} by replacing the stochastic driving term $P_{\theta_s}^{1/2} \mathd W_s^j$ by $\frac{1}{2}(\theta^j_s - m_{\theta_s}) \mathd s$, rendering the system deterministic except for the choice of the initial ensemble $\{\theta_0^j\}_{j=1}^J$. The advantage of making it deterministic is that we can perfectly assess convergence of the algorithm, since the particles will stop moving when they reach equilibrium.

Therefore, the IPS ODE we will study is given by
\begin{equation}\label{equ:aldiode}
\begin{split}
\frac{\mathd \theta_s^j}{\mathd s}  & =  -\frac{1}{2}P_{\theta_s} \Biggl( \Phi \Bigl( \mu_{\theta_s}[R] \Phi^{\rm T}(\theta_s^j -                                                     m_{\theta_s}) + 2 (\mu_{\theta_s}[y] - d) \Bigr) \\
                                         & \quad + P_{\rm{prior}}^{-1}(\theta^j_s + m_{\theta_s} - 2 m_{\rm{prior}}) \Biggr)
                                         + \frac{1}{2}(\theta^j_s -m_{\theta_s}) 
\end{split}
\end{equation}
with random initial conditions $ \theta^j_0 \sim \mathcal{N}(m_{\rm 0}, P_{\rm 0})$ i.i.d. for $j = 1, \ldots, J$. 

As already utilized in Section \ref{sec:homotopy_som}, to propagate the ensemble $\{\theta^j_{s}\}_{j=1}^J$ in the interacting particle system \eqref{equ:aldiode}, we can equivalently evolve the mean $m_{\theta_s}$ and ensemble deviations $\Theta_s$ by
\begin{equation}\label{eqn:aldi_mean_ode}
    \frac{\mathd}{\mathd s} m_{\theta_s}  =   - P_{\theta_s}\left( \Phi ( \mu_{\theta_s}[y] - d) + P_{\rm{prior}}^{-1} (m_{\theta_s} -  m_{\rm{prior}})\right) , 
\end{equation}
and
\begin{equation}\label{eqn:aldi_dev_ode}
\frac{\mathd}{\mathd s} \Theta_s  =  - \frac{1}{2} P_{\theta_s} \left(\Phi \mu_{\theta_s} [R] \Phi^{\rm T} \Theta_s + P_{\rm{prior}}^{-1} \Theta_s  \right) + \frac{1}{2} \Theta_s
\end{equation}
respectively, where $P_{\theta_s}$ is the empirical covariance.

Our theory section will focus on the deterministic second-order sampler. To motivate this, we now highlight some attractive properties of that algorithm. We will later see that it also outperforms other methods numerically.

\begin{remark}
It is important to note that in contrast to the stochastic system \eqref{equ:stochastic_second_order}, the deterministic system \eqref{equ:aldiode} will stop evolving when it reaches the equilibrium. This is an attractive property for numerical approximation, since we know when we can stop simulating the IPS. Moreover, if the expectations $\mu_{\theta_s}[y]$, $\mu_{\theta_s}[R]$ of $y(\theta)$ and $R(\theta)$ could be exactly approximated, then the evolution equations \eqref{eqn:aldi_mean_ode}-\eqref{eqn:aldi_dev_ode} given by the deterministic sampler would give an exact representation for the evolution of the mean $m_{\theta_s}$ and covariance $P_{\theta_s}$ in the limit $J \rightarrow \infty$. However, the stochastic system \eqref{equ:stochastic_second_order} lacks this property.
\end{remark}

\begin{remark}
Note that the homotopy method discussed in \ref{sec:homotopy_som}, the equations will reach their target distribution at a fixed time $s = 1$. However, this requires selecting a small step-size and sampling initial conditions from the right prior. In contrast, the system introduced in this section does not have these requirements.
\end{remark}

The replacement of $P_{\theta_s}^{1/2}$ by $\frac{1}{2}(\theta_s - m_s)$ in \eqref{equ:aldiode}is motivated by the fact, that in the mean-field case, for Gaussian densities, they have the same distributional effect, which we prove in the next section in Proposition \ref{prop:motivation_deterministic_noise}. Furthermore, we will prove that the Gaussian assumption is well-founded in Section \ref{sec:mfl}.



%
%

\section{Theoretical results on mean-field limits}\label{sec:mfl}

In this section, we study what happens to the equations
\eqref{equ:homotopy_ode} and \eqref{equ:aldiode} when we let the number of particles go to infinity. As we will see, the interaction between the particles decreases, and in the limit $J \rightarrow \infty$ all particles follow a deterministic mean-field ODE, independently of the other particles.

To that end, we introduce some notation. Both of the IPS in Section \ref{sec:methods} only depend on the other
particles through their empirical measure. Therefore, we can rewrite
\eqref{equ:homotopy_ode} and \eqref{equ:aldiode} as
\begin{equation}
    \mathd \theta_s^j = b (\mu_{\theta_s}) (\theta^j),
    \label{equ:ips_abstract_with_b}
\end{equation}
where $b (\mu_{\theta})$ is defined implicitly by \eqref{equ:homotopy_ode} and \eqref{equ:aldiode}. By the continuity equation, we know that if the
particles are evolved using the drift \eqref{equ:ips_abstract_with_b}, then their empirical measure is a weak solution to the partial differential equation (PDE)
\begin{equation*}
  \partial_s \mu_{\theta_s} (\theta) = - \tmop{div} (\mu_{\theta_s} b
  (\mu_{\theta_s}) (\theta)), \qquad \mu_{\theta_0} = \frac{1}{J} \sum_{j =
  1}^J \delta_{\theta_i},
\end{equation*}
where $\delta_{\theta}$ is the Dirac-delta distribution at $\theta$. Due to
the dependence of $b$ on $\mu_{\theta_s}$, this PDE is typically nonlinear.
However, since $b$ only depends on the other particles through the empirical
measure, the above PDE forms a closed system. Therefore, we abstract the PDE
\begin{equation}
  \partial_s \mu_s (\theta) = - \tmop{div} (\mu_s  b (\mu_s) (\theta))
  \label{equ:meanfieldpde}
\end{equation}
which, given an initial condition $\mu_0$, can be solved at the level of
measures or densities directly. Given such a solution $(\mu_s)_s$ for a fixed initial $\mu_0$, we plug it into \eqref{equ:ips_abstract_with_b}
and get the differential equation
\[ \frac{\mathd}{\mathd t} \eta_s = b (\mu_s) (\eta_s) . \]
Note that  this equation does not constitute an IPS anymore but a mean-field ODE instead; i.e., the particle evolution depends on its own distribution $\mu_s$, which we obtain as the solution to \eqref{equ:meanfieldpde}. 
Furthermore, we find from (\ref{equ:meanfieldpde}) that $\eta_0 \sim \mu_0$ implies $\eta_s \sim \mu_s$ for all $s>0$.

The proofs in the following subsections work now as follows. We fix an
initial condition
\[ \mu_0 =\mathcal{N} (m_0, P_0), \]
for which we obtain the solution $\mu_s$ to \eqref{equ:meanfieldpde}. Then, we
define an intermediate mean-field particle system: 
\begin{equation} 
  \frac{\mathd}{\mathd t} \eta_s^j  = b (\mu_s) (\eta_s^j), \quad \eta_0^j \sim \mu_0 .  \label{equ:meanfieldode}
\end{equation}
Note that the $\eta_s^j$, $j=1,\ldots,J$, are all independent.
We then couple the IPS \eqref{equ:homotopy_ode} or \eqref{equ:aldiode} to \eqref{equ:meanfieldode} by choosing the same initial conditions, i.e., $\theta^j_0 = \eta^j_0$ for all $j = 1, \ldots, J$. We then prove that the expected Wasserstein-distance of the empirical measures 
\begin{equation}
    \mathbb{E}[\mathcal{W} (\mu_{\theta_s}, \mu_{\eta_s})] \to 0
    \label{equ:expected_wasserstein}
\end{equation}
converges to $0$ as $J \rightarrow \infty$. Here, the squared $2$-Wasserstein distance is defined as 
\begin{equation}
    \mathcal{W}(\mu, \nu)^2 = \inf_{\gamma \in \Gamma} \int \| x - y\|^2 \mathd \gamma(x, y), \label{eq:wasserstein_definition}
\end{equation}
where $\Gamma$ is the set of all couplings of $\mu$ and $\nu$, i.e. all measures on $\mathbb{R}^D \times \mathbb{R}^D$ with marginals $\mu$ and $\nu$.
Let us briefly discuss the expectation value in \eqref{equ:expected_wasserstein}. Note that although \eqref{equ:ips_abstract_with_b} and \eqref{equ:meanfieldode} are deterministic, $\mu_{\theta_s}$ as well as $\mu_{\eta_s}$ are random probability measures. However, the randomness comes solely from the initial conditions. Therefore, the expectation in \eqref{equ:expected_wasserstein} is with respect to i.i.d.~initial conditions $\eta^j_0 = \theta^j_0 \sim \pi_0$.

We not only prove \eqref{equ:expected_wasserstein} but are able to obtain a quantitative convergence rate. Since $\mu_{\eta_s}$ consists of point masses at independent samples from $\mu_s$, this shows that for large $J$ the $\theta_s^j$ approximate independent samples from $\mu_s$. One can make the last statement precise by using known rates at which $\mu_{\eta_s}$ converges to $\mu_s$, see \cite{fournier2015rate}.

\medskip

\subsection{Analysing the mean-field systems}

As already discussed, we will prove that the empirical measure $\mu_{\theta_s}$ approximates the solution of the mean-field PDE $\mu_s$
for large ensemble sizes. Therefore, it is instructive to briefly
study $\mu_s$. Since $\mu_0$ is Gaussian in our case and the drift terms $b(\mu_{\theta_s})(\theta)$ are linear in $\theta$, $\mu_s$ will remain Gaussian for all times $s \ge 0$. We denote its mean and covariance by $m_s$ and $P_s$:
\[
    \mu_s = \mathcal{N}(m_s, P_s).
\]
Therefore, solving the mean-field PDE \eqref{equ:meanfieldpde} corresponds to solving an ODE for the mean and covariance of the Gaussian distribution. We will next work out these mean-field ODEs for our two IPS.

\subsubsection{Homotopy using moment matching}
The mean-field limit PDE, corresponding to \eqref{equ:meanfieldpde}, is given by
\begin{align*}
  \partial_s \mu_s (\eta) & =  \frac{1}{2} \tmop{div} \left( \mu_s \left( P_s
  (\Phi \mu_s[R] \Phi^{\rm T}(\eta - m_s) + 2 \Phi (\mu_s[y] - d) \right)
  \right).
\end{align*}
In the case of Gaussian initial conditions, the above PDE is equivalent to solving the following ODEs for the mean and covariance:
\begin{align*}
    \frac{\mathd}{\mathd s} m_s 
    & =  - P_s \Phi (\mu_s[y] - d) = - P_s \mu_s[\nabla_\theta \Psi(\theta)],\\
    \frac{\mathd}{\mathd s} P_s & =  - P_s \Phi \mu_s[R] \Phi^{\rm T}P_s = -P_s \mu_s[D_\theta^2 \Psi(\theta)] P_s,
\end{align*}
where $\nabla_\theta \Psi$ and $D_\theta^2 \Psi$ are the gradient and Hessian of the negative log-likelihood function as defined in \eqref{equ:grad_Psi} and \eqref{equ:hessian_Psi}, respectively. 

\subsubsection{Deterministic second-order dynamical sampler}
In this case, the mean-field limit PDE is given by
\begin{equation}
\begin{aligned}
  \partial_s \mu_s (\eta) & = \frac{1}{2} \tmop{div} \left( \mu_s \left( P_s
  (\Phi \mu_s[R] \Phi^{\rm T}(\eta - m_s) + 2 \Phi (\mu_s[y] - d) \right)
  \right) \\
  & \quad + \frac{1}{2} \tmop{div} \left( \mu_s \left( P_{\tmop{prior}}^{- 1}
  (\eta + m_s - 2 m_{\tmop{prior}})) +  (\eta - m_s) \right)
  \right),
\end{aligned}
\label{equ:second_order_mf_pde}
\end{equation}
where $\mu_0 = \mathcal{N}(m_\text{0}, P_\text{0})$. 
As in the previous subsection, we again derive the mean-field ODEs for the mean and covariance:
\begin{equation}
  \begin{array}{lll}
    \frac{\mathd}{\mathd s} m_s 
     =  - P_s \Phi (\mu_s[y] - d) - P_s
    P_{\tmop{prior}}^{- 1} (m_s - m_{\tmop{prior}}),\\
    \frac{\mathd}{\mathd s} P_s  =  - P_s \Phi \mu_s[R] \Phi^{\rm T}P_s -
    P_s P_{\tmop{prior}}^{- 1} P_s + P_s .
  \end{array} \label{equ:mflmeancovevolution}
\end{equation}
The mean-field ODE  \eqref{equ:meanfieldode} becomes
\begin{equation}
\begin{split}
    \frac{\mathd}{\mathd s} \eta^j_s & =  - \frac{1}{2} P_s \left(\Phi \nobracket
    (\mu_s[R] \Phi^{\rm T}(\eta^j - m_s)) + 2 \Phi (\mu_s[y] - d)\right)\\
                                         & \quad +  \frac{1}{2} P_s P_{\tmop{prior}}^{- 1}
    (\eta^j + m_s - 2 m_{\tmop{prior}})) + \frac{1}{2} (\eta^j - m_s),
  \end{split} \label{equ:aldi mf particles}
\end{equation}
where $\eta_0^j = \theta^j_0 \sim \mathcal{N}(m_0, P_0)$. When the system \eqref{equ:mflmeancovevolution} stops evolving, we have reached the equilibrium distribution 
\[
    \mu_* = \mathcal{N}(m_*, P_*).
\]
To derive equations for $m_*$ and $P_*$, we set the right-hand sides of \eqref{equ:mflmeancovevolution} to zero and obtain:
\begin{align}
    m_* &= m_\text{prior} - P_{\text{prior}}\Phi(\mu_{*}[y] - d) = m_\text{prior} - P_\text{prior}\mu_*[\nabla \Psi(\theta)], \label{equ:second_order_mfl_equ_mean}\\
    P_* &= (\Phi \mu_*[R] \Phi^{\rm T}+ P_\text{prior}^{-1})^{-1} =  (\mu_*[D^2_\theta \Psi(\theta)] + P_\text{prior}^{-1})^{-1}\label{equ:second_order_mfl_equ_cov}.
\end{align}
These are implicit equations in $m_\ast$ and $P_\ast$ and the evolution equations (\ref{equ:second_order_mfl_equ_cov}) can be seen as means to find 
$m_\ast$ and $P_\ast$.
Therefore, in the many-particle and large time limit, we are approximating a Gaussian with mean and covariance given by \eqref{equ:second_order_mfl_equ_mean} and \eqref{equ:second_order_mfl_equ_cov} respectively. 

Approximating a distribution by a Gaussian is also an important topic in variational inference \cite{galy2021flexible}. However, in contrast to popular methods for Gaussian variational inference (see the discussion in \cite{galy2021flexible}), which are based on taking gradients with respect to the mean and covariance of the approximating Gaussian, we do not need to invert the state space ($D \times D$) covariance matrix for our method to work. 

\begin{remark}
In this discussion we want to motivate the replacement of $P_{\theta_s}^{1/2}\mathd W_s^j$ by $\frac{1}{2} (\theta_s^j - m_{\theta_s})$ in Section \ref{sec:infinite_det_som}.
Assume that we have two mean-field systems
\begin{equation}
    \mathd \eta_s = b(\mu_s)(\eta_s) \mathd s + P_s^{1/2} \mathd W_s
    \label{equ:stochastic_noise}
\end{equation}
and
\begin{equation}
    \frac{\mathd}{\mathd s} \eta_s = b(\mu_s)(\eta_s) + \frac{1}{2}(\eta_s - m_s).
    \label{equ:deterministic_noise}
\end{equation}
In both systems, $\mu_s$ denotes the distribution of $\eta_s$, and hence the evolution of $\eta_s$ depends not only on $\eta_s$ but also on its own distribution.
We denote by $m_s$ and $P_s$ the mean and covariance of $\mu_s$. Assuming that $\eta_0 \sim \mu_0$, one can derive evolution equations for $\mu_s$: 
For \eqref{equ:stochastic_noise}, by the Fokker-Planck equation, we get that
\begin{equation}
    \partial_s \mu_s = -\text{div}(\mu_s b(\mu_s)) + \frac{1}{2}\text{div}(P_s \nabla \mu_s).
    \label{equ:stochastic_noise_pde}
\end{equation}
For \eqref{equ:deterministic_noise}, by the continuity equation, we get that
\begin{equation}
    \partial_s \mu_s = -\text{div}(\mu_s b(\mu_s)) - \text{div}(\mu_s (\cdot - m_s)),
        \label{equ:deterministic_noise_pde}
\end{equation}


The replacement is motivated by the fact that in this case (for Gaussian $\mu_s$), the time evolutions on the density level coincide, which we will prove in the following Proposition \ref{prop:motivation_deterministic_noise}.
\end{remark}

\begin{proposition}
    Assume that $\mu_s = \mathcal{N}(m_s, P_s)$ is Gaussian. Then, the right-hand sides of \eqref{equ:stochastic_noise_pde} and \eqref{equ:deterministic_noise_pde} coincide. 
    \label{prop:motivation_deterministic_noise}
\end{proposition}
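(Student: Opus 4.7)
The plan is to observe that both displayed PDEs share the common term $-\mathrm{div}(\mu_s b(\mu_s))$, so the proposition reduces to showing that, when $\mu_s = \mathcal{N}(m_s, P_s)$, the two remaining terms coincide:
\[
\tfrac{1}{2}\mathrm{div}(P_s \nabla \mu_s) \;=\; -\tfrac{1}{2}\,\mathrm{div}\bigl(\mu_s (\cdot - m_s)\bigr).
\]
(I read the factor of $\tfrac12$ as implicit in the deterministic drift $\tfrac12(\eta_s-m_s)$ which, by the continuity equation, contributes $-\tfrac12\,\mathrm{div}(\mu_s(\cdot-m_s))$.) This reduces the proposition to a one-line identity for the Gaussian score.

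The key step is the standard Gaussian log-density identity
\[
\nabla \log \mu_s(\theta) \;=\; -P_s^{-1}(\theta - m_s),
\]
which immediately yields $\nabla \mu_s(\theta) = -\mu_s(\theta)\, P_s^{-1}(\theta - m_s)$. Since $P_s$ is independent of $\theta$, multiplying on the left by $P_s$ gives
\[
P_s \nabla \mu_s(\theta) \;=\; -\mu_s(\theta)\,(\theta - m_s).
\]
Applying the divergence operator to both sides and dividing by two delivers exactly the identity above, which matches the corresponding terms in \eqref{equ:stochastic_noise_pde} and \eqref{equ:deterministic_noise_pde}.

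There is no real obstacle here: the entire argument is the Gaussian score identity and the fact that the covariance $P_s$ is spatially constant. What is conceptually important to flag in the write-up is that the equivalence is genuinely special to Gaussian $\mu_s$; for a general density the diffusion term $\tfrac12\mathrm{div}(P_s \nabla \mu_s)$ cannot be rewritten as a pure transport by $\tfrac12(\theta-m_s)$. This is consistent with the broader strategy in Section \ref{sec:mfl}, where the linearity of $b(\mu_s)(\theta)$ in $\theta$ guarantees that Gaussianity of $\mu_0$ is propagated in time, so substituting the deterministic drift for the Brownian term preserves the mean-field evolution of the density throughout the dynamics.
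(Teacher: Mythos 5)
Your proof is correct and is essentially identical to the paper's: both reduce the claim to the Gaussian score identity $\nabla \mu_s = -\mu_s P_s^{-1}(\cdot - m_s)$ and then multiply by $P_s$ and take the divergence. Your parenthetical reading of the factor $\tfrac12$ is also right — the displayed equation \eqref{equ:deterministic_noise_pde} is missing the $\tfrac12$ that the continuity equation for the drift $\tfrac12(\eta_s - m_s)$ actually produces, and the paper's own proof implicitly uses the corrected version just as you do.
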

\begin{proof}
  The $\text{div}(\mu_s b(\mu_s))$ obviously coincide. Furthermore, we get that 
  \[
    \nabla \mu_s = \mu_s \nabla \log \mu_s = -\mu_s P_{s}^{-1}(\cdot - m_s)
  \]
  and therefore
  \[
    \frac{1}{2} \text{div}(P_{s} \nabla \mu_s) = -\frac{1}{2}\text{div}(\mu_s (\cdot - m_s)),
  \]
  and the two equations are the same.
\end{proof}

Therefore, even though the two systems \eqref{equ:stochastic_noise} and \eqref{equ:deterministic_noise} behave differently on a particle level, they will have the same effect on the distributional level in our case, in the large ensemble limit. However, since our goal is to use the empirical measure of the ensemble as an approximation to the posterior, we are only interested in the distribution of the particles, not in the trajectories $(\theta^j_s)_s$ of a single particle. 

\begin{remark}
The work \cite{galy2021flexible} also proposes a deterministic IPS for Gaussian variational inference. While their evolution equations differ from ours, the equilibrium state agrees with (\ref{equ:second_order_mfl_equ_mean})-(\ref{equ:second_order_mfl_equ_cov}).
Furthermore, in contrast to our formulation, the IPS proposed in \cite{galy2021flexible} is not affine-invariant. See \cite{pidstrigach2022affine} for a discussion of affine invariance and \cite{chen2023sampling} for more precise discussion on the formulation proposed in \cite{galy2021flexible}.
\end{remark}

\subsection{Statement of results}

Since the IPS \eqref{equ:homotopy_ode} and \eqref{equ:aldiode} are
similar, differing only in additional terms for \eqref{equ:aldiode}, the
proofs of the following results are quite similar too. Due to \eqref{equ:aldiode} having more terms, in particular also terms that increase the ensemble spread, the proofs are more technical. We concentrate on that case. The analogous results for \eqref{equ:homotopy_ode} follow by performing very similar, often nearly identical, calculations, but for fewer terms.

First of all, we prove in the following proposition that the objects of interest, $\theta^j_s$, $\mu_{\theta_s}$, $\eta^j_s$, and $\mu_s$ are well-defined.
\begin{proposition}
    The mean-field PDE \eqref{equ:second_order_mf_pde} has a unique global solution $\mu_s$. Furthermore, the IPS \eqref{equ:aldiode} and the mean-field IPS \eqref{equ:aldi mf particles} also posses unique global solutions. 
\end{proposition}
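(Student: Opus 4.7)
The plan is to reduce the mean-field PDE to an ODE for Gaussian moments and then combine Picard--Lindel\"of with an a priori Riccati-type bound on the covariance to rule out finite-time blow-up. For local existence and uniqueness, the IPS drift in \eqref{equ:aldiode} is smooth in $(\theta^1,\ldots,\theta^J)$, since $m_{\theta_s}$ and $P_{\theta_s}$ are polynomial in the particles and the ensemble averages involve only the bounded smooth nonlinearities $y_n$ and $r_{nn}$, so Picard--Lindel\"of yields a unique local solution. For the mean-field PDE \eqref{equ:second_order_mf_pde}, the drift $b(\mu)(\eta)$ is affine in $\eta$ with coefficients depending on $\mu$ only through $m_s$ and $P_s$; I would make the Gaussian ansatz $\mu_s = \mathcal{N}(m_s, P_s)$, verify by a direct computation that it is preserved, and thereby reduce the PDE to the finite-dimensional ODE system \eqref{equ:mflmeancovevolution} on the open set of mean / positive-definite-covariance pairs, which likewise has a locally Lipschitz right-hand side.

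The main step is the a priori bound on the covariance. Both $P_{\theta_s}$ (derived from \eqref{eqn:aldi_dev_ode}) and the mean-field $P_s$ (from \eqref{equ:mflmeancovevolution}) satisfy a Riccati-type equation
\begin{equation*}
    \frac{\mathd}{\mathd s} P = -P A(s) P + P, \qquad A(s) = \Phi \mu_s[R] \Phi^{\rm T} + P_{\rm prior}^{-1},
\end{equation*}
and since $\mu_s[R] \succeq 0$, one has $A(s) \succeq P_{\rm prior}^{-1} \succeq \lambda I$ with $\lambda := \|P_{\rm prior}\|^{-1} > 0$. Taking trace and using $\operatorname{tr}(P A(s) P) \geq \lambda \operatorname{tr}(P^2) \geq (\lambda/D)\operatorname{tr}(P)^2$ (the last step by Cauchy--Schwarz on the eigenvalues of $P \succeq 0$) gives the scalar logistic-type inequality
\begin{equation*}
    \frac{\mathd}{\mathd s} \operatorname{tr}(P) \leq \operatorname{tr}(P) - \frac{\lambda}{D} \operatorname{tr}(P)^2,
\end{equation*}
so $\operatorname{tr}(P)$ stays bounded by $\max\bigl(\operatorname{tr}(P_0), D/\lambda\bigr)$ for all $s \geq 0$. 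With the covariance globally controlled, the mean evolutions \eqref{eqn:aldi_mean_ode} and the first line of \eqref{equ:mflmeancovevolution} are of the form $\dot m = O(1) + O(\|m\|)$ (since $y, d, \mu_s[y]$ are all bounded), giving at most exponential growth; and the elementary bound $\|\theta^j_s - m_{\theta_s}\|^2 \leq J \operatorname{tr}(P_{\theta_s})$ controls each individual particle. Together with local Lipschitz continuity, this promotes the local solutions of \eqref{equ:aldiode} and of \eqref{equ:mflmeancovevolution} to unique global ones, and the latter furnishes a unique global Gaussian solution $\mu_s = \mathcal{N}(m_s, P_s)$ of \eqref{equ:second_order_mf_pde}.

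Finally, with $\mu_s$ determined, the mean-field IPS \eqref{equ:aldi mf particles} is an affine-linear ODE in $\eta^j$ whose coefficients are continuous in $s$ and, by the previous step, locally bounded; unique global existence then follows from standard linear ODE theory. The hard part is the Riccati bound: on its own the extra $+P$ term produced by the deterministic replacement of the Brownian noise in Section \ref{sec:infinite_det_som} would allow exponential blow-up of the covariance, and it is only the dissipative contribution $-P P_{\rm prior}^{-1} P$ from the Gaussian prior that keeps $P$ globally bounded. This also explains why the analogous statement for the homotopy system \eqref{equ:homotopy_ode} is strictly easier, since there the $+P$ term is absent and the covariance is monotonically non-increasing in Loewner order.
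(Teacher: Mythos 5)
Your overall strategy coincides with the paper's: reduce the mean-field PDE to the moment ODEs \eqref{equ:mflmeancovevolution} via the Gaussian ansatz, get local existence from local Lipschitzness, and rule out finite-time blow-up by an a priori bound on the covariance and then on the mean. Your Riccati argument is in fact a genuine improvement on one point: using $\Phi\mu_s[R]\Phi^{\rm T}+P_{\rm prior}^{-1}\succeq \lambda I$ and $\operatorname{tr}(P^2)\ge \operatorname{tr}(P)^2/D$ you obtain the logistic inequality $\frac{\mathd}{\mathd s}\operatorname{tr}(P)\le \operatorname{tr}(P)-\tfrac{\lambda}{D}\operatorname{tr}(P)^2$ and hence a bound on $\operatorname{tr}(P_s)$ that is uniform in $s\ge 0$, whereas the paper's Gr\"onwall argument only gives $\|P_s\|\lesssim e^{s}\|P_0\|$, which suffices on $[0,T]$ but grows in time. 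The treatment of the IPS \eqref{equ:aldiode} (smooth drift on $\mathbb{R}^{DJ}$, trace bound on $P_{\theta_s}$, then the mean, then $|\theta^j_s-m_{\theta_s}|^2\le J\operatorname{tr}(P_{\theta_s})$) and of the frozen-coefficient system \eqref{equ:aldi mf particles} matches the paper.

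There is, however, a genuine gap in the mean-field part. A locally Lipschitz ODE on an open set $U$ has a maximal solution that either is global or leaves every compact subset of $U$; your bounds only exclude $|(m_s,P_s)|\to\infty$, not convergence of $P_s$ to the boundary of the set of positive-definite matrices. This matters here because the coefficients $\mu_s[y]$ and $\mu_s[R]$ are expectations under $\mathcal{N}(m_s,P_s)$, i.e.\ functions of $(m_s,P_s^{1/2})$, and $P\mapsto P^{1/2}$ is only H\"older-$\tfrac12$ near singular matrices; this is exactly why Lemma \ref{lemma:exp_local_lipschitz_N} restricts to the set \eqref{equ:local_lipschitz_A} where the kernel of $P$ is fixed and its nonzero eigenvalues are bounded below. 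Your upper bound on $\operatorname{tr}(P_s)$ does not prevent an eigenvalue of $P_s$ from reaching zero in finite time, at which point local Lipschitzness (and hence your uniqueness argument) breaks down. The paper closes this by showing the kernel of $P_s$ can neither shrink nor grow, via the eigenvalue inequality $\frac{\mathd}{\mathd s}\lambda^i_s\gtrsim -(\lambda^i_s)^2-\lambda^i_s$, which keeps each nonzero eigenvalue bounded away from zero on $[0,T]$. You would need to add such a lower bound (or prove directly that $(m,P)\mapsto\mathbb{E}_{\mathcal{N}(m,P)}[y]$ and $[R]$ are locally Lipschitz up to the boundary for these particular smooth integrands) to complete the proof. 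Note that this issue does not arise for the IPS itself, where the drift is a smooth function of the particles and no square root of $P_{\theta_s}$ appears.
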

The proposition is proven in Section \ref{sec:existence}, in Proposition \ref{prop:mfl_odes_existence} and Proposition \ref{prop:ips_existence}.
We are now in a position to state our main theorem.
\begin{theorem}
  Let $\{\theta_s^j\}_{j=1}^J$ be the solution to \eqref{equ:aldiode} with associated empirical measure $\mu_{\theta_s}$ and  let $\mu_s$ be the solution to \eqref{equ:second_order_mf_pde}. Then, there exists a constant $C_T$ such that
  \[ \mathbb{E} [\mathcal{W}_2 (\mu_{\theta_s}, \mu_s^N)] \leqslant
     C_{T} J^{- \frac{1}{2}}, \]
     for any $s \leq T$. Here, $\mu_s^N$ is the $N$-fold product measure of $\mu_s$ with itself.
  \label{thm:mfl_deterministic_aldi}
\end{theorem}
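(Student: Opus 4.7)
The plan is a synchronous coupling argument that reduces the claim to a Grönwall estimate on a $J$-averaged squared deviation. I would introduce the mean-field particle system $\{\eta_s^j\}_{j=1}^J$ from \eqref{equ:aldi mf particles} driven by the deterministic law $\mu_s$, and couple it to the $J$-particle system $\{\theta_s^j\}_{j=1}^J$ of \eqref{equ:aldiode} by choosing $\eta_0^j = \theta_0^j \sim \mu_0$ i.i.d. Because $\mu_s$ is deterministic, the $\eta_s^j$ remain independent with marginal law $\mu_s$, so the empirical measure $\mu_{\eta_s}$ is an i.i.d.~Monte Carlo proxy for $\mu_s$. By the coupling definition of $\mathcal{W}_2$,
\[
\mathbb{E}[\mathcal{W}_2(\mu_{\theta_s}, \mu_{\eta_s})^2] \;\le\; E_s \;:=\; \frac{1}{J}\sum_{j=1}^J \mathbb{E}\|\theta_s^j - \eta_s^j\|^2,
\]
so establishing $E_s \le C_T/J$ on $[0,T]$, combined with the standard empirical-to-mean-field rate for the i.i.d.~samples $\eta_s^j$, gives the stated $J^{-1/2}$ bound.

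The next step is to differentiate $\|\theta_s^j - \eta_s^j\|^2$ and split the drift difference into a spatial part $b(\mu_{\theta_s})(\theta_s^j) - b(\mu_{\theta_s})(\eta_s^j)$ and a measure part $b(\mu_{\theta_s})(\eta_s^j) - b(\mu_s)(\eta_s^j)$. The drift in \eqref{equ:aldiode} is affine in $\theta$ with coefficients depending on $\mu_{\theta_s}$ only through $m_{\theta_s}$, $P_{\theta_s}$, $\mu_{\theta_s}[y]$, and $\mu_{\theta_s}[R]$; hence the spatial part is bounded by a constant (controlled by $\|P_{\theta_s}\|$, $\|\Phi\|$, and $\|P_{\rm prior}^{-1}\|$) times $\|\theta_s^j - \eta_s^j\|$. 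The measure part is handled via the telescoping identity
\[
\mu_{\theta_s}[f] - \mu_s[f] = \bigl(\mu_{\theta_s}[f] - \mu_{\eta_s}[f]\bigr) + \bigl(\mu_{\eta_s}[f] - \mu_s[f]\bigr),
\]
applied separately to the functionals $y$ and $R$ and to the first two moments. The first summand is bounded by $\tfrac{1}{J}\sum_k \|\theta_s^k - \eta_s^k\|$ using the global Lipschitzness of $\sigma$ and $\sigma(1-\sigma)$, which squares to a term of order $E_s$. The second summand is a centred average of i.i.d.~bounded random variables, and has variance $O(1/J)$. Summing over $j$, taking expectations, and applying Cauchy--Schwarz produces a differential inequality $\dot{E}_s \le C(s)\bigl(E_s + J^{-1}\bigr)$, and Grönwall on $[0,T]$ yields $E_s \le C_T/J$.

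The principal technical obstacle is obtaining uniform spectral bounds $cI \preceq P_{\theta_s} \preceq CI$ for $s \in [0,T]$ and all sufficiently large $J$, since \eqref{equ:aldiode} contains $P_{\theta_s}$ both as a multiplicative prefactor and implicitly inside the competing terms $-\tfrac{1}{2}P_{\theta_s}P_{\rm prior}^{-1}\Theta_s$ and $+\tfrac{1}{2}\Theta_s$. For the mean-field covariance $P_s$ these bounds can be read directly off the ODE \eqref{equ:mflmeancovevolution} using the entrywise bound $R(\theta) \in [0,1/4]$. For the empirical $P_{\theta_s}$ I would run a stopping-time argument: define the exit time from a fixed neighbourhood of $P_s$, close the Grönwall estimate with uniform constants up to this stopping time, and then show that the exit time exceeds $T$ with probability tending to $1$ as $J \to \infty$, absorbing the exceptional event into $C_T$. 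Once these spectral bounds are in place, the remaining ingredients (Lipschitzness of the sigmoid, boundedness of $y$ and $R$, independence of the $\eta_s^k$ at each fixed $s$, and Grönwall) are standard. The analogous bound for the homotopy system \eqref{equ:homotopy_ode} follows by the same method with fewer terms to track, as noted in the text.
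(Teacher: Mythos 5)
Your coupling skeleton is exactly the paper's: synchronously couple \eqref{equ:aldiode} to the mean-field system \eqref{equ:aldi mf particles} via shared initial data, bound $\mathcal{W}_2$ by the diagonal coupling, and run Gr\"onwall on the averaged deviation. (One side remark: the theorem only requires comparison with the product/empirical structure of $\mu_s$; the extra ``empirical-to-mean-field rate'' from Fournier--Guillin that you invoke is not needed for the stated bound and would in any case degrade the rate in high dimension.)

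The genuine gap is in how you close the Gr\"onwall estimate. The quantities multiplying $\|\theta^j_s-\eta^j_s\|$ in the drift difference are not constants: they involve $\|P_{\theta_s}\|_{\rm op}$, $\operatorname{tr}(P_{\eta_s})^{1/2}$, and --- via Lemma \ref{lemma:covariance difference bound} --- a contribution to $\|P_s-P_{\theta_s}\|_{\rm op}$ that is itself proportional to $\Delta_s$ times a random trace factor. After Cauchy--Schwarz you are left with terms of the form $\mathbb{E}[\Delta_s^2\, G]$ where $G$ has all moments bounded but is not bounded almost surely, so you cannot extract $C\,\mathbb{E}[\Delta_s^2]$ and the inequality $\dot E_s\le C(s)(E_s+J^{-1})$ with deterministic $C(s)$ does not follow. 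Your proposed fix --- a stopping time at which $P_{\theta_s}$ exits a neighbourhood of $P_s$ --- is not fleshed out and is circular as stated: to absorb the exceptional event you need $\mathbb{P}(\tau\le T)\lesssim J^{-1}$ (since the a priori bound on $\mathbb{E}[\Delta_s^2]$ is only $O(1)$), and proving that $\sup_{s\le T}\|P_{\theta_s}-P_s\|$ is small with that probability is essentially the mean-field limit for the covariance, i.e.\ the statement you are trying to prove. (Also, only an upper bound on $P_{\theta_s}$ is needed; no inverse of $P_{\theta_s}$ appears in the drift, so the lower spectral bound $cI\preceq P_{\theta_s}$ you ask for is superfluous.) The paper resolves this differently: it first establishes, via a martingale/Burkholder--Davis--Gundy argument (Proposition \ref{prop:apriori_moment_bound} and Lemma \ref{lemma:high_moments_bound}), that \emph{all} $L^p$ norms of $\Delta_s$ and of the random prefactors are bounded uniformly in $J$ (with the fluctuating parts decaying like $J^{-1/4}$ or $J^{-1/2}$ in every $L^p$), and then uses the $\varepsilon$-H\"older splitting in \eqref{eq:eps_hoeldering} to obtain a self-improving inequality $\mathbb{E}[\Delta_s]\lesssim J^{-\min(1/2,\,1/4+\gamma(1-\varepsilon))}$, bootstrapping from $\gamma=0$ up to the rate $J^{-1/2}$ in finitely many steps. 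Without either that bootstrap or a correctly quantified stopping-time/continuation argument, your proof does not close.
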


\begin{proof}
  (Sketch)
  We introduce an artificial mean-field particle system $\eta^j$ as described
  in Section \ref{sec:mfl}. The precise mean-field ODEs can be
  found in \eqref{equ:aldi mf particles}. We couple the $\theta_s^j$ to the
  $\eta_s^j$ by choosing the same initial conditions, i.e., $\eta^j_0 =
  \theta^j_0$. Since the Wasserstein-distance is an infimum over all couplings, we can bound the Wasserstein distance of the empirical measures by evaluating the right hand side of \eqref{eq:wasserstein_definition} for this specific coupling:
  \begin{equation} \mathbb{E} [\mathcal{W}_2 (\mu_{\theta_s}, \mu_{\eta_s})] \leqslant
     \mathbb{E} \left[ \left( \frac{1}{J} \sum_{j = 1}^J | \theta^j_s -
     \eta^j_s |^2 \right)^{1 / 2} \right].
     \label{equ:w2_upper_bound_coupling}
  \end{equation}
  We fix a $T$ and assume that $s \leqslant T$. To bound \eqref{equ:w2_upper_bound_coupling}, we define 
  \[
    \Delta_s = \left( \frac{1}{J} \sum_{j =  1}^J | \theta^j_s - \eta^j_s |^2 \right)^{1 / 2}
  \]
  and upper bound its  growth. In Proposition \ref{prop:apriori_moment_bound} we will show that we can bound any moment of $\mathbb{E} [\Delta_s]$ independently of the ensemble size $J$, i.e.,
  \begin{equation}
    \mathbb{E}\left[|\Delta_s|^p\right]^{1/p} \lesssim C_p,
    \label{equ:holds_for_alpha_0}
  \end{equation}
  where $x \lesssim y$ symbolizes that the inequality $x \leqslant a y$ holds with a constant $a$ depending only on $T, m_0, P_0, \Phi$ and $d$. 
  
  We then use a bootstrapping technique inspired by \cite{ding2021ensemble}. The main idea is to show that, if
  \begin{equation}
    \mathbb{E}[\Delta_s] \lesssim J^{-\gamma}
    \label{equ:a_priori_assumption}
  \end{equation}
  for some $\gamma \ge 0$, then one can actually improve that $\gamma$ value to a better $\gamma'$, i.e., 
  \begin{equation*}
    \mathbb{E}[\Delta_s] \lesssim J^{-\gamma'}
  \end{equation*} 
  holds for some $\gamma' > \gamma$. Plugging $p=0$ into \eqref{equ:holds_for_alpha_0}, we see that \eqref{equ:a_priori_assumption} holds for $\gamma = 0$. We then iteratively improve $\gamma = 0$ to any $\gamma = \frac{1}{2} - \epsilon$.

  We now go into a bit more detail on how to improve the current estimate of $\gamma$. We denote by $H_{\alpha}$ a random variable, which can change from occurrence to occurrence, such that $\mathbb{E} [H_{\alpha}^p]^{1 / p} \leqslant C_p J^{- \alpha}$ for all $p \ge 0$, where $C_p$ only depends on $T, m_0, P_0, \Phi, d$ and $p$. Considering the ODE for $\Delta_s$ and bounding all terms of its right-hand side, we end up with
  \begin{equation}
  \begin{aligned}
    \frac{\mathd}{\mathd s} \mathbb{E} [\Delta_s] & \leqslant  C (\mathbb{E}
    [\Delta_s] +\mathbb{E} [\Delta_s H_{1 / 4}] +\mathbb{E} [H_{1 / 2}])\\
    & \leqslant  C (\mathbb{E} [\Delta_s] +\mathbb{E}
    [\Delta_s^{\varepsilon} \Delta_s^{1 - \varepsilon} H_{1 / 4}] +\mathbb{E}
    [H_{1 / 2}])\\
    & \leqslant  C (\mathbb{E} [\Delta_s] +\mathbb{E} [\Delta_s]^{1 -
    \varepsilon} \mathbb{E} [\Delta_s  (H_{1 / 4})^{1 /
    \varepsilon}]^{\varepsilon} + \mathbb{E}[H_{1 / 2}])\\
    & \leqslant  C (\mathbb{E} [\Delta_s] +\mathbb{E} [\Delta_s]^{1 -
    \varepsilon} \mathbb{E} [\Delta_s^2]^{\varepsilon / 2} \mathbb{E} [(H_{1 /
    4})^{2 / \varepsilon}]^{\varepsilon / 2} + \mathbb{E}[H_{1 / 2}])).
  \end{aligned}
  \label{eq:eps_hoeldering}
  \end{equation}
  Here we repeatedly used H\"older's inequality. Now, all moments of $H_\alpha$ can be bounded by $J^{-\alpha}$
  \begin{align*}
    \frac{\mathd}{\mathd s} \mathbb{E} [\Delta_s] & \lesssim  \mathbb{E}
    [\Delta_s] +\mathbb{E} [\Delta_s]^{1 - \varepsilon} J^{- 1 / 4} + J^{- 1 /
    2}\\
    & \leqslant  \mathbb{E} [\Delta_s] + J^{- \gamma (1 - \varepsilon) - 1 /
    4} + J^{- 1 / 2}.
  \end{align*}
  In the second inequality we used the a priori assumption that $\mathbb{E}[\Delta_s] \leqslant J^{-\gamma}$.
  We now apply Groenwall to obtain
  \[ \mathbb{E} [\Delta_s] \lesssim J^{- \gamma'}, \]
  where $\gamma' = \min \left( \frac{1}{2}, \frac{1}{4} + \gamma (1 -
  \varepsilon) \right)$. Plugging in $\gamma = 0$, we obtain a rate of $J^{- 1 / 4}$ by applying the above argument once. Iterating the argument, we can achieve any rate smaller than $\alpha = \frac{1}{2}$. By finally applying the argument one more time, we can achieve $\alpha = \frac{1}{2}$.
  The full details of the proof can be found in Appendix \ref{sec:mfl_deterministic_aldi_proof}.
\end{proof}

\noindent
The following a priori bound is crucial for the proof of Theorem \ref{thm:mfl_deterministic_aldi}.
\begin{proposition}
  \label{prop:apriori_moment_bound} For $s \in [0, T]$,
  \[ \mathbb{E}\left[|\frac{1}{J} \sum | \theta^j_s - \eta^j_s |^2 |^p\right]^{1/p} \leqslant
     C_p, \]
  i.e., the $p$-norm can be bounded independently of $J$ for a fixed $T$. The constant $C_p$ only depends on $m_0, P_0, \Phi, d$ and $T$. 
\end{proposition}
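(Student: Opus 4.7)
The plan is to reduce the claim to separate uniform-in-$J$ moment bounds on the empirical second moments of the two systems. By the triangle inequality,
$$\frac{1}{J}\sum_{j=1}^J |\theta^j_s - \eta^j_s|^2 \leq 2\frac{1}{J}\sum_{j=1}^J|\theta^j_s|^2 + 2\frac{1}{J}\sum_{j=1}^J|\eta^j_s|^2,$$
and the identity $\frac{1}{J}\sum_j|\theta^j_s|^2 = \operatorname{tr}(P_{\theta_s}) + |m_{\theta_s}|^2$ reduces the particle side to controlling $\operatorname{tr}(P_{\theta_s})$ and $|m_{\theta_s}|$. For the mean-field side, exchangeability together with Jensen's inequality gives $\mathbb{E}[(\tfrac{1}{J}\sum_j |\eta^j_s|^2)^p] \leq \mathbb{E}[|\eta^1_s|^{2p}]$ for $p \geq 1$ (the range $p<1$ then follows by monotonicity of $L^p$-norms), so it suffices to bound a single mean-field particle.

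The key step is an almost-sure bound on $\operatorname{tr}(P_{\theta_s})$, made possible by the logistic observables being uniformly bounded: $y_n(\theta) \in [0,1]$ and $r_{nn}(\theta) = y_n(1-y_n) \in [0,1/4]$, hence $\mu_{\theta_s}[y]$ and $\mu_{\theta_s}[R]$ are deterministically bounded. Differentiating $P_{\theta_s}$ via \eqref{eqn:aldi_dev_ode} and taking the trace gives
$$\frac{\mathd}{\mathd s}\operatorname{tr}(P_{\theta_s}) = -\operatorname{tr}(P_{\theta_s}\Phi\mu_{\theta_s}[R]\Phi^{\mathrm T} P_{\theta_s}) - \operatorname{tr}(P_{\theta_s}P_{\rm prior}^{-1}P_{\theta_s}) + \operatorname{tr}(P_{\theta_s}).$$
Discarding the first (non-positive) term and using $\operatorname{tr}(P P_{\rm prior}^{-1}P) = \operatorname{tr}(P_{\rm prior}^{-1}P^2) \geq (c/D)\operatorname{tr}(P)^2$ with $c = \lambda_{\min}(P_{\rm prior}^{-1}) > 0$ produces the scalar inequality $\dot x_s \leq x_s - (c/D)x_s^2$, whose comparison solution stays bounded for all $s \geq 0$. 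This yields the path-by-path bound $\operatorname{tr}(P_{\theta_s}) \leq M := \max(\operatorname{tr}(P_{\theta_0}), D/c)$, and since $\operatorname{tr}(P_{\theta_0})$ is an empirical average of squared norms of $J$ i.i.d.\ Gaussians, $\mathbb{E}[M^q]$ is bounded uniformly in $J$ for every $q < \infty$.

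I expect the bound on $|m_{\theta_s}|$ to be the main technical difficulty. A naive Gronwall applied to \eqref{eqn:aldi_mean_ode} produces $|m_{\theta_s}| \lesssim (1 + |m_{\theta_0}|)\exp(CMT)$, and the Gaussian-squared tails of $M$ do not give finite exponential moments. The fix is to change variables to $u_s = P_{\rm prior}^{-1/2}(m_{\theta_s} - m_{\rm prior})$, which recasts the prior-restoring drift as $-B_s u_s$ with $B_s = P_{\rm prior}^{-1/2}P_{\theta_s}P_{\rm prior}^{-1/2} \succeq 0$. The favourable quadratic contribution $-u_s^{\mathrm T} B_s u_s \leq 0$ is then discarded, leaving only the inhomogeneous term and yielding $\frac{\mathd}{\mathd s}|u_s| \leq C \operatorname{tr}(P_{\theta_s}) \leq CM$. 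Integration gives $|u_s| \leq |u_0| + CMT$, so that $|m_{\theta_s}|$ is bounded by a first-degree polynomial in $M$ and $|m_{\theta_0}|$, both of which have uniformly bounded moments.

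The mean-field particles are handled by applying the identical covariance/mean analysis to the deterministic mean-field ODEs \eqref{equ:mflmeancovevolution}, producing deterministic bounds on $|m_s|$ and $\|P_s\|$ on $[0,T]$. Since the mean-field PDE preserves Gaussianity, $\eta^j_s \sim \mathcal{N}(m_s, P_s)$ and $\mathbb{E}[|\eta^1_s|^{2p}]$ is a standard bounded Gaussian moment. Combining the three estimates via the reduction in the first paragraph and taking $p$-th roots delivers the claimed uniform bound $C_p$, which depends only on $m_0$, $P_0$, $\Phi$, $d$, $T$ and $p$.
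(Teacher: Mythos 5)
Your proposal is correct, and it reaches the a priori bound by a genuinely different route on the key technical step. The paper begins with the same triangle-inequality reduction to $\frac{1}{J}\sum_j|\theta^j_s|^2$ and $\frac{1}{J}\sum_j|\eta^j_s|^2$ and handles the mean-field side essentially as you do, but on the particle side it discards \emph{all} dissipative terms: it bounds $\|P_{\theta_s}\|_{\mathrm{op}}$ by a constant multiple of $\|P_{\theta_0}\|_{\mathrm{op}}$ via Gr\"onwall and then controls $|\theta^j_s|^{2p}\leqslant |\theta^j_0|^{2p}\exp\bigl(s\,\mathrm{tr}(P_{\theta_0})\bigr)$, so the crux becomes the \emph{exponential} moment $\mathbb{E}[\exp(\alpha\,\mathrm{tr}(P_{\theta_0}))]$, which is tamed only because the $1/J$ averaging turns the $\chi^2$ tails into the uniformly bounded product $(1-2\alpha/J)^{-J/2}$. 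You instead retain the Riccati damping $-P_{\theta_s}P_{\mathrm{prior}}^{-1}P_{\theta_s}$ in the trace evolution derived from \eqref{eqn:aldi_dev_ode}, obtaining the pathwise, time-uniform bound $\mathrm{tr}(P_{\theta_s})\leqslant\max(\mathrm{tr}(P_{\theta_0}),D/c)$ from a logistic differential inequality, and you symmetrize \eqref{eqn:aldi_mean_ode} with $u_s=P_{\mathrm{prior}}^{-1/2}(m_{\theta_s}-m_{\mathrm{prior}})$ so that only the bounded inhomogeneity survives; consequently only polynomial moments of the initial empirical trace are ever needed. What your version buys is robustness: it is valid for every ensemble size (the paper's factor $(1-2sq'\sqrt{v}/J)^{-1/2}$ degenerates once $2sq'\sqrt{v}\geqslant J$) and yields constants that do not grow doubly exponentially in $T$; the price is that you quantitatively use the strict positive definiteness of $P_{\mathrm{prior}}^{-1}$, which the paper's argument only uses qualitatively. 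Both arguments establish the stated proposition.
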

The proof relies on the fact that $\sum_{j=1}^J |\theta_0^j - \eta_0^j|$ is a martingale and uses martingale inequalities. It can be found in Section \ref{sec:apriori_moment_bound}.

\section{Algorithmic implementation}\label{sec:time_stepping}

 A typical way of time-stepping the interacting particle systems presented in this paper is the forward-Euler method. However, due to its restricted domain of stability, using this method can lead to restrictions on the step-size $\Delta s$. In this section we describe tamed discetizations for the homotopy based moment matching formulation \eqref{equ:homotopy_ode} and the deterministic second-order dynamical sampler \eqref{equ:aldiode}. 
 We introduce a step-size $\Delta s \geq 0$ and discrete times $s_k = k \Delta s$. Further, we use the shorthand $\theta_{s_k} \approx \theta_k, m_{\theta_{s_{k}}} \approx m_{k}, P_{\theta_{s_{k}}} \approx P_{k}, \Theta_{s_{k}} \approx \Theta_{k}, \mu_{\theta_{s_{k}}} \approx \mu_{k}$ in the forthcoming subsections.

\subsection{Homotopy using moment matching}
We employ modifications to the time stepping for moment-matching method \eqref{equ:homotopy_ode} by using the following tamed discretizations
\begin{equation}\label{eqn:hom_theta_stepping}
    \theta_{k+1}^j = \theta_k^j - \frac{\Delta s}{2} P_k \Phi \left( M_k\Phi^{\rm T} (\theta_k^j -m_k) +  2 (\mu_{k}[y] - d) \right)
\end{equation}
where
\begin{equation*}
    P_k = \frac{1}{J} \Theta_k \Theta_k^{\rm T}
\end{equation*}
and
\begin{equation}\label{eqn:K}
M_{k} = \left(\Delta s \Phi^{\rm T} P_{k} \Phi +  \mu_{k} [R]\right)^{-1}
\end{equation}
for $j = 1, \ldots, J$.
As discussed in Section \ref{sec:homotopy_som}, we propagate $\theta_k^j$ forward by evolving the associated empirical mean and ensemble deviations using \eqref{eqn:hom_mean_ode}-\eqref{eqn:hom_dev_ode}. The resulting time-stepping of \eqref{eqn:hom_mean_ode}-\eqref{eqn:hom_dev_ode} is of the form
\begin{equation}\label{eqn:sec_mean_stepping}
    m_{k+1} = m_{k} - \Delta s P_{k} \Phi \left(\mu_{k}
			[y] - d \right)
\end{equation}
and 
\begin{equation}\label{eqn:sec_dev_stepping}
 \Theta_{k+1} = \Theta_{k} - \frac{\Delta s}{2}  P_{k} \Phi M_{k}\Phi^{\rm T} \Theta_{k} 
\end{equation} 
for the ensemble mean and ensemble deviations, respectively.

\begin{remark}
    Inverting the $N\times $N matrix (\ref{eqn:K}) can prove prohibitive for large data sets. However, since $R(\theta)$ is diagonal, taking the full inverse in (\ref{eqn:K}) can be replaced by inverting the diagonal entries of $\Delta s \Phi^{\rm T} P_{k} \Phi +  \mu_{k} [R]$ only, as proposed in \cite{https://doi.org/10.1002/qj.2186}. This inexpensive approximation still provides improved stability compared to an explicit Euler discretization of \eqref{eqn:hom_dev_ode}.
\end{remark}

\noindent
We provide pseudo-code summarizing the second-order moment matching method described in Algorithm \ref{table:algorithm_homotopy}.\\

    \begin{table}[H]
    \centering
    \refstepcounter{table}\label{table:algorithm_homotopy}
    \begin{tabular} {p{\linewidth}}
     \hline 
     \textbf{Algorithm 1: Homotopy using moment matching method for Bayesian inference} \\ [0.5ex] 
     \specialrule{1.5pt}{0pt}{0pt}
     \vspace{0.1ex}
     \textbf{Inputs}: Data set $\{(\phi^n, d^n)\}_{n = 1}^{N}$; feature map $\Phi = \{\phi^n\}_{n = 1}^{N}$; initial ensemble $\{\theta_ {0}^{j}\}_{j = 1}^{J}$ drawn from a Gaussian distribution; step-size $\Delta s$ and $K$ such that $\Delta s K=1$.\\[1ex]
    
    \textbf{for $k=0$ to $K-1$ do:} \\[0.2pt]
    \hfill\begin{minipage}{\dimexpr\textwidth-1mm}

        \begin{enumerate}[i]
            \item Evaluate ensemble mean $m_k$ \eqref{eqn:emp_mean}, ensemble deviations $\Theta_k$ \eqref{eqn:ens_dev}, covariance matrix $P_k$ \eqref{eqn:emp_cov}, $y$ \eqref{eqn:y}, $\mu_{k}
			[y]$, $\mu_{k}[R]$ and $M_k$ \eqref{eqn:K}.
            \item Evolve $m_k$ and $\Theta_k$ using \eqref{eqn:sec_mean_stepping}-\eqref{eqn:sec_dev_stepping}.
            \item Determine $\{\theta_k^j\}_{j=1}^J$ from $m_{k+1}$ and
            $\Theta_{k+1}$.\\
        \end{enumerate}
        
    \xdef\tpd{\the\prevdepth}
    \end{minipage}
        
    \textbf{end for}\\
    \textbf{Output:} Final ensemble 
    $\{\theta^{j}_K \}_{j=1}^J$.\\[1ex]
    \hline
    \end{tabular}
    \end{table}

\subsection{Deterministic second-order dynamical sampler}
We employ the idea of Trotter splitting to solve (\ref{equ:aldiode})
numerically. The required splitting is provided by
the evolution equation (\ref{equ:homotopy_ode}), already encountered in the homotopy approach, and the remainder
\begin{equation}\label{eqn:aldiode_split}
\frac{\mathd \theta_s^j}{\mathd s}  = -\frac{1}{2}P_{\theta_s} P_{\rm{prior}}^{-1}(\theta^j_s + m_{\theta_s} - 2 m_{\rm{prior}})
                                         + \frac{1}{2}(\theta^j_s -m_{\theta_s}). 
\end{equation}
Therefore, for every time step $k$, given $\{\theta_{k}^j\}_{j=1}^J$, we first compute $\theta_{k+1/2}^j$ using the second order moment matching method \eqref{eqn:hom_theta_stepping} rewritten as
\begin{equation*}\label{eqn:som_stepping}
    \theta_{k+1/2}^j = \theta_k^j - \frac{\Delta s}{2} P_k \Phi \left( M_k \Phi^{\rm T} (\theta_k^j -m_k) + 2( \mu_{k}[y] - d )\right)
\end{equation*}
for $j = 1,2,...,J$. Equivalently, we can obtain $\theta_{k+1/2}^j$ by evaluating $m_{k+1/2}$ and $\Theta_{k+1/2}$ as stated in \eqref{eqn:sec_mean_stepping}-\eqref{eqn:sec_dev_stepping} with subscript $k+1$ replaced by $k+1/2$. In the second half step, we approximate (\ref{eqn:aldiode_split}) using the following scheme:
\begin{equation}\label{eqn:aldi_stepping}
\begin{split}
    \theta_{k+1}^{j} & = \theta_{k+1/2}^{j} -\frac{\Delta s}{2}  P_{k+1/2} \left(\Delta s P_{k+1/2} + P_{\rm{prior}} \right)^{-1} (\theta_{k+1/2}^{j} +  m_{k+1/2} -2 m_{\rm{prior}})  \\
    & \quad +  \frac{\Delta s}{2}(\theta^j_{k+1/2} -m_{k+1/2}).
    \end{split}
\end{equation}
Again, if $P_{\rm prior}$ is diagonal, full matrix inversion in (\ref{eqn:aldi_stepping}) can be replaced by inverting the diagonal only.

\noindent
We provide pseudo-code describing the algorithm for deterministic second-order dynamical sampler in Algorithm \ref{table:algorithm_aldi}.\\

\begin{table}[H]
\centering
\refstepcounter{table}\label{table:algorithm_aldi}
\begin{tabular} {p{\linewidth}}
 \hline 
 \textbf{Algorithm 2: Deterministic second-order dynamical sampler for Bayesian inference} \\ [0.5ex] 
 \specialrule{1.5pt}{0pt}{0pt}
 \vspace{0.1ex}
\textbf{Inputs}: Data set $\{(\phi^n, d^n)\}_{n = 1}^{N}$; 
 feature map $\Phi = \{\phi^n\}_{n = 1}^{N}$;
 initial ensemble $\{\theta_{0}^{j}\}_{j = 1}^{J}$ drawn from a Gaussian distribution; step-size $\Delta s$; threshold value $\epsilon > 0$.\\[1ex]

\textbf{Initialize}: $k = 0$. \\

\textbf{while} $ \frac{\| P_{k+1} - P_k \|_2}{\| P_k \|_2} \geq \epsilon$ \textbf{do:} \\[0.2pt]
\hfill\begin{minipage}{\dimexpr\textwidth-1mm}
    \begin{enumerate}[i]
        \item Numerically solve ODE \eqref{equ:aldiode} by Trotter splitting:
            \begin{enumerate}
            \item Evaluate ensemble mean $m_k$ \eqref{eqn:emp_mean}, ensemble deviations $\Theta_k$ \eqref{eqn:ens_dev}, covariance matrix $P_k$ \eqref{eqn:emp_cov}, $y$ \eqref{eqn:y}, $\mu_{k}[y]$, $\mu_{k}[R]$ and $M_k$ \eqref{eqn:K}.
                \item Determine $\{\theta^j_{k+1/2}\}_{j=1}^J$ by evolving $m_k$ and $\Theta_k$ using the time-stepping  \eqref{eqn:sec_mean_stepping}-\eqref{eqn:sec_dev_stepping}.
                \item Evaluate covariance matrix $P_{k+1/2}$ \eqref{eqn:emp_cov}.
                \item Determine $\{\theta^j_{k+1}\}_{j=1}^J$ using the time-stepping (\ref{eqn:aldi_stepping}).
            \end{enumerate}
        \item Update the ensemble $\{\theta_{k}^j\}_{j=1}^J \rightarrow \{\theta_{k+1}^j\}_{j=1}^J$.
    \end{enumerate}
    Increment $k$.
    \vspace{0.6em}
\end{minipage}
\textbf{end while}\\
\textbf{Output:} Final ensemble $\{\theta^{j}_k \}_{j=1}^J$.\\[1ex]
\hline
\end{tabular}
\end{table}

\section{Numerical example}
In this section, we describe an experiment designed to evaluate the performance of the homotopy using moment matching method and deterministic second-order dynamical sampler for estimating model parameters in logistic regression. We provide a numerical example on a toy problem that demonstrates the applicability of the proposed methods to logistic regression with known true parameter value. We generate a reference parameter vector $\theta_{\text{ref}}$ of dimension $D=20$ by sampling from a standard normal distribution. Next, we generate $N=300$ data points $x_n \in \mathbb{R}^{\rm D}$, also sampled from a standard normal distribution. We consider a simple feature map $\Phi(x) = x$. For each data point $x_n$, we compute the probability $y_{x_n}(\theta_{\text{ref}})$ as defined in \eqref{eqn:prob}. All points are assigned labels $d_n = 1$ with probability $y_{x_n}(\theta_{\text{ref}})$ and $d_n = 0$ otherwise.

First, we test the proposed algorithms using a non-informative Gaussian prior with $m_{\rm{prior}} = 0$ and $P_{\rm{prior}} = I$. Additionally, we implement the inference methods using a non-diagonal prior covariance matrix $P_{\rm{prior}}$, defined as a random symmetric positive-definite matrix, generated using the scikit-learn library. This ensures that the prior is well-defined and captures potential correlations between parameters.
In both cases, we vary the ensemble size $J$ between $J=10$ and $J=100$. 

We report the $l_2$-difference between the true parameter vector $\theta_{\text{ref}}$ and the posterior ensemble mean, averaged over 100 repeated experiments and its standard deviation for Algorithms 1 and 2 in Tables \ref{tab:results_known_params_id} \& \ref{tab:results_known_params_non}. Results corresponding to $P_{\rm{prior}} = I$ are displayed in Table \ref{tab:results_known_params_id} and for non-diagonal $P_{\rm{prior}}$ in Table \ref{tab:results_known_params_non}.


\begin{table}
    \centering
    \resizebox{\textwidth}{!}{%
    \begin{tabular}{c|c|c|c|c}
        \hline
        \textbf{Method / $J$} & \textbf{10} & \textbf{20} & \textbf{50} & \textbf{100}  \\
        \specialrule{1.5pt}{0pt}{0pt}
        \makecell{Homotopy using \\ moment matching method} & $1.518 \pm 0.017$ & $1.283 \pm 0.009$ & $0.814 \pm 0.002$ & $0.484 \pm 0.001$ \\
        \makecell{Deterministic second-order \\ dynamical sampler} & $0.895 \pm 0.004$ & $0.502 \pm 0.007$ & $0.422 \pm 0.003$ & $0.282 \pm 0.002$ \\
        \hline
    \end{tabular}
    }
    \caption{$l_2$-difference between true parameter values and the posterior ensemble mean with $P_{\rm{prior}} = I$, averaged over 100 experimental runs, and its standard deviation as a function of ensemble size $J$.}
    \label{tab:results_known_params_id}
\end{table}

\begin{table}
    \centering
    \resizebox{\textwidth}{!}{%
    \begin{tabular}{c|c|c|c|c}
        \hline
        \textbf{Method / $J$} & \textbf{10} & \textbf{20} & \textbf{50} & \textbf{100}  \\
        \specialrule{1.5pt}{0pt}{0pt}
        \makecell{Homotopy using \\ moment matching method} & $1.519 \pm 0.019$ & $1.438 \pm 0.008$ & $0.857 \pm 0.003$ & $0.495 \pm 0.002$ \\
        \makecell{Deterministic second-order \\ dynamical sampler} & $0.887 \pm 0.012$ & $0.504 \pm 0.005$ & $0.461 \pm 0.005$ & $0.287 \pm 0.003$ \\
        \hline
    \end{tabular}
    }
    \caption{$l_2$-difference between true parameter values and the posterior ensemble mean with non-diagonal $P_{\rm{prior}}$, averaged over 100 experimental runs, and its standard deviation as a function of ensemble size $J$.}
    \label{tab:results_known_params_non}
\end{table}
The low mean $l_2$-difference and its standard deviation indicate that the ensemble mean provides a good approximation of the true parameter vector $\theta_{\text{ref}}$. We also conclude from results in Tables \ref{tab:results_known_params_id} \& \ref{tab:results_known_params_non} that both the proposed methods are able to qualitatively reproduce the true parameter value. It is observed that the deterministic second-order dynamical sampler behaves rather well over the entire range of ensemble sizes.

%
%

\section{Application to Bayesian logistic regression in neural networks}\label{sec:llba}

Neural networks using ReLU activation functions are possibly the most widely used neural network architectures for classification. However, it has been proven that these networks exhibit arbitrarily high confidence far away from the training data when fitted by minimizing the negative log-likelihood $\psi$ or, equivalently, by approximating the maximum likelihood estimator (MLE), denoted by $\tilde{\theta}_{\rm MLE}$, as demonstrated in \cites{Hein_2019_CVPR, pmlr-v70-guo17a}. Thus, this architecture along with a MLE training scheme is not robust and does not provide any measure of uncertainty in the model's predictions. 

One way of obtaining predictive uncertainty is to place distributions over the weights of a neural network, which leads to Bayesian neural networks. The idea of replacing the MLE $\tilde{\theta}_{\rm MLE}$ by a posterior measure $\tilde{\pi}_{\rm post}$ over parameters $\tilde{\theta}$ therefore enables us to make better informed predictions and to know when our model predictions are not to be trusted. 
To understand how uncertainty might be expressed in this setting,  
we put a prior $\tilde{\pi}_{\rm prior}$ on the parameters $\tilde{\theta}$ which can be thought of as incorporating some prior knowledge and then refining it based on data to learn a posterior distribution. That is, after observing $\mathcal{D}$, we can get the posterior measure through Bayes formula \eqref{eqn:sampling}.

This Bayesian approach to neural networks introduces a certain degree of computational complexity as we now need to sample from $\tilde{\pi}_{\rm post}$ instead of minimizing $\Psi$ which can also be computationally expensive. Computational approximations to $\tilde{\pi}_{\rm post}$ have been enabled by advances in Markov chain Monte Carlo (MCMC) methods (see \cite{neal2011mcmc}). However, even today's most sophisticated MCMC methods are rendered impractical for deep neural network architectures. Therefore, to make the Bayesian approach tractable, we focus on a last layer Bayesian approximation that places a prior distribution only on the output layer's parameters. Thus, we decompose an $l$-layered ReLU network $G_{\tilde{\theta}}:\mathbb R^{n} \rightarrow \mathbb R$ into a feature map $\phi_{\hat{\theta}} : \mathbb R^{n} \rightarrow \mathbb R^{d}$ consisting of the first $l-1$ layers of the ReLU network and the output layer, i.e.,
$$
G_{\tilde{\theta}}(x) = \sigma (\langle \theta,\phi_{\hat{\theta}}(x)\rangle)
$$
with $\tilde{\theta} = (\hat{\theta}^{\rm T},\theta^{\rm T})^{\rm T}$.
One now first trains the complete network using a (regularised) MLE approach, which provides $\tilde{\theta}_{\rm MLE}$ and the associated trained feature map 
\begin{equation} \label{eqn:feature_map}
\phi(x) := \phi_{\hat{\theta}_{\rm MLE}}(x).
\end{equation}
Furthermore, upon defining the input features $\phi^n = \phi (x^n)$, $n=1,\ldots,N$, over the data set $\{(x^n,d^n)\}_{n=1}^D$, this architecture is now equivalent to a Bayesian logistic regression problem in the output layer parameters $\theta$ as discussed in detail in Section \ref{sec:logistic_regression}.

In this paper, we analyze the performance of ReLU networks in the case of binary classification. While the work in \cite{Kristiadi2020BeingBE} focuses on Laplace approximations for Bayesian inference, we employ algorithms for Bayesian logistic regression based on the methods proposed in Section \ref{sec:methods}. We demonstrate experimentally that our methods in conjunction with pre-trained deterministic ReLU networks provide desirable uncertainty estimates. However, it should be noted that the methods proposed in this paper are not limited to a ‘last layer' use, but can be easily extended to multiple layers or the entire network.

We use a 3-layer ReLU network with 20 hidden units at each layer and $D = 50$ units at the output layer in the subsequent numerical experiments. The data set is constructed by generating a 2D binary classification data using scikit-learn. The MLE estimator $\tilde{\theta}_{\rm MLE}$ is obtained using the PyTorch library. Figure \ref{fig:data_bin} depicts the above generated data set.

\begin{figure}
    \centering
    \includegraphics[scale= 0.7]{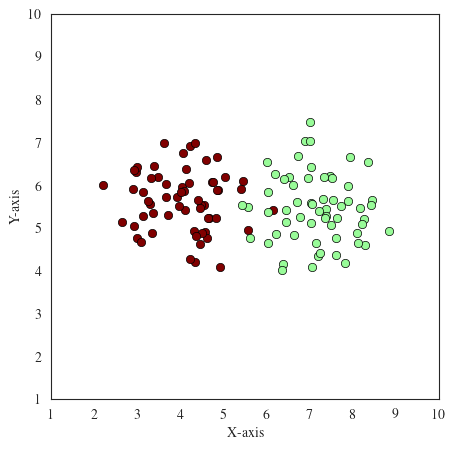}
    \caption{2D binary classification data set}
    \label{fig:data_bin}
\end{figure}

%
%

\section{Numerical experiments}\label{sec:results}

In this section, we consider a numerical experiment for binary classification problem as an illustration for uncertainty quantification in neural networks. We employ the already described 3-layer neural network architecture $G_{\tilde{\theta}} (x)$. The training data set $\{(x^n,d^n)\}_{n=1}^{N}$ consists of inputs and the associated labels.
We train the ReLU network using stochastic gradient descent (SGD) with 0.9 momentum, learning rate of $3 \times 10^{-4}$ and weight-decay for training over the 2D binary classification data set \ref{fig:data_bin}, using $N=30$ test points for toy binary classification problem. SGD minimizes the cross entropy loss quantity \eqref{eqn:cee} across the training data set, and we denote the computed minimizer by $\tilde{\theta}_{\rm MLE}$.

The computed set of parameters ($\hat{\theta}_{\rm MLE})$ is then used to provide the feature maps (\ref{eqn:feature_map}) which is used
for Bayesian logistic regression. The chosen prior is Gaussian  with mean $m_{\rm prior} = 0$ and covariance matrix $P_ {\rm prior}  = 2I$.
Using the ensemble of particles distributed according to the posterior $\tilde{\pi}_{\rm{post}}$, approximated with the homotopy based moment-matching method \eqref{eqn:sec_mean_stepping}-\eqref{eqn:sec_dev_stepping} and the second-order dynamical sampler \eqref{eqn:aldi_stepping}, the predictive distribution is estimated as
\begin{equation*}\label{eqn:pred_ensemble}
    \pi (d=1 |x, \mathcal{D} ) = \frac{1}{J}\sum_{j =1}^J \sigma(\langle\theta^j_{\ast}, \phi_{\hat{\theta}_{\rm{MLE}}}(x)\rangle ),
\end{equation*}
where $\{\theta^j_{\ast}\}_{j=1}^J$ is the final ensemble of particles obtained using Algorithm \ref{table:algorithm_homotopy} and Algorithm \ref{table:algorithm_aldi}. The associated Bayesian posterior distribution now translates uncertainty in weights to uncertainty in model predictions.


The results from Algorithms 1 \& 2 are compared to the following three alternative methods.

\textbf{Laplace approximations.}~We compare the uncertainty estimates provided by the proposed EnKF based methods for  Bayesian approximation over the output layer parameters to the last-layer Laplace approximation (LLLA) for inference as introduced in \cite{Kristiadi2020BeingBE}. In this case, we perform a Laplace approximation to get the posterior of the weights of the last layer, assuming the previous layers to be fixed at MLE estimates. So, for unknown output layer parameters $\theta$, we infer

\begin{equation}\label{eqn:posterior_laplace}
\pi (\theta | \mathcal{D} ) = \mathcal{N} (\theta | \theta_{\rm MLE}, H^{-1})
\end{equation}
where $H$ is the Hessian of the negative log-posterior with respect to $\theta$ at $\theta_{\rm MLE}$. The predictive distribution in the case of binary classification is thus given by
\begin{equation}\label{eqn:pred_dist}
    \pi (d=1 |x, \mathcal{D} ) = \int \sigma (\langle\theta, \phi(x)\rangle \pi(\theta | \mathcal{D}) \rm d\theta,
\end{equation}
where $\pi(\theta | \mathcal{D})$ is approximated with \eqref{eqn:posterior_laplace} and the integral \eqref{eqn:pred_dist} is computed using a probit approximation as described in  \cites{Kristiadi2020BeingBE, MacKay1992TheEF}.

\textbf{Ensemble learning.}~As a non-Bayesian method, we investigate the uncertainty estimates from ensemble learning (or deep ensembles), introduced in \cites{Lakshminarayanan2016SimpleAS, NEURIPS2019_8558cb40}. The technique uses an ensemble of deterministic networks, meaning that each network in the ensemble produces a point estimate rather than a distribution. We train an ensemble of $M = 5$ ReLU networks independently, using the entire training data set to train each network. Given an input point $x_n$, target label $d_n$ and cross-entropy loss $\Psi(\tilde{\theta}, x_n, d_n)$, we generate an adversarial sample 
$$
x_n^* = x_n + \xi \text{ sign}(\nabla_\theta \Psi(\tilde{\theta}, x_n, d_n)),
$$
where $ \xi \sim \mathcal{N}(0, 0.01)$. As described in \cite{Lakshminarayanan2016SimpleAS}, using adversarial samples for training by adding perturbations in the direction the network is likely to increase the loss provides a ‘better random' direction for smoothing predictive distributions.  For $M$ models with MLE estimated parameters $\{\tilde{\theta}_m\}_{m=1}^M$, we evaluate the ensemble predictions as 
\begin{equation*}
\pi(d=1 \mid x) = M^{-1} \sum_{m=1}^{M} \pi (d = 1 \mid x,\tilde{\theta}_m) 
= M^{-1} \sum_{m=1}^{M} G_{\tilde{\theta}_m}(x),
\end{equation*}
where $\{\tilde{\theta}_m\}_{m=1}^M$ denotes the parameters of the $m^{\rm{th}}$ model in the ensemble. For classification,  this corresponds to averaging the predicted probabilities.

\textbf{Hamiltonian Monte Carlo (HMC).}~A popular sampling-based approach for Bayesian inference in machine learning is the HMC \cite{neal2012bayesian}. We compare the uncertainty estimates obtained by the proposed EnKF based methods to HMC for inference over the output layer parameters.

We obtain results for the model's confidence in its prediction on and away from any input data $x$, where confidence is defined as the maximum predictive probability. In the case of a binary classification problem, confidence in prediction can be expressed as max$_{i \in \{0,1\}} \mathit \pi(d = i\mid x)$. Ideally, one would want the model to convey low confidence with such inputs. We next report on the results from these two experimental settings.

Results shown in Section \ref{sec:main_result} and Section \ref{sec:result_shift} corresponding to both methods introduced in this paper are reported using an ensemble size of $J=200$. Algorithm \ref{table:algorithm_homotopy} is implemented with step-size $\Delta s = 10^{-3}$ while Algorithm \ref{table:algorithm_aldi} is implemented with $\Delta s = 10^{-1}$ and $\Delta s K = 20$. For classification using HMC, we use 500 Monte Carlo samples across 7 independent chains for predictions.

\subsection{Predictive uncertainty in neural networks}\label{sec:main_result}

As shown in Figure \ref{fig:results_binary} and Figure \ref{fig:results_binary_zoom}, MLE predictions have very high confidence everywhere besides the region close to the decision boundary. Using deep ensembles of neural networks improves the accuracy of the prediction, which results in a better decision boundary and shows increased uncertainty estimates near the decision boundary. However, both MLE and ensemble learning predictions do not express uncertainty far away from the training data. On the other hand, last-layer Bayesian approximations, implemented with either Laplace, the homotopy moment matching method or second-order dynamical sampler, assign relatively high confidence close to the training data and are uncertain otherwise. In other words, the region of high confidence is identified much closer to the training data. All Bayesian approximations closely follow the decision boundary obtained using the MLE estimates and thus do not negatively impact the network's predictive accuracy. This is supported empirically by observing that all Bayesian methods maintain a classification accuracy of $96.67\%$ on test data, consistent with the MLE predictions. Thus, integrating Bayesian approximations into the final layer of an MLE-trained network does not compromise its original classification accuracy, a crucial aspect in neural network applications. Though similar uncertainty estimates can be seen for the second-order sampler and  HMC, the second-order sampler has a faster convergence towards equilibrium. Furthermore, the second-order dynamical sampler assigns higher confidence closer to training data than any other algorithm and allows one to use a larger step size $\Delta s$, and can therefore be recommended for further use. It can be noted that last-layer Laplace approximations also improve predictive uncertainty  but assigns lower confidence than our methods on and near the training data set.

\begin{figure}
  \centering
  \subfloat[MLE]{\includegraphics[width=0.42\textwidth]{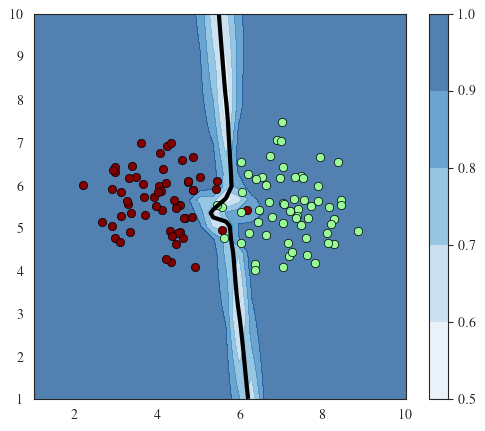}\label{fig:map}}
  \hfill
  \subfloat[Ensemble learning]{\includegraphics[width=0.42\textwidth]{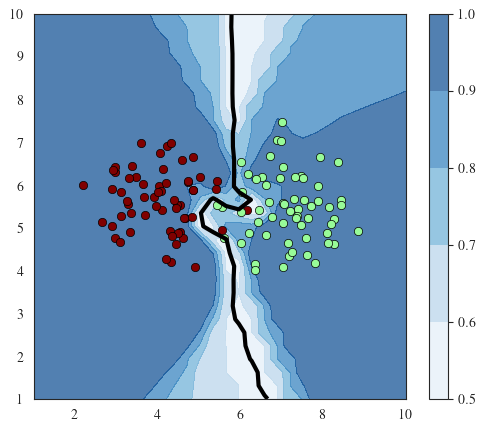}\label{fig:map_ens}}
  \hfill
  \subfloat[Last-layer Laplace]{\includegraphics[width=0.42\textwidth]{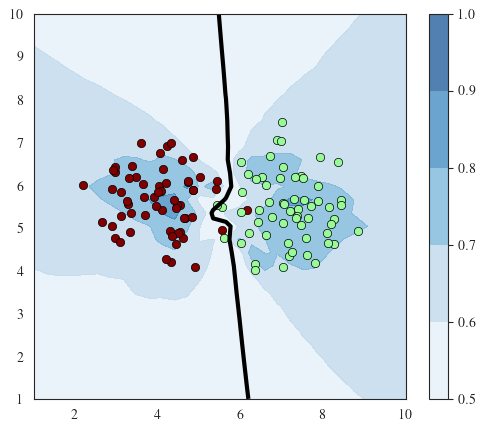}\label{fig:llla}}
  \hfill
  \subfloat[Last-layer HMC]{\includegraphics[width=0.42\textwidth]{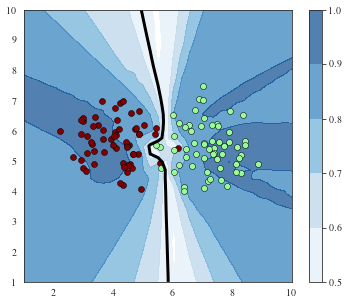}\label{fig:hmc}}
  \hfill
  \subfloat[Last-layer Moment matching method]{\includegraphics[width=0.42\textwidth]{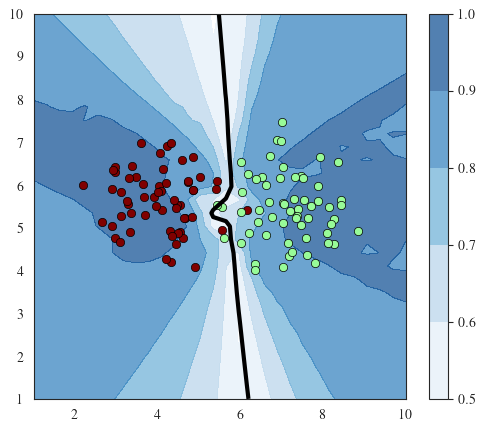}\label{fig:som_hom}}
  \hfill
  \subfloat[Last-layer deterministic second-order Dynamical Sampler]{\includegraphics[width=0.42\textwidth]{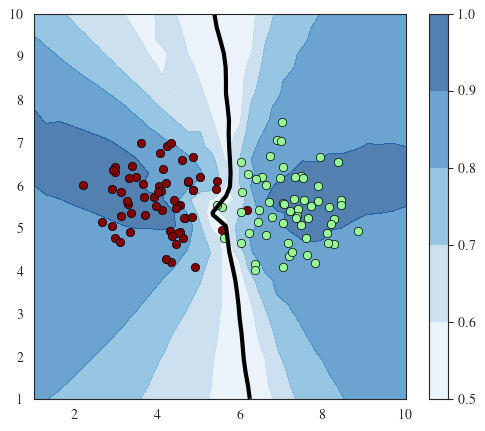}\label{fig:som_aldi}}
 \caption{Binary classification on a toy dataset using (a) MLE estimates, (b) ensemble of neural networks, last-layer Gaussian approximations over the weights obtained via (c) Laplace approximation, (d) Hamiltonian Monte Carlo  (e) moment matching method, (f) deterministic second-order dynamical sampler. Background colour depicts the confidence in classification while black line represents the decision boundary obtained for the toy classification problem.}
  \label{fig:results_binary}
\end{figure}

\begin{figure}
  \centering
  \subfloat[MLE]{\includegraphics[width=0.42\textwidth]{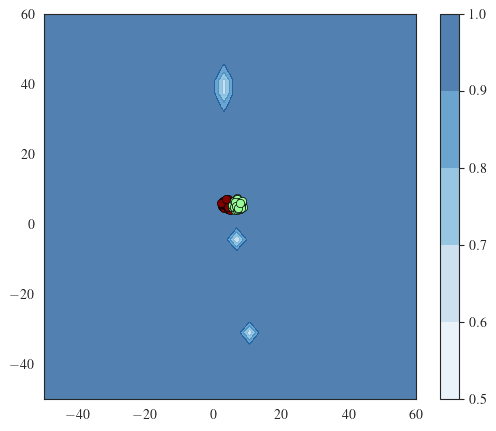}}
  \hfill
  \subfloat[Ensemble learning]{\includegraphics[width=0.42\textwidth]{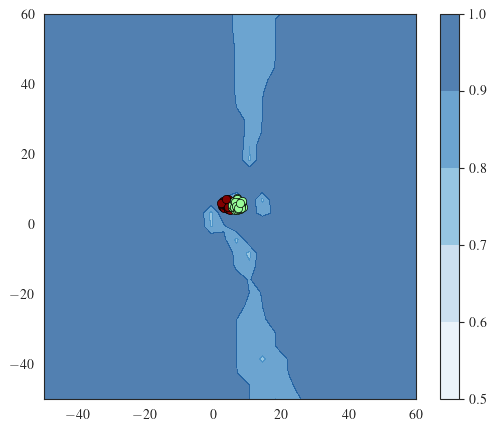}}
  \hfill
  \subfloat[Last-layer Laplace ]{\includegraphics[width=0.42\textwidth]{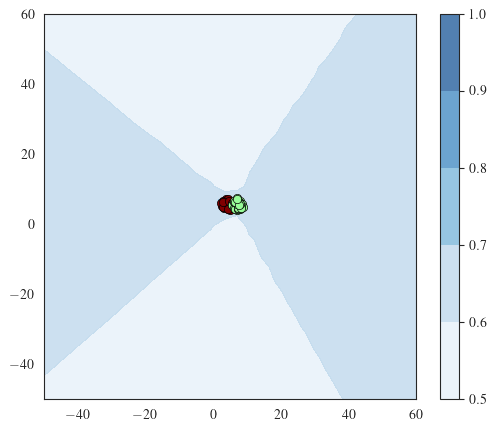}}
  \hfill
  \subfloat[Last-layer HMC]{\includegraphics[width=0.42\textwidth]{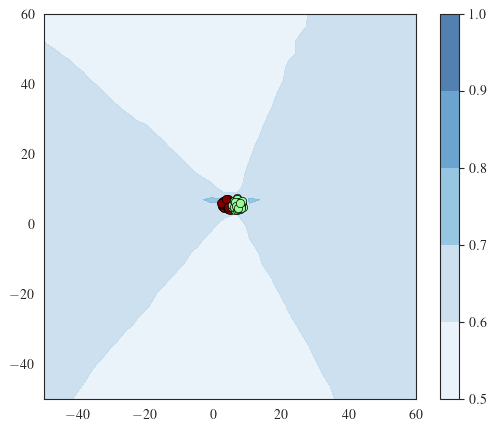}\label{fig:hmc_out}}
  \hfill
  \subfloat[Last-layer moment matching method]{\includegraphics[width=0.42\textwidth]{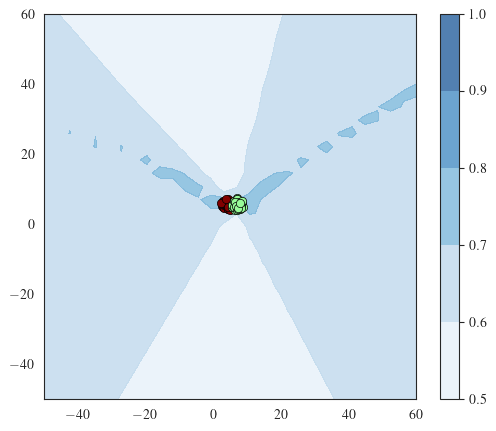}}
  \hfill
  \subfloat[Last-layer deterministic second-order dynamical sampler]{\includegraphics[width=0.42\textwidth]{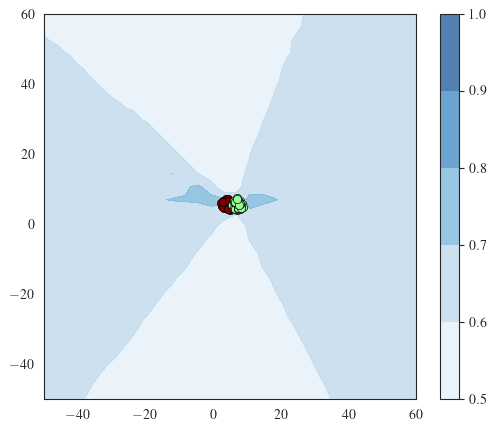}}
  \caption{Zoomed-out versions of the results in Figure 2 for binary classification on a toy data set using (a) MLE estimates, (b) ensemble of neural networks, last-layer Gaussian approximations over the weights obtained via (c) Laplace approximation, (d) Hamiltonian Monte Carlo  (e) moment matching method, (f) deterministic second-order dynamical sampler. Background colour depicts the confidence in classification.}
  \label{fig:results_binary_zoom}
\end{figure}

In Figure \ref{fig:results_binary_zoom}, we show a zoomed out version of the results in Figure \ref{fig:results_binary} to capture confidence levels significantly away from training data. It can be seen that the MLE estimate and ensemble learning demonstrate high confidence in the entire domain. The Bayesian approximations, even when applied to just the last-layer of a neural network, give desirable uncertainty estimates.

\subsection{Uncertainty on out-of-distribution data}\label{sec:result_shift}

In this experiment, we generate test samples on a large grid, such that the test points deviate significantly from the training samples. For each point in the test data set, we evaluate its Euclidean distance $\delta > 0$ from the nearest training point. We then evaluate the model's confidence in classifying these out-of-distribution (OOD) samples as a function of $\delta$. The results can be found as in Figure \ref{fig:results_conf}. It can be seen that as distance from the training set increases, the MLE technique is extremely overconfident in its predictions everywhere. MLE trained network always yields arbitrarily ‘overconfident' predictions away from training data and thus, is not robust. Using an ensemble of neural networks also does not improve uncertainty estimates and has little affect on the confidence on OOD data. However, last layer-Bayesian approximations assign lower confidence for OOD test data. As the distance from the training set increases, the model is less confident in its prediction and the level of confidence converges to a constant. Furthermore, our approaches assign maximum confidence (higher than Laplace approximations) when $\delta = 0$, i.e., the in-distribution data. 

\newpage
\begin{figure}
  \centering
  \subfloat[MLE]{\includegraphics[width=0.49\textwidth]{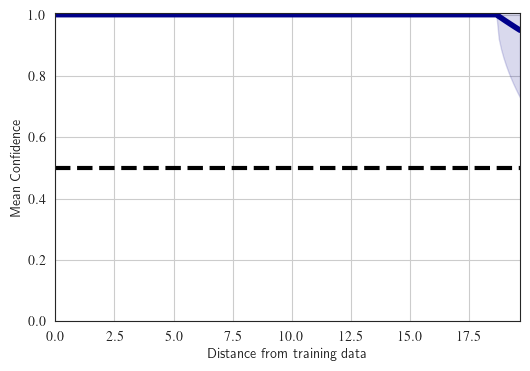}}
  \hfill
  \subfloat[Ensemble learning]{\includegraphics[width=0.49\textwidth]{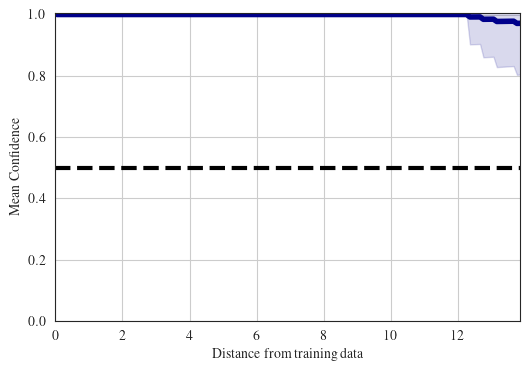}}
  \hfill
  \subfloat[Last-layer Laplace]{\includegraphics[width=0.49\textwidth]{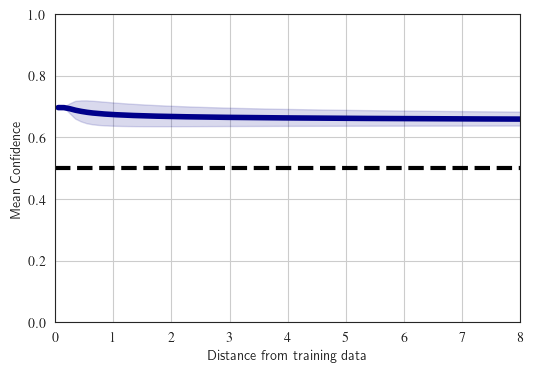}}
  \hfill
  \subfloat[Last-layer HMC]{\includegraphics[width=0.49\textwidth]{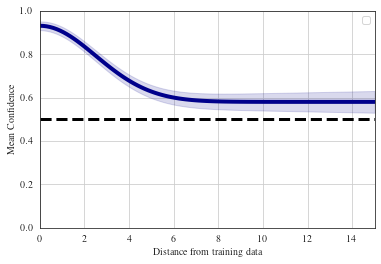}}
  \hfill
  \subfloat[Last-layer moment matching method]{\includegraphics[width=0.49\textwidth]{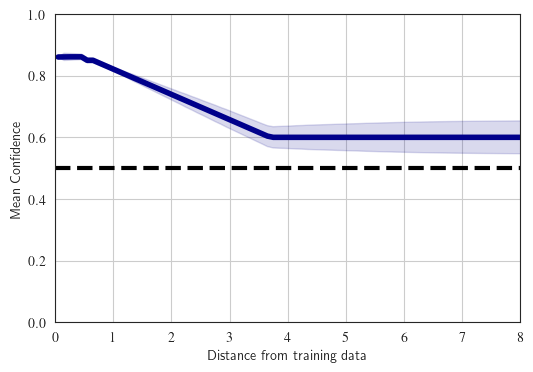}}
  \hfill
  \subfloat[Last-layer deterministic second-order dynamical sampler]{\includegraphics[width=0.49\textwidth]{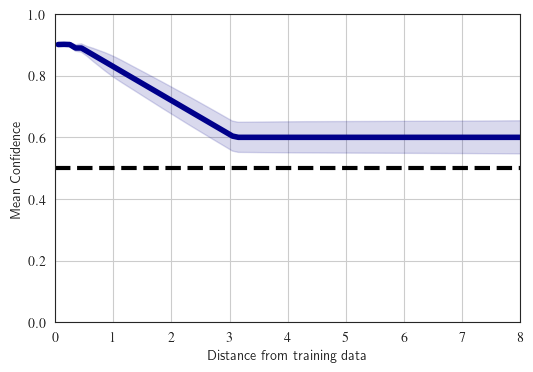}}
  
  \caption{Confidence of MLE, ensembles of neural networks, last-layer Laplace approximation, HMC, moment matching method, and deterministic second-order dynamical sampler as functions of $\delta$ over the test set. Thick blue lines and shades correspond to means and ± standard deviations, respectively. Dashed black lines signify the desirable confidence for $\delta$ sufficiently high. }
  \label{fig:results_conf}
\end{figure}

\subsection{Effect of varying ensemble sizes on predictive uncertainty}\label{sec:result_vary_ensemble}

We also analyze the effect of varying ensemble sizes on inference of last-layer network parameters.

\begin{figure}
    \centering
    
    \begin{subfigure}{0.19\linewidth}
        \includegraphics[width=\linewidth]{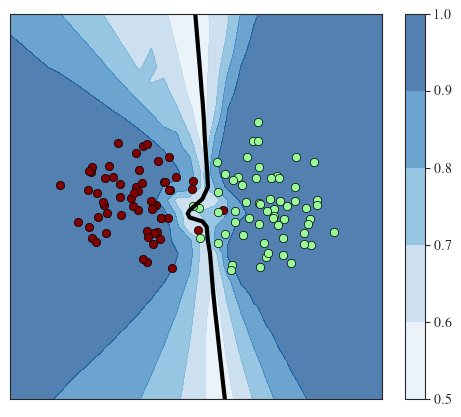}
        \caption{$J=30$}
    \end{subfigure}
    \begin{subfigure}{0.19\linewidth}
        \includegraphics[width=\linewidth]{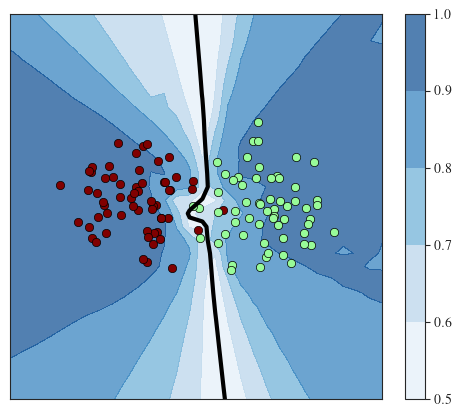}
        \caption{$J=50$}
    \end{subfigure}
    \begin{subfigure}{0.19\linewidth}
        \includegraphics[width=\linewidth]{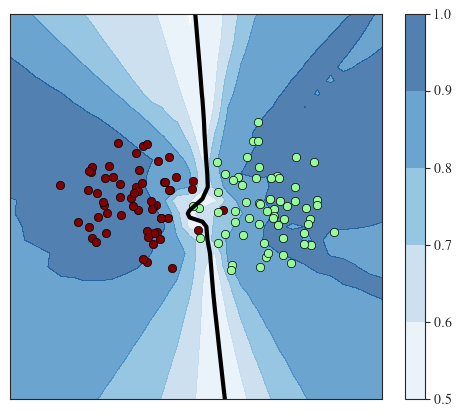}
        \caption{$J=100$}
    \end{subfigure}
        \begin{subfigure}{0.19\linewidth}
        \includegraphics[width=\linewidth]{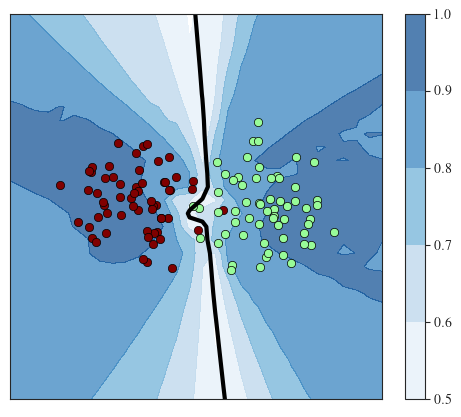}
        \caption{$J=200$}
    \end{subfigure}
    \begin{subfigure}{0.19\linewidth}
        \includegraphics[width=\linewidth]{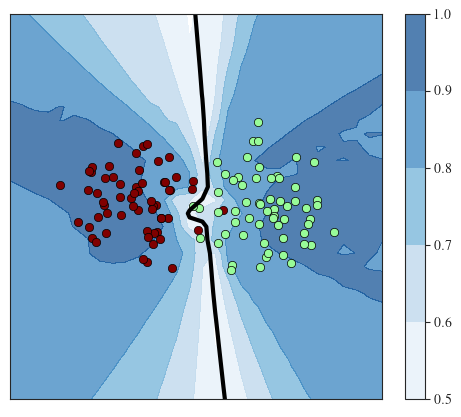}
        \caption{$J=300$}
    \end{subfigure}
    \captionsetup{justification=centering}
    \caption*{(i) Last-layer moment matching method}
    \label{fig:llsom}
    \medskip

\begin{subfigure}{0.19\linewidth}
        \includegraphics[width=\linewidth]{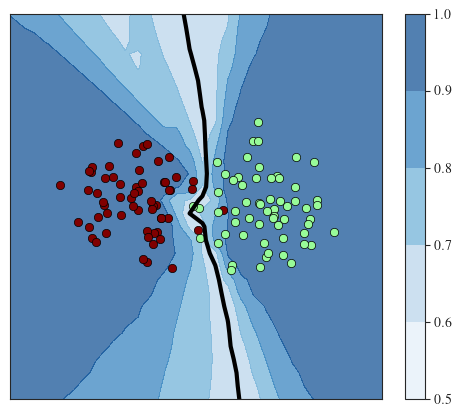}
        \caption{$J=30$}
    \end{subfigure}
    \begin{subfigure}{0.19\linewidth}
        \includegraphics[width=\linewidth]{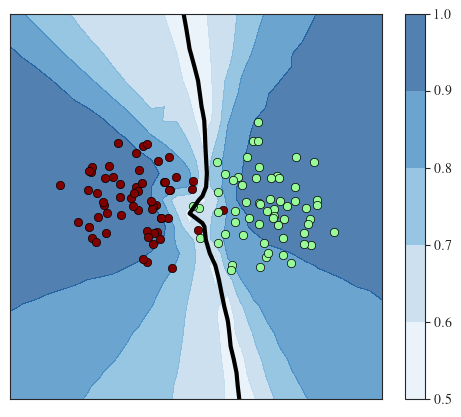}
        \caption{$J=50$}
    \end{subfigure}
    \begin{subfigure}{0.19\linewidth}
        \includegraphics[width=\linewidth]{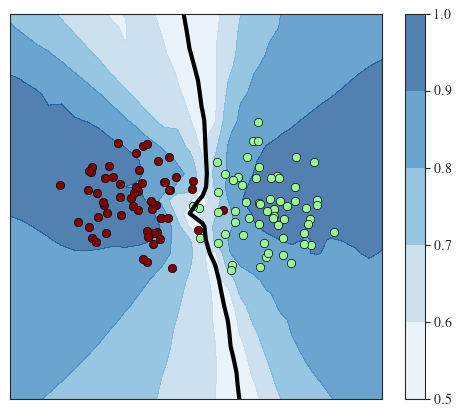}
        \caption{$J=100$}
    \end{subfigure}
    \begin{subfigure}{0.19\linewidth}
        \includegraphics[width=\linewidth]{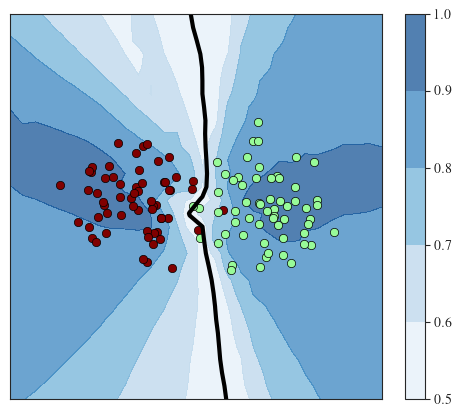}
        \caption{$J=200$}
    \end{subfigure}
    \begin{subfigure}{0.19\linewidth}
        \includegraphics[width=\linewidth]{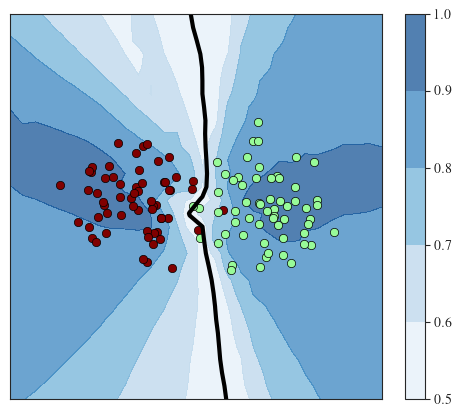}
        \caption{$J=300$}
    \end{subfigure}
    \captionsetup{justification=centering}
    \caption*{(ii) Last-layer deterministic second-order dynamical sampler}
    \label{fig:llmfds_det}
    \captionsetup{justification=raggedright}
    \caption{Effect of varying ensemble sizes $(J)$ on confidence in prediction for binary classification using proposed ensemble sampling methods for Bayesian inference over the network's output (last) layer.}
    \label{fig:results_ens}
\end{figure}

It can be observed in Figure \ref{fig:results_ens} that as ensemble size increases, the region of high confidence is identified a lot closer to the training data, while closely following the decision boundary  obtained by the MLE estimator. For parameters $D = 50$, as ensemble size $J \rightarrow \infty$ (here $J = 300$), we observe that the interacting particle systems converge to their mean-field limit. 
It can also be concluded that a large ensemble size $(J > D)$ provides better uncertainty estimates than small ensemble sizes $(J \leq D)$. However, using ensemble size  $(J < D)$ still results in better uncertainty estimates near the decision boundary than those obtained by MLE estimates.

\subsection{Extension to multiclass Bayesian logistic regression}\label{sec:multiclass}
In this section, we demonstrate that the EnKF-based methods of Bayesian approximation yield a similar behavior in multi-class settings. In the case of a problem arising from classification into $K$ classes, confidence in prediction can be expressed as max$_{k \in \{1, \dots, K\}} \mathit \pi(d = k\mid x)$. The posterior probabilities for $K$ classes $C_k$ is given by
\begin{equation}\label{eqn:prob_multi}
\mathbb{P}_\theta[\phi \in C_k] =\frac{e^{z_k}}{\sum_{j=1}^{K} e^{z_j}},
\end{equation}
where $z_j:=\theta_k^{\rm T}\phi.$
The predictive distribution in the case of multi-class classification is thus given by
\begin{equation}\label{eqn:pred_dist_multi}
    \pi (d=k |x, \mathcal{D} ) = \int \mathbb{P}_\theta[\phi \in C_k] \pi(\theta | \mathcal{D}) \rm d\theta.
\end{equation}
We obtain results for the model's confidence in its prediction on and away from any input data $x$, where confidence is defined as the maximum predictive probability in Figure \ref{fig:results_multi}. The data set is constructed by generating a 2D multi-class classification data using scikit-learn. We use a 3-layer ReLU network with 50 hidden units at each layer and $D = 50$ units at the output layer in the subsequent numerical experiment. The MLE estimator $\tilde{\theta}_{\rm MLE}$ is obtained using the PyTorch library. As shown in Figure \ref{fig:results_multi}, last-layer Bayesian inference, implemented with HMC, homotopy using moment matching method and second-order dynamical sampler, assign relatively high confidence close to the in-distribution data more than any other method and are uncertain otherwise. All Bayesian methods have a classification accuracy of $99\%$, same as the MLE predictions.
\begin{figure}
  \centering
  \subfloat[MLE]{\includegraphics[width=0.44\textwidth]{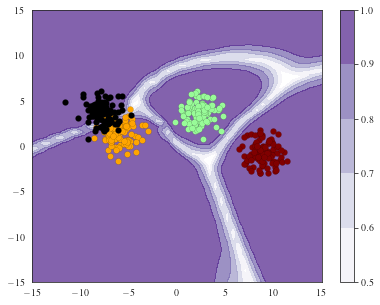}\label{fig:multi_map}}
  \hfill
  \subfloat[Ensemble learning]{\includegraphics[width=0.44\textwidth]{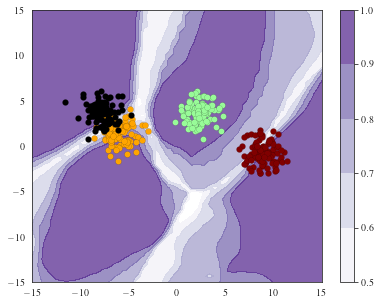}\label{fig:multi_map_ens}}
  \hfill
  \subfloat[Last-layer Laplace]{\includegraphics[width=0.44\textwidth]{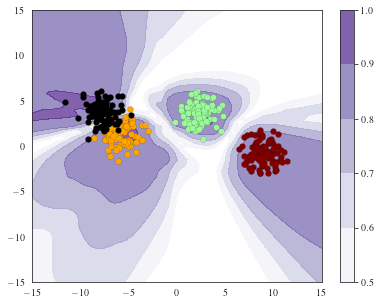}\label{fig:multi_llla}}
  \hfill
  \subfloat[Last-layer HMC]{\includegraphics[width=0.44\textwidth]{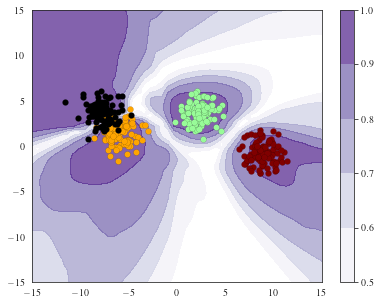}\label{fig:multi_hmc}}
  \hfill
  \subfloat[Last-layer moment matching method]{\includegraphics[width=0.44\textwidth]{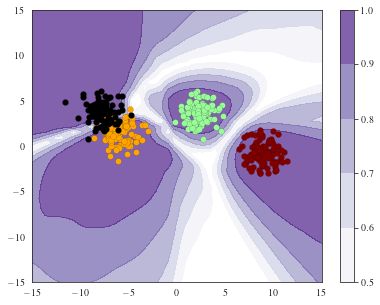}\label{fig:multi_som_hom}}
  \hfill
  \subfloat[Last-layer deterministic second-order dynamical sampler]{\includegraphics[width=0.44\textwidth]{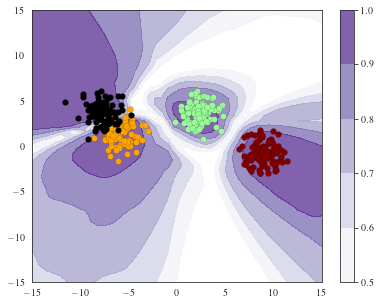}\label{fig:multi_som_aldi}}
 \caption{Multi-class classification on a toy dataset using (a) MLE estimates, (b) ensemble of neural networks, last-layer Gaussian approximations over the weights obtained via (c) Laplace approximation, (d) Hamiltonian Monte Carlo  (e) moment matching method, and (f) deterministic second-order dynamical sampler. Background colour depicts the confidence in classification obtained for the toy classification problem.}
  \label{fig:results_multi}
\end{figure}

\subsection{Quantifying uncertainty in deep learning-based image classification}\label{sec:deep_uq}
We perform binary classification on the CIFAR-10 dataset by exclusively utilizing the dog and cat classes among the ten available classes. A regularized ResNet10 deep neural network is trained for classification on this subset, termed the ‘dog-cat' dataset. A sample image is  labeled 0 if it belongs to the dog class and 1 if it is a cat. Consequently, the remaining classes within CIFAR-10 serve as the out-of-distribution (OOD) dataset. The model has $D=256$ parameters in the output layer. In Figure \ref{fig:results_cifar_binary}, we assess and report the model's confidence in prediction obtained using MLE training, Algorithms 1 \& 2, and the alternative approaches described in Section \ref{sec:llba} on both the in-distribution and OOD datasets.

Results shown in Figure \ref{fig:results_cifar_binary} corresponding to both methods introduced in this paper are reported using an ensemble size of $J=600$. Algorithm \ref{table:algorithm_homotopy} is implemented with step-size $\Delta s = 10^{-4}$ while Algorithm \ref{table:algorithm_aldi} is implemented with $\Delta s = 10^{-2}$ and $\Delta s K = 23$. For classification using HMC, we use 5000 Monte Carlo samples across 15 independent chains for predictions.

It was observed that all Bayesian methods achieved a classification accuracy of $98.01\%$, matching the performance of the MLE prediction. As illustrated in Figure \ref{fig:results_cifar_binary}, the MLE-trained model exhibited overconfident predictions on out-of-distribution (OOD) datasets. In contrast, Bayesian approximations applied to the final layer exhibited lower confidence levels for OOD data predictions. Previous results indicated similar uncertainty estimates for Bayesian inference using the second-order sampler, moment matching method, and Hamiltonian Monte Carlo (HMC). Therefore, we present the results for the second-order sampler exclusively in Figure \ref{fig:results_cifar_binary}. The deterministic second-order dynamical sampler assigned high confidence to in-distribution data and converged more rapidly than other algorithms. This faster convergence makes it a more computationally feasible option compared to HMC.

\begin{figure}
\begin{center}
  \subfloat[MLE]
  {\includegraphics[width=0.86\textwidth]{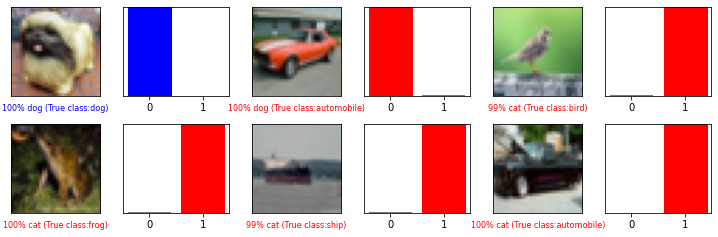}\label{fig:mle_binary}}
  \vspace{2mm}
  
  \subfloat[Ensemble learning]{\includegraphics[width=0.89\textwidth]{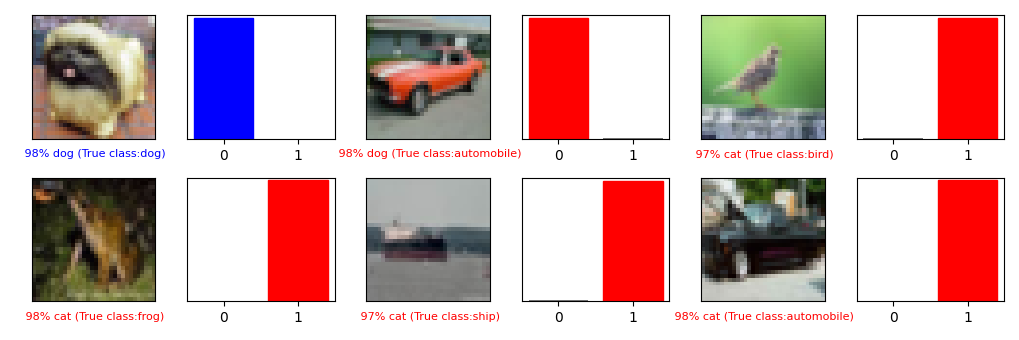}\label{fig:ens_map_binary}}
  \vspace{2mm}
  
  \subfloat[Last-layer Laplace]{\includegraphics[width=0.89\textwidth]{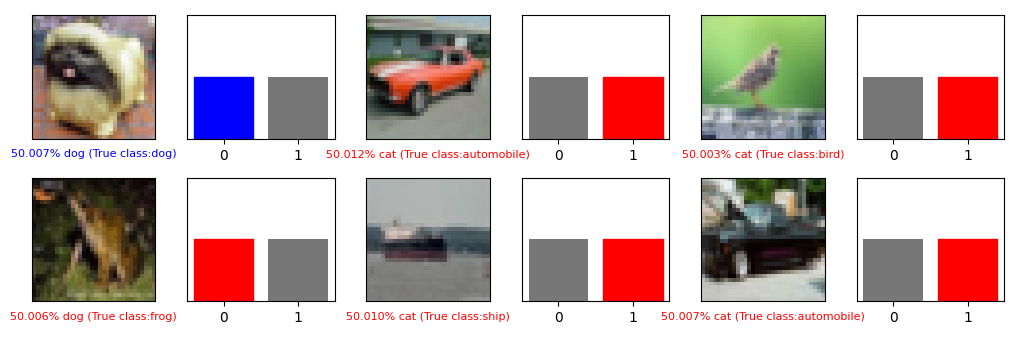}\label{fig:laplace_binary}}
  \vspace{2mm}
  
  \subfloat[Last-layer deterministic second-order dynamical sampler]{\includegraphics[width=0.87\textwidth]{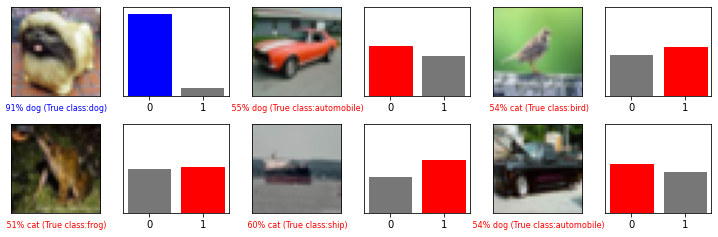}\label{fig:sampler_binary}}
 \caption{Binary classification on CIFAR10 dataset. Blue coloured barplots represent maximum probability of the test image belonging to one of the two classes on in-distribution test data and the red coloured barplots represent maximum probability for OOD test image.}
  \label{fig:results_cifar_binary}
\end{center}
\end{figure}

\newpage	
\section{Conclusions}
In this paper, we have presented two extensions of EnKF and related interacting particle systems to Bayesian logistic regression. We have proven quantitative convergence rates for these systems to their mean field limits as the number of particles tends to infinity. We have employed both methods for Bayesian inference in neural networks with  cross-entropy loss function. The numerical results confirm the effectiveness of the proposed methods for quantifying uncertainty. They have also shown that these uncertainty estimates make neural networks more robust with respect to distributional shifts.

\appendix

\section{Preliminary lemmas}

In this section, we prove some preliminary lemmas that we will need later on.

\begin{lemma}
  \label{lemma:high_moments_bound}Assume that $\xi_i$ are centred i.i.d. random
  variables that have moments of all orders. Then
  \begin{equation*}
   \left\| \sum_{j = 1}^J \xi_i \right\|_p \leqslant C_p  \| \xi_i \|_{2 p}
     J^{1 / 2} . 
  \end{equation*}
  In particular, it follows that
    \begin{equation*}
   \left\| \frac{1}{J} \sum_{j = 1}^J \xi_i \right\|_p \leqslant C_p \|
     \xi_i \|_{2 p} J^{- 1 / 2} 
    \end{equation*}
\end{lemma}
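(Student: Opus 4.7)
The plan is to use the Marcinkiewicz--Zygmund inequality together with Minkowski's inequality. The appearance of $\|\xi_i\|_{2p}$ on the right-hand side, rather than the sharper $\|\xi_i\|_p$, is actually a weakening of the optimal bound; this slack is convenient because it makes the final step a trivial application of Jensen's inequality on the probability space.

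First I would handle the case $p \geq 2$, which is the substantive one. The Marcinkiewicz--Zygmund inequality applied to the i.i.d.~centered sum $S_J = \sum_{j=1}^J \xi_j$ gives
\[
\left\| \sum_{j=1}^J \xi_j \right\|_p \leq C_p \left\| \Bigl(\sum_{j=1}^J \xi_j^2\Bigr)^{1/2} \right\|_p = C_p \left\| \sum_{j=1}^J \xi_j^2 \right\|_{p/2}^{1/2}.
\]
Since $p/2 \geq 1$, Minkowski's inequality yields $\|\sum_j \xi_j^2\|_{p/2} \leq J \|\xi_1^2\|_{p/2} = J \|\xi_1\|_p^2$, so combining the two estimates gives $\|\sum_j \xi_j\|_p \leq C_p J^{1/2} \|\xi_1\|_p \leq C_p J^{1/2} \|\xi_1\|_{2p}$. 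For $1 \leq p < 2$, the bound is immediate from the monotonicity $\|\cdot\|_p \leq \|\cdot\|_2$ together with the $L^2$ orthogonality of independent centered variables, namely $\|S_J\|_2 = J^{1/2}\|\xi_1\|_2$. Dividing by $J$ then yields the second displayed inequality of the lemma.

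A self-contained alternative, if one prefers not to invoke Marcinkiewicz--Zygmund, is a direct combinatorial expansion of $\mathbb{E}\bigl[(\sum_j \xi_j)^{2p}\bigr]$ for integer $p$. One expands via the multinomial theorem, observes that any term in which some index appears with multiplicity one vanishes by independence and centering, and counts the remaining multi-indices: these can have at most $p$ distinct entries, so there are at most $C_p J^p$ of them. Each surviving expectation is bounded by $\|\xi_1\|_{2p}^{2p}$ via H\"older's inequality applied to the factorization into marginals. Taking $2p$-th roots recovers the stated bound for integer $p$, and the monotonicity of $L^q$ norms in $q$ extends it to arbitrary real $p$ at the cost of enlarging the constant.

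There is no real obstacle: both formulations are standard textbook arguments for i.i.d.~sums, and the slackness of $\|\xi_i\|_{2p}$ (as opposed to $\|\xi_i\|_p$) on the right removes any need to track optimal constants in Marcinkiewicz--Zygmund. The only mild bookkeeping concerns non-integer $p$, handled transparently by the monotonicity of $L^q$ norms on a probability space.
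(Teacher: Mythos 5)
Your proof is correct and is essentially the same argument as the paper's: the Marcinkiewicz--Zygmund inequality you invoke is exactly the i.i.d.\ specialization of the Burkholder--Davis--Gundy inequality that the paper applies to the martingale $M_J=\sum_{j=1}^J\xi_j$, followed in both cases by a triangle inequality on $\sum_j\xi_j^2$. The only (cosmetic) difference is that you apply Minkowski directly in $L^{p/2}$ and obtain the marginally sharper bound $C_pJ^{1/2}\|\xi_1\|_p$, whereas the paper first passes from $\|\cdot\|_{p/2}$ to $\|\cdot\|_p$ via Jensen and lands on $\|\xi_1\|_{2p}$; both imply the stated estimate.
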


\begin{proof}
  We have the martingale
  \[ M_J = \sum_{j = 1}^J \xi_i . \]
  The statement of the lemma is now to bound the $L^p$ norms of $M_J$. To that end,
  \begin{align*} 
    \mathbb{E} [| M_J |^p]^{1 / p} 
    &\leqslant C_p \mathbb{E} [[M_J]^{p / 2}]^{1 / p} 
     \leqslant C_p \left( \mathbb{E} \left[ \left| \sum_{j = 1}^J
     \xi_i^2 \right|^p \right]^{1 / p} \right)^{1 / 2} \\
     &\leqslant C_p (J\mathbb{E} [\xi_i^{2 p}]^{1 / p})^{1 / 2} \leqslant C_p J^{1 / 2},
  \end{align*}
  where we applied the Burkholder-Davis-Gundy inequality, Jensen's inequality
  and the triangle inequality in that order. 
\end{proof}

Note that a similar result to Lemma \ref{lemma:high_moments_bound} has also been proven in \cite{ding2021ensemble}. However, by
employing martingale techniques, our proof is considerably shorter.
\begin{lemma}
		Let $f$ be a Lipschitz continuous function and $S$ be
		a linear subspace of $\mathbb{R}^D$. Then, for any $R > 0$, the map $(m, P)
		\rightarrow \mathbb{E}_{\mathcal{N} (m, P)} [f]$ is locally Lipschitz on
		\begin{equation}
            A = \{ (m, P) \in \mathbb{R}^D \times \tmop{PSD} (\mathbb{R}^D) | \ker
		(P) = S \}.
        \label{equ:local_lipschitz_A}
        \end{equation}
    \label{lemma:exp_local_lipschitz_N}
	\end{lemma}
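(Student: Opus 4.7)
The plan is to exploit the reparametrization $X = m + P^{1/2} Z$ with $Z \sim \mathcal{N}(0, I_D)$ and $P^{1/2}$ the unique positive semi-definite square root of $P$, which makes sense even for singular $P$ and gives $\mathbb{E}_{\mathcal{N}(m,P)}[f] = \mathbb{E}[f(m + P^{1/2} Z)]$. For two pairs $(m_i, P_i) \in A$, Lipschitzness of $f$ with constant $L_f$ then yields
\[
  \bigl|\mathbb{E}_{\mathcal{N}(m_1, P_1)}[f] - \mathbb{E}_{\mathcal{N}(m_2, P_2)}[f]\bigr| \leq L_f\bigl(|m_1 - m_2| + \mathbb{E}[|Z|]\,\|P_1^{1/2} - P_2^{1/2}\|_{\mathrm{op}}\bigr),
\]
so the task reduces to showing that $P \mapsto P^{1/2}$ is locally Lipschitz on $\{P \in \mathrm{PSD}(\mathbb{R}^D) : \ker(P) = S\}$.

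This square-root estimate is the main obstacle, because on the full PSD cone the map $P \mapsto P^{1/2}$ is only $\tfrac{1}{2}$-H\"older continuous at singular matrices; the fixed-kernel (hence fixed-rank) constraint defining $A$ is what rescues local Lipschitzness. In the orthogonal decomposition $\mathbb{R}^D = S^\perp \oplus S$, every $P \in A$ has block form $\mathrm{diag}(\tilde{P}, 0)$ with $\tilde{P}$ strictly positive definite on $S^\perp$, so $P^{1/2} = \mathrm{diag}(\tilde{P}^{1/2}, 0)$, $\|P_1 - P_2\|_{\mathrm{op}} = \|\tilde{P}_1 - \tilde{P}_2\|_{\mathrm{op}}$, and $\|P_1^{1/2} - P_2^{1/2}\|_{\mathrm{op}} = \|\tilde{P}_1^{1/2} - \tilde{P}_2^{1/2}\|_{\mathrm{op}}$. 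By continuity of eigenvalues, any sufficiently small neighborhood $U \subset A$ of a fixed $(m_0, P_0) \in A$ admits a uniform lower bound $\tilde{P} \succeq \lambda I$ on $S^\perp$ for some $\lambda > 0$.

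The last step quantifies Lipschitzness of the square root on that uniformly positive definite cone: writing $Q_i := \tilde{P}_i^{1/2}$ and $Y := Q_1 - Q_2$, the identity $Q_1^2 - Q_2^2 = Q_1 Y + Y Q_2$ is a Lyapunov equation whose solution has the integral representation $Y = \int_0^\infty e^{-s Q_1}(\tilde{P}_1 - \tilde{P}_2) e^{-s Q_2}\, \mathd s$. Taking operator norms and using $Q_1, Q_2 \succeq \sqrt{\lambda}\, I$ yields $\|Y\|_{\mathrm{op}} \leq (2\sqrt{\lambda})^{-1}\|\tilde{P}_1 - \tilde{P}_2\|_{\mathrm{op}}$, and plugging this back into the first display produces a local Lipschitz constant for $(m, P) \mapsto \mathbb{E}_{\mathcal{N}(m,P)}[f]$ on $U$.
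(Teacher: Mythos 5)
Your proof is correct and follows essentially the same route as the paper: reparametrize via $m + P^{1/2}Z$ and reduce everything to local Lipschitzness of $P \mapsto P^{1/2}$ on the fixed-kernel stratum, where the restriction $\tilde{P}$ of $P$ to $S^{\perp}$ is uniformly positive definite on a neighborhood. The only difference is that you derive the bound $\|\tilde{P}_1^{1/2} - \tilde{P}_2^{1/2}\|_{\mathrm{op}} \leq (2\sqrt{\lambda})^{-1}\|\tilde{P}_1 - \tilde{P}_2\|_{\mathrm{op}}$ yourself via the Sylvester-equation integral representation, whereas the paper simply cites the corresponding square-root perturbation theorem (Theorem 6.2 of \emph{Functions of Matrices: Theory and Computation}), so your argument is self-contained but not substantively different.
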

	
	\begin{proof}
		For $(m, P) \in A$ we define the restriction of $P$ to $S^{\perp}$ as
		$\tilde{P}$. The map $\tilde{P}$ is invertible and therefore its smallest
		eigenvalue is greater than zero.
		
		Now let $(m_i, P_i) \in A, i = 1, 2$. Then there exists an $R$ such that
		$(m_i, P_i) \in A_R$, $i = 1, 2$, where
		\[ A_R = \left\{ (m, P) | \lambda_{\min} (\tilde{P} ) \geqslant
		\frac{1}{R}, \lambda_{\max} (P) \leqslant R, | m | \leqslant R \right\} .
		\]
		The importance of the lower bound on the eigenvalues is that it implies that
		$\| P_1^{1 / 2} - P_2^{1 / 2} \| = \| \tilde{P}_1^{1 / 2} - \tilde{P}^{1 /
			2}_2 \| \leqslant R \| \tilde{P}_1 - \tilde{P}_2 \| \leqslant R \| P_1 - P_2
		\|$, see Theorem 6.2. of ``Functions of Matrices, Theory and Computation''.
		Using that we see
		\begin{align*}
			&   | \mathbb{E}_{\mathcal{N} (m_1, P_1)} [f] -\mathbb{E}_{\mathcal{N}
				(m_2, P_2)} [f] |\\
			& =  \int | f (P_1^{1 / 2} (x + m_1)) - f (P_2^{1 / 2} (x + m_2)) |
			\mathcal{N} (x ; 0, I) \mathd x\\
			& \leqslant  f_L \| P_1^{1 / 2} - P_2^{1 / 2} \|  \int | x | \mathcal{N}
			(x ; 0, I) \mathd x + f_L | P_1^{1 / 2} m_1 - P_2^{1 / 2} m_2 |\\
			& \leqslant  Cf_L \| P_1^{1 / 2} - P_2^{1 / 2} \| + f_L  \| P_1^{1 / 2}
			- P_2^{1 / 2} \|  | m_1 | + f_L  \| P_2^{1 / 2} \|  | m_1 - m_2 |
		\end{align*}
		which shows that the difference is locally Lipschitz on $A_R$.
	\end{proof}

\begin{lemma}
  \label{lemma:covariance difference bound}Let $P_{\mu}$ and $P_{\nu}$ be the
  covariance matrices of two probability measures $\mu$ and $\nu$. Then
  \begin{align*}
    \| P_{\mu} - P_{\nu} \|_{\tmop{op}} & \leqslant  (\tmop{tr} (P_{\mu})^{1
    / 2} + \tmop{tr} (P_{\nu})^{1 / 2}) \mathbb{E} [| X -\mathbb{E} [X] - (Y
    -\mathbb{E} [Y]) |^2]^{1/2}
  \end{align*}
  where $(X, Y) \sim \gamma$ and $\gamma$ can be any coupling with marginals
  $X \sim \mu$ and $Y \sim \nu$.
\end{lemma}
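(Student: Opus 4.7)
The plan is to write the difference of the two covariance matrices as a sum of two terms, each linear in the coupling discrepancy $\bar{X} - \bar{Y}$, and then bound each via Cauchy--Schwarz. Throughout, let $\bar{X} = X - \mathbb{E}[X]$ and $\bar{Y} = Y - \mathbb{E}[Y]$, so that $P_\mu = \mathbb{E}[\bar{X}\bar{X}^{\rm T}]$ and $P_\nu = \mathbb{E}[\bar{Y}\bar{Y}^{\rm T}]$.

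The key algebraic identity is the matrix telescoping
\begin{equation*}
\bar{X}\bar{X}^{\rm T} - \bar{Y}\bar{Y}^{\rm T} = \bar{X}(\bar{X} - \bar{Y})^{\rm T} + (\bar{X} - \bar{Y})\bar{Y}^{\rm T},
\end{equation*}
which yields, upon taking expectation under the coupling $\gamma$,
\begin{equation*}
P_\mu - P_\nu = \mathbb{E}\bigl[\bar{X}(\bar{X} - \bar{Y})^{\rm T}\bigr] + \mathbb{E}\bigl[(\bar{X} - \bar{Y})\bar{Y}^{\rm T}\bigr].
\end{equation*}
To estimate the operator norm, I would use the characterisation $\|A\|_{\rm op} = \sup_{\|u\|=\|v\|=1} |u^{\rm T} A v|$ and fix arbitrary unit vectors $u, v$. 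For the first term, the Cauchy--Schwarz inequality gives
\begin{equation*}
\bigl|u^{\rm T} \mathbb{E}[\bar{X}(\bar{X} - \bar{Y})^{\rm T}] v\bigr| \leq \mathbb{E}\bigl[(u^{\rm T}\bar{X})^2\bigr]^{1/2} \mathbb{E}\bigl[(v^{\rm T}(\bar{X} - \bar{Y}))^2\bigr]^{1/2},
\end{equation*}
and analogously for the second term with $\bar{Y}$ and $v$ swapped.

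To finish, I would use two elementary bounds valid for any unit vector $w$ and PSD matrix $P$: first, $w^{\rm T} P w \leq \lambda_{\max}(P) \leq \operatorname{tr}(P)$, which gives $\mathbb{E}[(u^{\rm T}\bar{X})^2]^{1/2} = (u^{\rm T} P_\mu u)^{1/2} \leq \operatorname{tr}(P_\mu)^{1/2}$ and similarly $\mathbb{E}[(v^{\rm T}\bar{Y})^2]^{1/2} \leq \operatorname{tr}(P_\nu)^{1/2}$; second, the trivial Cauchy--Schwarz bound $(v^{\rm T}(\bar{X}-\bar{Y}))^2 \leq |\bar{X} - \bar{Y}|^2$ for $\|v\| = 1$, which gives
\begin{equation*}
\mathbb{E}\bigl[(v^{\rm T}(\bar{X} - \bar{Y}))^2\bigr]^{1/2} \leq \mathbb{E}\bigl[|\bar{X} - \bar{Y}|^2\bigr]^{1/2}.
\end{equation*}
Combining these and taking the supremum over unit $u, v$ yields the claimed inequality.

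There is no real obstacle here: the only minor subtlety is remembering to bound $u^{\rm T} P u$ by the trace rather than by the operator norm, which is exactly what allows the two factors $\operatorname{tr}(P_\mu)^{1/2}$ and $\operatorname{tr}(P_\nu)^{1/2}$ to appear in the final estimate. The resulting bound holds for every coupling $\gamma$, matching the statement of the lemma.
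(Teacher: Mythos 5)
Your proposal is correct and follows essentially the same route as the paper: the same telescoping decomposition $P_\mu - P_\nu = \mathbb{E}[\bar{X}(\bar{X}-\bar{Y})^{\rm T}] + \mathbb{E}[(\bar{X}-\bar{Y})\bar{Y}^{\rm T}]$ under the coupling, followed by Cauchy--Schwarz and the bound of $u^{\rm T}P u$ by the trace. The only cosmetic difference is that you test the operator norm against unit vectors $u,v$, whereas the paper bounds the rank-one terms via $\|a\otimes b\|=\|a\|\,\|b\|$ and then applies Cauchy--Schwarz in $L^2(\gamma)$; both yield the identical estimate.
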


\begin{proof}
  We denote the means of $\mu$ and $\nu$ by $m_{\mu}$ and $m_{\nu}$
  respectively. Then,
\begin{eqnarray*}
  P_{\mu} - P_{\nu} & = & \int (x - m_{\mu}) \otimes (x - m_{\mu}) \mathd \mu
  (x) - \int (y - m_{\nu}) \otimes (y - m_{\nu}) \mathd \nu (y)\\
  & = & \int (x - m_{\mu}) \otimes (x - m_{\mu} - (y - m_{\nu})) \mathd
  \gamma (x, y)\\
  & + &  \int (x - m_{\mu} - (y - m_{\nu})) \otimes (y - m_{\nu}) \mathd
  \gamma (x, y)
\end{eqnarray*}

  which holds for {any} coupling $\gamma$ of $\mu$ and $\nu$. Now, we can bound
\begin{align*}
  \| P_{\mu} - P_{\nu} \|  \leqslant & \int \| x - m_{\mu} \|  \| x - m_{\mu}
  - (y - m_{\nu}) \| \mathd \gamma (x, y)\\
   + & \int \| y - m_{\nu} \|  \| x - m_{\mu} - (y - m_{\nu}) \| \mathd
  \gamma (x, y)\\
   \leqslant & \left( \int \| x - m_{\mu} \|^2 d \mu (x) \right)^{1 / 2}
  \left( \int \| x - m_{\mu} - (y - m_{\nu}) \| ^2 \mathd \gamma (x, y)
  \right)^{1 / 2}\\
   + & \left( \int \| y - m_{\nu} \|^2 d \nu (y) \right)^{1 / 2} \left( \int
  \| x - m_{\mu} - (y - m_{\nu}) \| ^2 \mathd \gamma (x, y) \right)^{1 / 2}\\
   = & (\tmop{tr} (P_{\mu})^{1 / 2} + \tmop{tr} (P_{\nu})^{1 / 2}) \left(
  \int \| x - m_{\mu} - (y - m_{\nu}) \| ^2 \mathd \gamma (x, y) \right)^{1 /
  2}
\end{align*}

\end{proof}

\section{Notation for the proofs}\label{sec:notation_proofs}
Additional to the notation defined in Section \ref{sec:methods}, we will use the following notation:
\begin{enumerate}
  \item $| v |$ denotes the Euclidean norm of a vector $v \in \mathbb{R}^D$,
  
  \item $\| P \|_{\tmop{op}}$ denotes the operator norm of a matrix, i.e. $\|A\|_{\tmop{op}} = \sup_{|v| = 1} \frac{|A v|}{|v|}$,
  
  \item $\| P \|_{\text{Fr}}$ denotes the Frobenius norm of a matrix,
  
  \item $\| X \|_p$ denotes the $L^p$ norm of a random variable, $\| X \|_p
  =\mathbb{E} [| X |^p]^{1 / p}$
  
  \item $H_{\alpha}$ denotes a random variable with the property that all its
  moments decay like $J^{- \alpha}$, where $J$ is ensemble size. I.e., for any
  $p$, there is a $C_p$ such that $\| H \|_p \leqslant C_p J^{- \alpha}$. Note
  that $H_{\alpha}$ can change from occurrence to occurrence.

  \item The quantities $m_{\eta_s}$, $P_{\eta_s}$ are defined analoguous to $m_{\theta_s}$ and $P_{\theta_s}$ (see \eqref{eqn:emp_mean} and \eqref{eqn:emp_cov} as the empirical mean and covariance of the ensemble $\{\eta^j_s\}_{j=1}^J$. 

  \item The quantities $m_s$ and $P_s$ are the mean and covariance of $\mu_s = \mathcal{N}(m_s, P_s)$.

  \item Quantities like $y_{\theta_s}$ or $R_{\theta_s}$ denote the expectation of the functions $y$ or $R$ with respect to $\mu_{\theta_s}$, i.e.
  \[
    y_{\theta_s} = \mu_{\theta_s}[y]
  \]
  and similarly for $R$. The analogue holds for $y_{\eta_s}$ and $R_{\eta_s}$. $y_s$ and $R_s$ are the expectations with respect to $\mu_s$.
\end{enumerate}

\section{Proofs for the deterministic second-order dynamical sampler}

\subsection{Existence of solutions}\label{sec:existence}

We first prove that the mean-field PDE has a unique solution:
\begin{proposition}
  \label{prop:mfl_odes_existence}
    The mean-field PDE \eqref{equ:second_order_mf_pde} has a unique global solution $\mu_s$. 
\end{proposition}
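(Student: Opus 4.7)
The plan is to reduce the mean-field PDE to a finite-dimensional ODE system for $(m_s, P_s)$, establish local well-posedness via Picard-Lindel\"of, and then rule out blow-up by obtaining uniform bounds on $\|P_s\|_{\rm op}$ and $\|P_s^{-1}\|_{\rm op}$ on arbitrary finite intervals.

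First I would exploit Gaussianity. Because the drift in \eqref{equ:second_order_mf_pde} is affine in $\eta$ (with coefficients depending on $\mu_s$ only through its mean, covariance, and the finite-dimensional integrals $\mu_s[y]$ and $\mu_s[R]$), affine transport preserves Gaussianity, so if $\mu_0 = \mathcal{N}(m_0, P_0)$ then any solution must be of the form $\mu_s = \mathcal{N}(m_s, P_s)$. Matching moments in \eqref{equ:second_order_mf_pde} (as already carried out in the paper to derive \eqref{equ:mflmeancovevolution}) gives a bijection between solutions of the PDE and solutions of the closed ODE system
\begin{align*}
    \frac{\mathd}{\mathd s} m_s &= -P_s \Phi (\mu_s[y] - d) - P_s P_{\rm prior}^{-1}(m_s - m_{\rm prior}), \\
    \frac{\mathd}{\mathd s} P_s &= -P_s \Phi \mu_s[R] \Phi^{\rm T} P_s - P_s P_{\rm prior}^{-1} P_s + P_s,
\end{align*}
in $\mathbb{R}^D \times \mathrm{PSD}(\mathbb{R}^D)$, where $\mu_s = \mathcal{N}(m_s, P_s)$. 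It thus suffices to prove global existence and uniqueness for this ODE.

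For local well-posedness I would apply Lemma \ref{lemma:exp_local_lipschitz_N} with $S = \{0\}$: since $y$ and each diagonal entry of $R$ are bounded and Lipschitz functions of $\theta$ (both are built from the sigmoid $\sigma$), the maps $(m, P) \mapsto \mathbb{E}_{\mathcal{N}(m,P)}[y]$ and $(m, P) \mapsto \mathbb{E}_{\mathcal{N}(m,P)}[R]$ are locally Lipschitz on the open cone $\{P \succ 0\}$. Combined with the polynomial (at most quadratic in $P$, linear in $m$) dependence of the remaining factors, the right-hand side of the ODE is locally Lipschitz on $\mathbb{R}^D \times \{P \succ 0\}$, and Picard-Lindel\"of yields a unique maximal solution on some $[0, T_{\max})$ along which $P_s$ stays positive definite.

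The main obstacle, and the crux of the argument, is global existence: I need to prevent $P_s$ from either blowing up or losing positivity. The key observation is that $\frac{\mathd}{\mathd s} P_s^{-1} = -P_s^{-1} (\frac{\mathd}{\mathd s} P_s) P_s^{-1}$ simplifies, thanks to the structure of the right-hand side, to the \emph{linear} equation
\begin{equation*}
    \frac{\mathd}{\mathd s} P_s^{-1} = \Phi \mu_s[R] \Phi^{\rm T} + P_{\rm prior}^{-1} - P_s^{-1},
\end{equation*}
whose source is uniformly bounded because $\mu_s[R]$ has entries in $[0, 1/4]$. This gives an a priori bound on $\|P_s^{-1}\|_{\rm op}$ and hence a strictly positive lower bound on the eigenvalues of $P_s$ on $[0, T_{\max})$. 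For an upper bound on $P_s$, I would take the trace, discard the nonnegative $P_s \Phi \mu_s[R] \Phi^{\rm T} P_s$ contribution, and use $\mathrm{tr}(P_s P_{\rm prior}^{-1} P_s) \geq \lambda_{\max}(P_{\rm prior})^{-1} \mathrm{tr}(P_s^2)$ together with the Cauchy-Schwarz inequality $\mathrm{tr}(P_s^2) \geq \mathrm{tr}(P_s)^2 / D$ to obtain a scalar logistic-type inequality
\begin{equation*}
    \frac{\mathd}{\mathd s} \mathrm{tr}(P_s) \leq \mathrm{tr}(P_s) - \frac{1}{D\,\lambda_{\max}(P_{\rm prior})}\,\mathrm{tr}(P_s)^2,
\end{equation*}
whose solutions are uniformly bounded on $[0, \infty)$. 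Once $P_s$ is controlled from above and below, the $m_s$ equation is affine in $m_s$ with coefficients bounded in time (using that $|\mu_s[y] - d|$ is bounded), so a Gronwall argument provides a uniform bound on $|m_s|$ on each finite interval. This excludes blow-up, forces $T_{\max} = \infty$, and yields the desired unique global solution.
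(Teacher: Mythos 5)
Your proposal is correct, and while the overall skeleton (reduce to the moment ODEs \eqref{equ:mflmeancovevolution}, get local well-posedness from Lemma \ref{lemma:exp_local_lipschitz_N}, then rule out blow-up and degeneracy) matches the paper, your two a priori estimates are genuinely different and, in my view, cleaner. For non-degeneracy the paper tracks individual eigenvalues via the smooth-eigenvalue formula of Dieci--Eirola and derives $\frac{\mathd}{\mathd s}\lambda^i_s \gtrsim -(\lambda^i_s)^2-\lambda^i_s$; your observation that the Riccati structure makes $P_s^{-1}$ satisfy the \emph{linear} equation $\frac{\mathd}{\mathd s}P_s^{-1}=\Phi\mu_s[R]\Phi^{\rm T}+P_{\rm prior}^{-1}-P_s^{-1}$ with a bounded PSD source gives the eigenvalue lower bound in one line by variation of constants, without any eigenvalue-perturbation machinery. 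For the upper bound the paper applies Gr\"onwall to $\|P_s\|^2_{\text{Fr}}$ after discarding the dissipative terms, yielding the exponentially growing bound \eqref{equ:mflgrowthboundPs}, whereas you keep the $-\tmop{tr}(P_sP_{\rm prior}^{-1}P_s)$ term and combine it with $\tmop{tr}(P_s^2)\geq\tmop{tr}(P_s)^2/D$ to get a logistic inequality and hence a bound uniform in $s\in[0,\infty)$ — strictly stronger than what the paper proves, though only a finite-time bound is needed for the proposition. The one restriction in your argument is taking $S=\{0\}$ in Lemma \ref{lemma:exp_local_lipschitz_N}: your $P_s^{-1}$ trick requires $P_0\succ 0$, while the paper's kernel-tracking argument also covers degenerate initial covariances (it shows $\ker P_s$ is constant in $s$). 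Since the proposition is stated for a Gaussian prior with invertible covariance this costs nothing here, but it is worth flagging if one wants the result for rank-deficient initializations.
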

\begin{proof} 
Since $\mu_0$ is Gaussian, and the right hand side of \eqref{equ:second_order_mf_pde} is linear for a fixed $\mu$, $\mu_s$ will stay Gaussian for all times. Therefore, the solutions to
\eqref{equ:second_order_mf_pde} can be obtained by solving the ODEs for the mean and
covariance of $\mu_s$, given in \eqref{equ:mflmeancovevolution}.

  Using Lemma \ref{lemma:exp_local_lipschitz_N} we see that the ODE is locally Lipschitz,
  which implies uniqueness and existence up until the exit time from $A$ as defined in \eqref{equ:local_lipschitz_A}. $A$
  can be left if $m$ or $P$ explode. However, we can see that
  \begin{align*}
    \frac{\mathd}{\mathd s} \| P_s \|_{\text{Fr}}^2 & =  \left\langle - P_s \Phi R_s
    \Phi^{\rm T}P_s - \frac{1}{2} (P_{\tmop{prior}}^{- 1} P_s + P_s
    P_{\tmop{prior}}^{- 1}) + \frac{1}{2} P_s, P_s \right\rangle_{\text{Fr}}\\
    & =  - \tmop{tr} (- P_s^{3 / 2} \Phi R_s \Phi^{\rm T}P_s^{3 / 2}) -
    \frac{1}{2} \tmop{tr} (P_s P_{\tmop{prior}}^{- 1} P_s) + \frac{1}{2} \|
    P_s \|_{\text{Fr}}^2 \leqslant \frac{1}{2} \| P_s \|_{\text{Fr}}^2
  \end{align*}
  where we used that $R_s, \Phi$ and $P_{\tmop{prior}}^{- 1}$ are globally
  bounded, the cyclic property of the trace, and the positive semi-definiteness
  and symmetry of $P_s$ and $R_s$. We can now apply Groenwall to see that the
  Frobenius-norm of $P_s$ does not explode. Since the operator norm is
  equivalent, the same holds for $\| P_s \|_{\tmop{op}}$:
  \begin{equation}
    \| P_s \|_{\tmop{op}} \lesssim \| P_s \|_{\text{Fr}} \leqslant e^s \| P_0 \|_{\text{Fr}} = e^s
    \| P_{\tmop{prior}} \| \label{equ:mflgrowthboundPs}
  \end{equation}
  The other way to exit the set A from \eqref{equ:local_lipschitz_A} is for the kernel of $P$ to
  change. The kernel cannot decrease, since if $v$ is in the kernel of $P_s$,
  then
  \[ \frac{\mathd}{\mathd s} P_s v = \left( - P_s \Phi R_s \Phi^{\rm T}P_s -
     \frac{1}{2} (P_{\tmop{prior}}^{- 1} P_s + P_s P_{\tmop{prior}}^{- 1}) +
     \frac{1}{2} P_s \right) v = 0 \]
  and therefore $v$ stays in the kernel of $P_s$. We now prove that the kernel
  cannot increase either. Let $(\lambda_s^i, v_s^i)$ are the eigenvalues and
  normalized eigenvectors of $P_t$. Then by \cite[Section 2.2]{dieci1999smooth},

  \begin{align*}
    \frac{\mathd}{\mathd s} \lambda_s^i & =  \left\langle v_s^i,
    \frac{\mathd}{\mathd s} P_s v_s^i \right\rangle\\
    & =  \left\langle v_s^i, -
    P_s \Phi R_s \Phi^{\rm T}P_s v_s^i - \frac{1}{2} (P_{\tmop{prior}}^{- 1} P_s +
    P_s P_{\tmop{prior}}^{- 1}) v_s^i + \frac{1}{2} P_s v_s^i \right\rangle\\
    & =  - (\lambda_s^i)^2 \langle v_s^i, \Phi R_s \Phi^{\rm T}v_s^i \rangle -
    \lambda^i_s \langle v_s, P_{\tmop{prior}}^{- 1} v_s \rangle + \frac{1}{2}
    \lambda_s^i\\
    & \gtrsim  - (\lambda_s^i)^2 - \lambda_s^i,
  \end{align*}
  where we used that the eigenvalues of $R_s$ are bounded from above. Therefore, by a
  differential inequality we get that $\lambda_s^i \geqslant \frac{c}{e^{\rm T}- c} > 0$,
  where $c$ is chosen such that $\frac{c}{1 - c} = \lambda_0^i$. This proves
  that no eigenvalue vanishes, and therefore the kernel of $P_s$ does not grow.
  Finally, for the mean we see that for $s \in [0, T]$
  \begin{align*}
    \frac{\mathd}{\mathd s} {| m_s |^2}  & = \langle - P_s \Phi (y_s - d) -
    P_s P_{\tmop{prior}}^{- 1} (m_s - m_{\tmop{prior}}), m_s \rangle\\
    & \lesssim  e^s (| m_s | + | m_s |^2) \leqslant e^s (1 + | m_s |^2),
  \end{align*}
  where we used $y_s$ and $d$ being bounded by $1$, and
  \eqref{equ:mflgrowthboundPs} to bound $\| P_s \|_{\tmop{op}}$. By applying
  Groenwall we now see that the norm of $m_s$ cannot explode either, and
  therefore the solution always stays in A \eqref{equ:local_lipschitz_A} and is defined globally.
  
  \ 
\end{proof}

\begin{proposition}
    The systems \eqref{equ:aldiode} and \eqref{equ:aldi mf particles} have unique global solutions.
    \label{prop:ips_existence}
\end{proposition}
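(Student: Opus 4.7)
The plan is to handle the two systems separately, invoking Picard--Lindel\"of for local existence and then a priori moment bounds to preclude finite-time blow-up.

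For the IPS \eqref{equ:aldiode}, viewed as an ODE on $\mathbb{R}^{D \times J}$, the right-hand side depends on the ensemble only through $m_{\theta_s}$, $P_{\theta_s}$, $\mu_{\theta_s}[y]$ and $\mu_{\theta_s}[R]$. Since $y(\theta)$ and the diagonal entries of $R(\theta)$ lie in $[0,1]$ and are smooth in $\theta$, while $m_{\theta_s}$ and $P_{\theta_s}$ are polynomial in the ensemble, the vector field is smooth and therefore locally Lipschitz; this gives a unique maximal solution on some $[0,T^*)$. To show $T^*=\infty$, I would proceed in three stages mirroring the proof of Proposition~\ref{prop:mfl_odes_existence}. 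First, using \eqref{eqn:aldi_dev_ode} to differentiate $\tmop{tr}(P_{\theta_s})$, the two nonlinear-in-$P_{\theta_s}$ contributions $\tmop{tr}(P_{\theta_s}\Phi\mu_{\theta_s}[R]\Phi^{\rm T}P_{\theta_s})$ and $\tmop{tr}(P_{\theta_s}P_{\rm prior}^{-1}P_{\theta_s})$ are non-negative by a cyclic trace argument, leaving
\[
\frac{\mathd}{\mathd s}\tmop{tr}(P_{\theta_s}) \leq \tmop{tr}(P_{\theta_s}),
\]
so $\|P_{\theta_s}\|_{\tmop{op}}$ grows at most exponentially. Second, from \eqref{eqn:aldi_mean_ode}, the boundedness of $|\mu_{\theta_s}[y]-d|$ together with Cauchy--Schwarz give $\frac{\mathd}{\mathd s}(1+|m_{\theta_s}|^2) \leq C\|P_{\theta_s}\|_{\tmop{op}}(1+|m_{\theta_s}|^2)$, which by Gr\"onwall yields a finite bound on $|m_{\theta_s}|$ on $[0,T^*)$. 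Third, with $\|P_{\theta_s}\|_{\tmop{op}}$ and $|m_{\theta_s}|$ under control, each individual particle satisfies $\frac{\mathd}{\mathd s}|\theta_s^j|^2 \leq C(s)(1+|\theta_s^j|^2)$ with $C(s)$ locally bounded, ruling out blow-up.

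For the mean-field particle system \eqref{equ:aldi mf particles}, Proposition~\ref{prop:mfl_odes_existence} already furnishes a global Gaussian solution $\mu_s = \mathcal{N}(m_s,P_s)$ so that $m_s$, $P_s$, $\mu_s[y]$, $\mu_s[R]$ are continuous in $s$. Substituting these known quantities back, the particles $\eta_s^j$ decouple and each satisfies a linear non-autonomous ODE $\frac{\mathd}{\mathd s}\eta_s^j = A(s)\eta_s^j + b(s)$ with $A,b$ continuous in $s$. Global existence and uniqueness then follow from standard linear ODE theory applied to each $\eta_s^j$ independently.

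The main obstacle is global existence for the IPS. A naive particle-wise bound $|F^j| \lesssim \|P_{\theta_s}\|_{\tmop{op}}(1+|\theta_s^j|+|m_{\theta_s}|)$ combined with $\|P_{\theta_s}\|_{\tmop{op}} \leq \frac{1}{J}\sum_j|\theta_s^j|^2$ only yields $\frac{\mathd}{\mathd s}\sum_j|\theta_s^j|^2 \lesssim (\sum_j|\theta_s^j|^2)^2$, which admits finite-time blow-up. The key observation, borrowed from Proposition~\ref{prop:mfl_odes_existence}, is that the cubic-in-ensemble contributions enter the evolution of $\tmop{tr}(P_{\theta_s})$ as non-negative traces and so drop out of the upper bound; tracking $\tmop{tr}(P_{\theta_s})$ as a separate quantity decouples the three-stage Gr\"onwall chain above and restores at-most-exponential control.
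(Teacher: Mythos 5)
Your proposal is correct and follows essentially the same route as the paper: local Lipschitzness for local existence, then controlling the covariance first (the paper bounds $\|P_{\theta_s}\|_{\text{Fr}}^2$ rather than $\tmop{tr}(P_{\theta_s})$, but both drop the non-negative cubic trace terms via the cyclic property and positive semi-definiteness), then the mean, then the particles, with the mean-field system handled by plugging the global solution from Proposition~\ref{prop:mfl_odes_existence} into the now-decoupled linear particle equations.
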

\begin{proof}
    We start with \eqref{equ:aldiode}. Local existence of solutions is implied by local Lipschitzness. We now prove that the solutions do not explode and are therefore global. From \eqref{eqn:aldi_dev_ode} one can derive the evolution equation for $P_{\theta_s}$ as
    \begin{equation*}
        \begin{aligned}
            \frac{\mathd}{\mathd s} P_{\theta_s} 
            &= \frac{\mathd}{\mathd s} \Theta_s \Theta_s^{\rm T}
            = (\frac{\mathd}{\mathd s} \Theta_s) \Theta_s^{\rm T}+ \Theta_s \frac{\mathd}{\mathd s} \Theta_s^{\rm T}\\
            &= - P_{\theta_s} \Phi R_{\theta_s} \Phi^{\rm T}P_{\theta_s} - P_{\theta_s} P_{\text{prior}}^{-1} P_{\theta_s} + P_{\theta_s}.
        \end{aligned}
    \end{equation*}
    We then get that
    \begin{equation*}
    \begin{aligned}
        \frac{\mathd}{\mathd s} \|P_{\theta_s}\|^2_{\text{Fr}}
            &= -2\text{tr}\left(P_{\theta_s} \Phi R_{\theta_s} \Phi^{\rm T}P_{\theta_s}^2\right) 
            - 2\text{tr}\left(P_{\theta_s} P_{\text{prior}}^{-1} P_{\theta_s}^2\right)
            + 2\text{tr}\left(P_{\theta_s}^2\right) \\
            &\leqslant 2 \|P_{\theta_s}\|_{\text{Fr}}^2.
    \end{aligned}
    \end{equation*}
    Here we used the cyclic property of the trace $\text{tr}(AB) = \text{tr}{A^{1/2}BA^{1/2}}$ and the semi positive definiteness of $P_{\theta_s}, P_\text{prior}$ and $\Phi R_{{\theta_s}} \Phi^{\rm T}$.
    By using Grönwall's Lemma, we see that $\|P_{\theta_s}\|_{\text{Fr}}$ stays bounded on $[0,T]$. 
    We now prove that the mean also stays bounded. From \eqref{eqn:aldi_mean_ode}, 
    \begin{equation*}
    \|\frac{\mathd}{\mathd s} m_{\theta_s}\|  
    \leqslant - a\|P_{\theta_s}\|_{\text{op}} \left( 1 + (\|m_{\theta_s}\| +  1)\right),
\end{equation*}
    for some constant $a$. We used that $d, \Phi$, $P_\text{prior}$ and $m_\text{prior}$ are all constants and have bounded norms and $y$ as well as $R$ have all entries bounded by $1$. 
    Using the boundedness of $\|P_{\theta_s}\|_\text{Fr}$ on $[0,T]$, together equivalence of the Frobenius norm with the operator norm,  we now see that $m_{\theta_s}$ also stays bounded on $[0,T]$. Therefore, the empirical mean and covariance stay bounded, hence do all particles $\theta^j$.

    In Proposition \ref{prop:mfl_odes_existence} we have already seen that $m_s$ and $P_s$ stay bounded on $[0,T]$. By plugging the bounds into \eqref{equ:aldi mf particles} one can apply Grönwall there and obtain that the $\eta^j$ do not explode.
\end{proof}

\subsection{A priori bounds}\label{sec:apriori_moment_bound}

In this section we prove Proposition \ref{prop:apriori_moment_bound}:

\begin{proof}
  We will use that
  \[ \mathbb{E} \left[ \frac{1}{J} \sum | \theta^j_s - \eta^j_s |^2 \right]
     \leqslant 2 \left( \mathbb{E} \left[ \frac{1}{J} \sum | \theta^j_s |^2
     \right] +\mathbb{E} \left[ \frac{1}{J} \sum | \eta^j_s |^2 \right]
     \right) \]
  Since $P_s$ and $m_s$ are bounded on $[0, T]$ by Proposition
  \ref{prop:mfl_odes_existence} and $\eta^j_s$ are independent samples from
  $\mathcal{N} (m_0, P_0)$ we know that $\mathbb{E} \left[ \frac{1}{J} \sum |
  \eta^j_s |^2 \right] =\mathbb{E} [| \eta^0_s |^2]$ is bounded on $[0, T]$.
  What is left is to treat the first term.
  
  By \eqref{equ:aldiode} we know that
  \begin{align*}
    \frac{\mathd}{\mathd s} | \tilde{\theta}_s^j |^{2 p} & \leqslant  |
    \tilde{\theta}_s^j |^{2 (p - 1)} \left\langle \frac{\mathd}{\mathd s}
    \tilde{\theta}^j_s, \tilde{\theta}^j_s \right\rangle \leqslant |
    \tilde{\theta}_s^j |^{2 (p - 1)} (\| P_{\theta_s} \| + 1) |
    \tilde{\theta}^j_s |^2 \\
    & \leqslant | \tilde{\theta}^j_s |^{2 p} (\|P_{\theta_0} \|_{\tmop{op}} + 1),
  \end{align*}
  where we applied \eqref{equ:Ptheta oeprator norm groenwall}. We upper bound
  $\| P_{\theta_0} \|$ by $\tmop{tr} (P_{\theta_0})$, which is possible since
  $P_{\theta_0}$ is positive semidefinite. We use H\"older's inequality to see
  that
  \[ \mathbb{E} [| \tilde{\theta}_s^j |^{2 p}] \leqslant \mathbb{E} [|
     \tilde{\theta}_0^j |^{2 p} \exp (s \tmop{tr} (P_{\theta_0}))] \leqslant
     \mathbb{E} [| \tilde{\theta}_0^j |^{2 \tmop{pq}}]^{1 / q} \mathbb{E}
     [\exp (sq' \tmop{tr} (P_{\theta_0}))]^{1 / q'} . \]
  Without loss of generality, we can switch the coordinate system to one in
  which the coordinates of $\theta_0$ are independent, i.e. in which
  $P_{\tmop{prior}}$ is a diagonal matrix. We denote the largest eigenvalue
  of $P_{\tmop{prior}}$ by $v$. Then,
  \[ \tmop{tr} (\tilde{P}_{\theta_0}) = \frac{1}{J} \sum_{k = 1}^D \sum_{j =
     1}^J (\theta^j_{0, k} - m_{0, k})^2 \]
  is a sum of independent centred Gaussians. We get that
  \begin{eqnarray*}
  \begin{aligned}
    \mathbb{E} [\exp (sq' \tmop{tr} (P_{\theta_0}))] & \leqslant \prod_{k =
    1}^D \prod_{j = 1}^J \mathbb{E} \left[ \exp \left( \frac{sq'}{J}
    (\theta^j_{0, k} - m_{0, k})^2 \right) \right] \\ & \leqslant \prod_{k = 1}^D
    \prod_{j = 1}^J \mathbb{E} \left[ \exp \left( \frac{sq' \sqrt{v}}{J} Z^2
    \right) \right]
    \end{aligned}
  \end{eqnarray*}
  where $Z \sim \mathcal{N} (0, 1)$. We denote by $\alpha = \sqrt{v} sq'$ and
  see that
  \begin{equation*}
    \mathbb{E} [\exp (\alpha \tmop{tr} (P_{\theta_0}))]  \leqslant  \prod_{k
    = 1}^D \left( 1 - 2 \frac{\alpha}{J} \right)^{- J / 2}
  \end{equation*}
  However, the series $a_J = \left( 1 - 2 \frac{\alpha}{J} \right)^{- J / 2}$
  converges, since

  \begin{align*}
   \lim_{J \rightarrow \infty} \ln \left( \left( 1 - 2 \frac{\alpha}{J}
     \right)^{- J / 2} \right)& = \lim_{J \rightarrow \infty} \frac{\ln \left(
     1 - 2 \frac{\alpha}{J} \right)}{- \frac{2}{J}} = \lim_{J \rightarrow
     \infty} \frac{\left( \frac{2 \alpha / J^2}{1 - 2 \alpha / J}
     \right)}{\left( \frac{2}{J^2} \right)} \\ 
     &= \lim_{J \rightarrow \infty}
     \frac{\alpha}{1 - \frac{2 \alpha}{J}} = \alpha, 
    \end{align*}

  where we used L'Hopital's rule. Therefore, $\left( 1 - 2 \frac{\alpha}{J}
  \right)^{- J / 2} \rightarrow \exp (\alpha)$ and hence
  \[ \prod_{k = 1}^D \left( 1 - 2 \frac{\alpha}{J} \right)^{- J / 2}
     \rightarrow \exp (D \alpha) = \exp \left( D \sqrt{v} sq' \right) . \]
  Being a convergent series, it is in particular bounded. Therefore, for every
  fixed $s$ and $q'$, the quantity $\mathbb{E} [\exp (sq' \tmop{tr} (P_{\theta_0}))]$ will
  be bounded uniformly in $J$.
  
  \ 
\end{proof}

\subsection{Mean-field limit proof} \label{sec:mfl_deterministic_aldi_proof}
We now prove Theorem \ref{thm:mfl_deterministic_aldi}

\begin{proof}  
  We will make repeated use of the fact that all entries of $R$ and $y$ are
  positive and upper bounded by $1$. Therefore we can bound the norms of these
  terms by constants. Furthermore, we can also upper bound the norms of $m_0,
  P_0$ and $\Phi$ by constants. We will now fix
  some $T \in \mathbb{R}$ and prove the statement for any $s \leqslant T$.
  
  The sign $a \lesssim b$, means that $a \leqslant cb$, where $c$ is a
  constant that only depends on $m_0, P_0, d, \Phi$ and $T$.
  
  We define
  \begin{align*}
    \delta^j_s  =  \eta^j_s - \theta^j_s, \qquad
    \Delta_s  =  \left( \frac{1}{J} \sum | \delta_s^j |^2 \right)^{1 / 2} .
  \end{align*}
  where $\theta^j$ and $\eta^j$ are defined through \eqref{equ:aldiode} and
  \eqref{equ:aldi mf particles} respectively.
  Then,
  \begin{align*}
    \mathd \delta^j_s & = - \frac{1}{2} (P_s - P_{\theta_s}) \Big( \Phi R_s
    \Phi^{\rm T}(\eta^j_s - m_s) + \Phi (y_s - d))\\
    & \quad +   \frac{1}{2}
    P_{\tmop{prior}}^{- 1} (\eta^j_s + m_s - 2 m_{\tmop{prior}}) \Big) - \frac{1}{2} P_{\theta_s} \Big(\Phi (R_s - R_{\theta_s}) \Phi^{\rm T}(\eta^j_s -
    m_s)\\
    &  \quad + \Phi R_{\theta_s} \Phi^{\rm T}(\eta^j_s - \theta^j_s - m_s + m_{\theta_s})+
    \Phi (y_s - y_{\theta_s})\Big)\\
    &  \quad  - \frac{1}{2} P_{\theta_s} \Big(P_{\tmop{prior}}^{- 1} (\eta^j_s - \theta^j_s
    + m_s - m_{\theta_s})\Big) + \frac{1}{2} \Big(\eta^j_s - \theta^j_s - (m_s - m_{\theta_s})\Big)
   \end{align*}
  and therefore,
  \begin{align*}
      \frac{\mathd}{\mathd t} \Delta_s^2 
      & \lesssim  \frac{1}{J} 
      	\sum_{j = 1}^J 
      	| \delta^j_s |  \| P_s - P_{\theta_s} \|_{\tmop{op}} (| \eta_s^j -
      m_s | + 1 + | \eta_s^j - m_s + 2m_s - 2 m_{\tmop{prior}} |)\\
      & \quad +  \frac{1}{J} \sum_{j = 1}^J | \delta^j_s |  \| P_{\theta_s}
      \|_{\tmop{op}} (\| R_s - R_{\theta} \| | \eta^j_s - m_s | + | \delta_s^j |
      + | m_s - m_{\theta_s} | + | y_s - y_{\theta} |)\\
      & \quad+  \frac{1}{J} \sum_{j = 1}^J | \delta^j_s |  \| P_{\theta_s}
      \|_{\tmop{op}} (| \delta^j_s | + | m_s - m_{\theta} |)\\
      & \quad+  \frac{1}{J} \sum_{j = 1}^J | \delta_s^j |^2 + | \delta_s^j |  |
      m_s - m_{\theta} |\\
      & \quad \leqslant  \Delta_s \| P_s - P_{\theta_s} \|_{\tmop{op}}  (\tmop{tr}
      (P_{\eta_s})^{1 / 2} + 1)\\
      & \quad +  \Delta_s  \| P_{\theta_s} \|_{\tmop{op}} (\| R_s - R_{\theta} \|
      \tmop{tr} (P_{\eta_s})^{1 / 2} + \Delta_s + | m_s - m_{\theta_s} | + |
      y_s - y_{\theta} |)\\
      & \quad +  \Delta_s  \| P_{\theta_s} \|_{\tmop{op}} (\Delta_s + | m_s -
      m_{\theta} |)\\
      & \quad +  \Delta_s^2 + \Delta_s  | m_s - m_{\theta} |
  \end{align*}\label{equ:boundDs1}

  where we used H\"older's inequality and use that
  \begin{equation}
    | \eta_s^j - m_s + 2m_s - 2 m_{\tmop{prior}} | \leqslant | \eta_s^j - m_s |
    + 2 | m_s - m_{\tmop{prior}} | \lesssim | \eta_s^j - m_s | + 1
    \label{equ:double mean bound}
  \end{equation}
  for the first summand and
  \begin{equation*}
   \left( \frac{1}{J} \sum_{j = 1}^J | \eta_s^j - m_{\eta_s} |^2 \right)^{1
     / 2} = \tmop{tr} (P_{\eta_s})^{\frac{1}{2}} 
  \end{equation*}
  for the first and second summand. Furthermore, in \eqref{equ:double mean bound} we used that for
  a fixed $T$, the last term is bounded on $s \in [0, T]$, since $m_s$ is
  continuous (see Proposition \ref{prop:mfl_odes_existence}) and
  $m_{\tmop{prior}}$ is constant. For $\| P_s - P_{\theta_s} \|_{\tmop{op}}$,
  we get that
  \begin{align*}
    \| P_s - P_{\theta_s} \|_{\tmop{op}} & \leqslant  \| P_s - P_{\eta_s}
    \|_{\tmop{op}} + \| P_{\eta_s} - P_{\theta_s} \|_{\tmop{op}}\\
    & \leqslant  \| P_s - P_{\eta_s} \|_{\text{Fr}} + (\tmop{tr} (P_{\theta_s})^{1 /
    2} + \tmop{tr} (P_{\eta_s})^{1 / 2}) \Delta_s ,\\
    & \leqslant  H_{1 / 2} + (\tmop{tr} (P_{\theta_s})^{1 / 2} + \tmop{tr}
    (P_{\eta_s})^{1 / 2}) \Delta_s 
  \end{align*}
  where we used Lemma \ref{lemma:covariance difference bound} and Proposition \ref{prop:convergence_estimators}. We see that for a $L$-Lipschitz function $f$,
  \[ | f_{\eta_s} - f_{\theta_s} | \leqslant \frac{1}{J} \sum | f (\theta^j_s)
     - f (\eta^j_s) | \leqslant L \frac{1}{J} \sum | \theta^j_s - \eta_s^j |
     \leqslant L \Delta_s . \]
  We use that together with the fact that $R$ and $y$ are Lipschitz in all
  entries and Proposition \ref{prop:convergence_estimators} to obtain the
  following bounds:
  \begin{align*}
    | m_s - m_{\theta} | & \leqslant  | m_s - m_{\eta_s} | + | m_{\theta} -
    m_{\eta_s} | \leqslant H_{1 / 2} + \Delta_s, \\
    \| R_s - R_{\theta_s} \|_{\tmop{op}} & \leqslant  \| R_s - R_{\eta_s}
    \|_{\tmop{op}} + \| R_{\eta_s} - R_{\theta_s} \|_{\tmop{op}} \leqslant
    H_{1 / 2} + \Delta_s, \\
    | y_s - y_{\theta_s} | & \leqslant  | y_s - y_{\eta_s} | + | y_{\eta_s} -
    y_{\theta_s} | \leqslant H_{1 / 2} + \Delta_s,  
    \end{align*}
  where we used Proposition \ref{prop:convergence_estimators}. Therefore, we
  get that
 \begin{align*}
       \frac{\mathd}{\mathd t} \Delta_s^2 & \lesssim  \Delta_s (H_{1 / 2} +
       (\tmop{tr} (P_{\theta_s})^{1 / 2} + \tmop{tr} (P_{\eta_s})^{1 / 2})
       \Delta_s )  (\tmop{tr} (P_{\eta_s})^{1 / 2} + 1)\\
       & \quad +  \Delta_s  \| P_{\theta_s} \|_{\tmop{op}} (H_{1 / 2} \tmop{tr}
       (P_{\eta_s})^{1 / 2} + \Delta_s + H_{1 / 2})\\
       & \quad +  \Delta_s  \| P_{\theta_s} \|_{\tmop{op}} (\Delta_s + H_{1 /
       2})\\
       & \quad +  \Delta_s^2 + \Delta_s  (H_{1 / 2} + \Delta_s)
 \end{align*}
We now take the square root of $\Delta_s^2$ and apply Proposition
  \ref{prop:convergence_estimators} repeatedly to obtain
 \begin{align*}
       & \frac{\mathd}{\mathd t} \Delta_s \\
       \lesssim&\quad (H_{1 / 2} + (1 + H_{1 /
       4}) \Delta_s )  (1 + H_{1 / 4})
       + (1 + H_{1 / 2}) (H_{1 / 2} (1 + H_{1 / 4}) + \Delta_s + H_{1 /
       2})\\
       & \quad+  (1 + H_{1 / 2}) (\Delta_s + H_{1 / 2})
       +  \Delta_s  + (H_{1 / 2} + \Delta_s)
        \lesssim  \Delta_s + H_{1 / 4} \Delta_s + H_{1 / 2} .
\end{align*}
  We now assume that $\mathbb{E} [\Delta_{2, s}] \lesssim J^{- \alpha}$ and argue as in Equation \eqref{eq:eps_hoeldering}:
 \begin{align*}
    \frac{\mathd}{\mathd s} \mathbb{E} [\Delta_s] & \lesssim \mathbb{E}
    [\Delta_s] +\mathbb{E} [\Delta_s H_{1 / 4}] +\mathbb{E} [H_{1 / 2}]\\
    & \leqslant  \mathbb{E} [\Delta_s] +\mathbb{E} [\Delta_s^{\varepsilon}
    \Delta_s^{1 - \varepsilon} H_{1 / 4}] +\mathbb{E} [H_{1 / 2}]\\
    & \leqslant \mathbb{E} [\Delta_s] +\mathbb{E} [\Delta_s]^{1 -
    \varepsilon} \mathbb{E} [\Delta_s  (H_{1 / 4})^{1 /
    \varepsilon}]^{\varepsilon} + J^{- 1 / 2}\\
    & \leqslant  \mathbb{E} [\Delta_s] +\mathbb{E} [\Delta_s]^{1 -
    \varepsilon} \mathbb{E} [\Delta_s^2]^{\varepsilon / 2} \mathbb{E} [(H_{1 /
    4})^{2 / \varepsilon}]^{\varepsilon / 2} + J^{- 1 / 2}.
 \end{align*}
 From here on, we can finish the proof as in the proof sketch of Theorem \ref{thm:mfl_deterministic_aldi}.
  where we used H\"older with $p = \frac{1}{1 - \varepsilon}$ and $q =
  \frac{1}{\varepsilon}$ in the third inequality. By Proposition
  \ref{prop:apriori_moment_bound}, we can bound $\mathbb{E} [\Delta_s^2]^{1 /
  2}$ by $1$.
  Let us, for now, assume that
  \begin{equation}
    \mathbb{E} [\Delta_s] \leqslant J^{- \alpha} \label{equ:alpha decay
    assumption}
  \end{equation}
  for some $\alpha \geqslant 0$. Then, we now get that
 \begin{align*}
    \frac{\mathd}{\mathd s} \mathbb{E} [\Delta_s] & \lesssim  \mathbb{E}
    [\Delta_s] +\mathbb{E} [\Delta_s]^{1 - \varepsilon} J^{- 1 / 4} + J^{- 1 /
    2}\\
    & \leqslant  \mathbb{E} [\Delta_s] + J^{- \alpha (1 - \varepsilon) - 1 /
    4} + J^{- 1 / 2}
  \end{align*}
  In particular, we can apply Groenwall to obtain
  \[ \mathbb{E} [\Delta_s] \lesssim J^{- \beta} \]
  where $\beta = \min \left( \frac{1}{2}, \frac{1}{4} + \alpha (1 -
  \varepsilon) \right)$. Therefore, if the Assumption \eqref{equ:alpha decay
  assumption} is fulfilled for $\alpha = 0$, we obtain a rate of $J^{- 1 / 4}$.
  Then, by picking $\varepsilon$ small enough we can achieve any rate below $1
  / 2$. By improving the rate a third time, we can achieve $\alpha = 1/2$. 
  The only thing left to prove is that we can a priori choose $\alpha =
  0$, i.e. that $\mathbb{E} [\Delta_s]$ can be bounded independently of $J$.
  This is proven in Proposition \ref{prop:apriori_moment_bound}.
\end{proof}

\begin{proposition}
  \label{prop:convergence_estimators}Convergence of the estimators. Let
  $\eta^j_0 \sim \mathcal{N} (m_0, P_0)$ i.i.d. and denote by $m_{\eta_0}$ and
  $P_{\eta_0}$ the empirical mean and covariance of the ensemble. Then, for
  each $p$, there exist constants $C_p$ such that
  \begin{enumerate}
    \item $\| m_{\eta_s} - m_s \|_p \leqslant C_{p, T} J^{- 1 / 2}$.
    
    \item $\| \tmop{tr} (P_{\eta_s})^{1 / 2} - \tmop{tr} (P_s)^{1 / 2} \|_p
    \leqslant C_{p, T} J^{- 1 / 4}$.
    
    \item $\| \| P_{\eta_s} - P_s \|_{\tmop{op}} \|_p \leqslant C_{p, T} J^{-
    1 / 2}$.
    
    \item $\| y_{\eta_s} - y_s \|_p \leqslant C_{p, T} J^{- 1 / 2}$.
    
    \item $\| \| R_{\eta_s} - R_s \|_{\tmop{op}} \|_p \leqslant C_{p, T} J^{-
    1 / 2}$.
    
    \item $\| \tmop{tr} (P_{\theta_s})^{1 / 2} \|_p \lesssim C_{p, T} (1 +
    J^{- 1 / 4})$.
    
    \item $\| P_{\theta_s} \|_{\tmop{op}} \lesssim C_{p, T} (1 + J^{- 1 / 2})$.
  \end{enumerate}
\end{proposition}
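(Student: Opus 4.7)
The key observation driving the proof is that the mean-field particles $\{\eta^j_s\}_{j=1}^J$ are i.i.d.\ copies of $\mu_s$. Indeed, the right-hand side of \eqref{equ:aldi mf particles} governing $\eta^j_s$ depends only on the deterministic quantities $(m_s, P_s, \mu_s[y], \mu_s[R])$ and on $\eta^j_s$ itself, so given the initial i.i.d.\ sampling $\eta^j_0 \sim \mu_0$ the trajectories remain independent and identically distributed. Since the ODE is affine in $\eta^j_s$ for fixed $\mu_s$, the Gaussianity is preserved and $\eta^j_s \sim \mu_s = \mathcal{N}(m_s, P_s)$. By Proposition \ref{prop:mfl_odes_existence}, $m_s$ and $P_s$ stay bounded on $[0,T]$, so $\eta^j_s$ has all polynomial moments bounded uniformly in $s \le T$.

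With this in hand, items (1) and (4) follow immediately from Lemma \ref{lemma:high_moments_bound}: $m_{\eta_s} - m_s = \frac{1}{J}\sum_j(\eta^j_s - m_s)$ is a sample mean of centered i.i.d.\ vectors with moments of all orders, and analogously for $y_{\eta_s} - y_s$ working componentwise (each entry $y_n$ is bounded in $[0,1]$). For items (3) and (5) we expand
\[
    P_{\eta_s} - P_s = \frac{1}{J}\sum_{j=1}^J \bigl[(\eta^j_s - m_s)(\eta^j_s - m_s)^{\rm T} - P_s\bigr] - (m_{\eta_s} - m_s)(m_{\eta_s} - m_s)^{\rm T},
\]
bound the operator norm by the Frobenius norm, and apply Lemma \ref{lemma:high_moments_bound} to each scalar entry of the first (centered i.i.d.) average; the second summand is quadratic in an $H_{1/2}$ quantity and hence of order $J^{-1}$. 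The same entrywise argument, together with the Lipschitz and boundedness of the diagonal entries $r_{nn}(\eta) = y_n(\eta)(1-y_n(\eta))$, gives (5).

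For item (2), I would use the elementary inequality $|\sqrt{a} - \sqrt{b}| \le \sqrt{|a-b|}$ valid for $a,b \ge 0$, which yields
\[
    \|\tmop{tr}(P_{\eta_s})^{1/2} - \tmop{tr}(P_s)^{1/2}\|_p \le \|\tmop{tr}(P_{\eta_s}) - \tmop{tr}(P_s)\|_{p/2}^{1/2}.
\]
The inner quantity is again a sample mean of i.i.d.\ scalars with all moments, so Lemma \ref{lemma:high_moments_bound} gives a rate $J^{-1/2}$, and taking the square root yields $J^{-1/4}$ as claimed.

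Items (6) and (7) then reduce to the $\eta$-estimates by triangle inequalities, exploiting the a priori bound $\|\Delta_s\|_p \le C_p$ from Proposition \ref{prop:apriori_moment_bound}. For (6), since the empirical mean minimizes the quadratic cost,
\[
    \tmop{tr}(P_{\theta_s})^{1/2} \le \Bigl(\tfrac{1}{J}\sum_j |\theta^j_s - m_s|^2\Bigr)^{1/2} \le \Delta_s + \Bigl(\tfrac{1}{J}\sum_j |\eta^j_s - m_s|^2\Bigr)^{1/2},
\]
and the last sum equals $\tmop{tr}(P_{\eta_s}) + |m_{\eta_s} - m_s|^2$, both factors controlled by (1) and (2). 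For (7), I would write $\|P_{\theta_s}\|_{\tmop{op}} \le \|P_s\|_{\tmop{op}} + \|P_{\eta_s} - P_s\|_{\tmop{op}} + \|P_{\theta_s} - P_{\eta_s}\|_{\tmop{op}}$ and invoke Lemma \ref{lemma:covariance difference bound} with the natural coupling $X = \theta^j$, $Y = \eta^j$ for $j$ uniform, giving $\mathbb{E}[|(X - \mathbb{E}X) - (Y - \mathbb{E}Y)|^2]^{1/2} \le \Delta_s$, then Hölder with (6) handles the product. The main obstacle is managing this last step without circularity between the bounds on $\tmop{tr}(P_{\theta_s})^{1/2}$ and $\|P_{\theta_s}\|_{\tmop{op}}$, which is achieved by proving (6) first using only the sample-mean minimization identity (avoiding Lemma \ref{lemma:covariance difference bound}) and then feeding it into the proof of (7).
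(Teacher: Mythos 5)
Your proposal is correct, and for items (1)--(5) it follows essentially the paper's route: the $\eta^j_s$ are i.i.d.\ draws from $\mu_s$ with uniformly bounded moments on $[0,T]$, so Lemma \ref{lemma:high_moments_bound} applies componentwise, with the recentred covariance $\tfrac{1}{J}\sum_j(\eta^j_s-m_s)(\eta^j_s-m_s)^{\rm T}$ used to restore independence and the $|m_{\eta_s}-m_s|^2$ correction absorbed at order $J^{-1}$. One small imprecision: in item (2) you call $\tmop{tr}(P_{\eta_s})-\tmop{tr}(P_s)$ ``a sample mean of i.i.d.\ scalars,'' but $P_{\eta_s}$ is centred at the empirical mean, so its trace is not a sum of independent terms; you need the same recentring you already perform in item (3) (the paper's $\tilde P_{\eta_s}$), after which the argument goes through with the rate unchanged.

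For items (6) and (7) you take a genuinely different route. The paper differentiates $\tmop{tr}(P_{\theta_s})$ and $\|P_{\theta_s}\|_{\rm Fr}^2$ along the flow, applies Gr\"onwall to reduce both bounds to time $s=0$, and then uses $P_{\theta_0}=P_{\eta_0}$ together with items (2) and (3). You instead work at time $s$ directly, combining the a priori coupling bound on $\Delta_s$ from Proposition \ref{prop:apriori_moment_bound} with the sample-mean minimization identity for (6) and with Lemma \ref{lemma:covariance difference bound} plus H\"older for (7); your ordering (prove (6) without the covariance lemma, then feed it into (7)) correctly avoids circularity between the two items, and the constant-factor slip in $\mathbb{E}[|(X-\mathbb{E}X)-(Y-\mathbb{E}Y)|^2]^{1/2}\leqslant 2\Delta_s$ rather than $\Delta_s$ is immaterial. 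The trade-off is organizational: the paper's Gr\"onwall computation is self-contained and its intermediate estimate \eqref{equ:Ptheta oeprator norm groenwall} is precisely what the paper's proof of Proposition \ref{prop:apriori_moment_bound} relies on, so the paper's dependency chain runs Gr\"onwall $\rightarrow$ a priori bound $\rightarrow$ mean-field limit. Your version makes items (6)--(7) consumers of the a priori bound, which is logically sound (that proposition uses only the pathwise Gr\"onwall inequality, not the statements of (6)--(7)) but means you are leaning on machinery that, as the paper organizes it, is itself downstream of the very Gr\"onwall computation you avoid; if one were rewriting the appendix around your argument, the Gr\"onwall estimate on $\|P_{\theta_s}\|_{\tmop{op}}$ would still need to be established separately to support Proposition \ref{prop:apriori_moment_bound}.
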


\begin{proof}
  \tmtextbf{Item 1:} We use Proposition \ref{prop:mfl_odes_existence} to see
  that there is a unique solution $(m_s, P_s)_{s \in [0, T]}$ for $m$ and $P$
  on $[0, T]$. From this, we can deduce that all moments of $\eta_s^j$ on $[0,
  T]$ can be bounded uniformly. We will use this, since the $\xi_i$ in Lemma
  \ref{lemma:high_moments_bound} will mostly depend on $\eta_s^j$, and
  therefore the bound, depending only on $p$ and $\| \xi_i \|_{2 p}$ can be
  made uniform in $[0, T]$.
  
  We get that
  \[ m_{\eta_s} - m_s = \frac{1}{J} \sum_{j = 1}^J (\eta^j_s - m_s), \]
  to which we can apply Lemma \ref{lemma:high_moments_bound} uniformly over
  $[0, T]$ to prove $1.$
  \\
  \tmtextbf{Item 2:} We define
  \begin{equation*}
    \tilde{P}_{\eta_s}  = \frac{1}{J} \sum_{j = 1}^J (\eta_s^j - m_s)
    (\eta^j_s - m_s)^{\rm T} = P_{\eta_s} + (m_{\eta_s} - m_s) (m_{\eta_s} - m_s)^{\rm T}.
  \end{equation*}
  We can write
  \[ \tmop{tr} (\tilde{P}_{\eta_s}) - \tmop{tr} (P_s) = \frac{1}{J} \sum_{j =
     1}^J \xi^j_s \]
  with $\xi^j_s = (| \eta^j_s - m_s |^2 -\mathbb{E} [| \eta^j_s - m_s |^2])$
  being independent, centred random variables. Furthermore, all their moments
  can be bounded uniformly on $[0, T]$ again, since the moments of $\eta_s^j$
  are bounded uniformly on $[0, T]$. We can then apply Lemma
  \ref{lemma:high_moments_bound} to conclude that
  \[ \| \tmop{tr} (\tilde{P}_{\eta_s}) - \tmop{tr} (P_s) \|_p \leqslant C_p
     J^{- 1 / 2} . \]
  Now we use that $| a^{1 / 2} - b^{1 / 2} | \leqslant | a - b |^{1 / 2}$ and
  Jensens inequality to see that
  \[ \| \tmop{tr} (\tilde{P}_{\eta_s})^{1 / 2} - \tmop{tr} (P_s)^{1 / 2} \|_p
     \leqslant C_p J^{- 1 / 4} \]
  Furthermore, we get that
\begin{align*}
      \tmop{tr} (\tilde{P}_{\eta_s}) 
	   = \tmop{tr} (P_{\eta_s}) + (m_{\eta_s} - m_s)^2
\end{align*}
  Therefore, by also using item $1$ again, we get
  \begin{equation*}
    \| \tmop{tr} (P_{\eta_s})^{1 / 2} - \tmop{tr} (P_s)^{1 / 2} \|_p = \|
    m_{\eta_s}  - m_s \|_p + \| \tmop{tr} (\tilde{P}_{\eta_s})^{1 / 2} -
    \tmop{tr} (P_s)^{1 / 2} \|_p \leqslant C_p  J^{- 1 / 4} .
  \end{equation*}
  \tmtextbf{Item 3:} We use that
  \begin{align*}
    \| \tilde{P}_{\eta_s} - P_s \|_{\text{Fr}}^2 & \lesssim  \sum_{n, m = 1}^D \left(
    \frac{1}{J} \sum_{j = 1}^J (\eta_{s, n}^j \eta_{s, m}^j - m_s -\mathbb{E}
    [\eta_{s, n}^j \eta_{s, m}^j - m_s]) \right)^2,
  \end{align*}
  and apply Lemma \ref{lemma:high_moments_bound} to all $D^2$ terms (in $m, n$)
  on the right-hand side, uniformly on $[0, T]$. Since we are taking their
  square, they will behave like $J^{- 1}$. Therefore, by using that the
  operator norm is equivalent to the Frobenius norm, we obtain
  \begin{align*}
       \| \| \tilde{P}_{\eta_s} - P_s \|_{\tmop{op}}^2 \|_p & \leqslant  C_p 
       J^{- 1} .
 \end{align*}
  Furthermore, $\| P_{\eta_s} - \tilde{P}_{\eta_s} \|_{\tmop{op}} = |
  m_{\eta_s} - m_s |^2$. Therefore,
  \begin{align*}
    \| \| P_{\eta_s} - P_s \|_{\tmop{op}}  \|_p & \leqslant  \| \|
    \tilde{P}_{\eta_s} - P_s \|_{\tmop{op}}  \|_p + \| | m_{\eta_s} - m_s |^2
    \|_p \lesssim J^{- 1 / 2} + J^{- 1},
  \end{align*}
  where we used Item 1. This yields the desired result.
  \\
  \tmtextbf{Item 4\&5:} Now let $f : \mathbb{R}^D \rightarrow \mathbb{R}$ be a
  $f_L$-Lipschitz function. Then
  \[ \mathbb{E} [| f (\eta^j_s) -\mathbb{E}_{\eta_s} [f (\eta_s)] |^p]^{1 / p}
     =\mathbb{E} [| \mathbb{E}_{\eta_s} [f (\eta^j_s) - f (\eta_s)] |^p]^{1 /
     p} \leqslant f_L \mathbb{E} [| \eta_s^j - \eta_s |^p]^{1 / p}, \]
  and therefore the moments of $f (\eta^j_s)$ can also be bounded uniformly on
  $s \in [0, T]$ and we can apply Lemma \ref{lemma:high_moments_bound}. Since
  the entries of $y$ are all Lipschitz, we can apply this element-wise and get
  item 4. The same holds for all the entries of $R$. Applying it to each entry
  and using that the operator norm of a positive diagonal matrix is bounded by
  its highest entry, we get item $5$.
  \\
  \tmtextbf{Item 6:} We get that
  \begin{align*}
    \frac{\mathd}{\mathd s} \tmop{tr} (P_{\theta_s}) & =  \tmop{tr} \left( -
    \frac{1}{2} P_{\theta_s} \Phi R_{\theta_s} \Phi^{\rm T}P_{\theta_s} - \frac{1}{2}
    (P_{\tmop{prior}}^{- 1} P_{\theta_s} + P_{\theta_s} P_{\tmop{prior}}^{- 1}) +
    \frac{1}{2} P_{\theta_s} \right)\\
    & \lesssim  \| P_{\tmop{prior}}^{- 1} \| _{\text{Fr}} \| P_{\theta_s} \|_{\text{Fr}} + \tmop{tr} (P_{\theta_s})\\
    & \lesssim  \| P_{\tmop{prior}}^{- 1} \| _{\text{Fr}} \tmop{tr} (P_{\theta_s}) +
    \tmop{tr} (P_{\theta_s})
  \end{align*}
  where we denote by $\| P  \|_{\text{Fr}}$ the Frobenius-norm, which is upper bounded
  by a multiple of the operator norm, which itself is upper-bounded by the
  trace (for positive semidefinite matrices). Therefore, we can now apply
  Groenwall and bound
  \begin{align*}
    \tmop{tr} (P_{\theta_s})^{1 / 2} & \lesssim  \tmop{tr} (P_{\theta_0})^{1
    / 2} \leqslant | \tmop{tr} (P_{\theta_0})^{1 / 2} - \tmop{tr} (P_0)^{1 /
    2} | + \tmop{tr} (P_0)^{1 / 2}
  \end{align*}
  Since $\tmop{tr} (P_{\theta_0}) = \tmop{tr} (P_{\eta_0})$ we can now apply
  Item 2 to the first term. The last term is constant. The claim then follows.
  \\
  \tmtextbf{Item 7:} For the Frobenius-norm we get that
  \begin{align*}
    \frac{\mathd}{\mathd s} \| P_{\theta_s} \|_{\text{Fr}}^2 & =  \tmop{tr} \left( -
    \frac{1}{2} P_{\theta_s} \Phi R_{\theta} \Phi^{\rm T}P_{\theta_s}^2 -
    \frac{1}{2} (P_{\tmop{prior}}^{- 1} P_{\theta_s}^2 + P_{\theta_s}^2
    P_{\tmop{prior}}^{- 1}) + \frac{1}{2} P_{\theta_s}^2 \right)\\
    & =  - \frac{1}{2} \tmop{tr} (P_{\theta_s}^{3 / 2} \Phi R_{\theta}
    \Phi^{\rm T}P_{\theta_s}^{3 / 2} + P_{\theta_s} P_{\tmop{prior}}^{- 1}
    P_{\theta_s}) + \frac{1}{2} \tmop{tr} (P_{\theta_s}^2)\\
    & \leqslant  \frac{1}{2} \| P_{\theta_s} \| .
  \end{align*}
  We apply Groenwall and using that the operator norm and the Frobenius norm
  are equivalent on $\mathbb{R}^D$, we get that
  \begin{align}
    \| P_{\theta_s} \|_{\tmop{op}} & \lesssim  \| P_{\theta_0} \|_{\tmop{op}}
    \leqslant \| P_{\theta_0} - P_0 \|_{\tmop{op}} + \| P_0 \|_{\tmop{op}} . 
    \label{equ:Ptheta oeprator norm groenwall}
  \end{align}
  We now apply Item 3 to the first term on the right-hand side and use that
  the last term is constant. The claim follows.
\end{proof}

\printbibliography

@article{pidstrigach2022affine,
	title="Affine-invariant ensemble transform methods for logistic regression",
	author="Pidstrigach, Jakiw and Reich, Sebastian",
	journal="Foundations of Computational Mathematics",
	pages="1--34",
	year="2022",
	publisher="Springer"
}

@article{garbuno2020interacting,
	title="Interacting Langevin diffusions: Gradient structure and ensemble Kalman sampler",
	author="Garbuno-Inigo, Alfredo and Hoffmann, Franca and Li, Wuchen and Stuart, Andrew M",
	journal="SIAM Journal on Applied Dynamical Systems",
	volume="19",
	number="1",
	pages="412--441",
	year="2020",
	publisher="SIAM"
}

@article{ding2021ensemble,
	title="Ensemble Kalman inversion: mean-field limit and convergence analysis",
	author="Ding, Zhiyan and Li, Qin",
	journal="Statistics and Computing",
	volume="31",
	number="1",
	pages="1--21",
	year="2021",
	publisher="Springer"
}

@article{ding2021ensemblesampler,
    title = "Ensemble Kalman Sampler: Mean-field Limit and Convergence Analysis",
    author = "Ding, Zhiyan and Li, Qin",
    journal = "SIAM Journal on Mathematical Analysis",
    volume = "53",
    number = "2",
    pages = "1546-1578",
    year = "2021",
    doi = "10.1137/20M1339507",
    URL = "https://doi.org/10.1137/20M1339507"
}

@article{schillings2017analysis,
	title="Analysis of the ensemble Kalman filter for inverse problems",
	author="Schillings, Claudia and Stuart, Andrew M",
	journal="SIAM Journal on Numerical Analysis",
	volume="55",
	number="3",
	pages="1264--1290",
	year="2017",
	publisher="SIAM"
}

@article{schillings2018convergence,
	title="Convergence analysis of ensemble Kalman inversion: the linear, noisy case",
	author="Schillings, Claudia and Stuart, Andrew M",
	journal="Applicable Analysis",
	volume="97",
	number="1",
	pages="107--123",
	year="2018",
	publisher="Taylor \& Francis"
}

@article{dieci1999smooth,
	title="On smooth decompositions of matrices",
	author="Dieci, Luca and Eirola, Timo",
	journal="SIAM Journal on Matrix Analysis and Applications",
	volume="20",
	number="3",
	pages="800--819",
	year="1999",
	publisher="SIAM"
}

@article{Huang_2022,
doi = {10.1088/1361-6420/ac99fa},
url = {https://dx.doi.org/10.1088/1361-6420/ac99fa},
year = {2022},
month = oct,
publisher = {IOP Publishing},
volume = {38},
number = {12},
pages = {125006},
author = {Daniel Zhengyu Huang and Jiaoyang Huang and Sebastian Reich and Andrew M Stuart},
title = {Efficient derivative-free Bayesian inference for large-scale inverse problems},
journal = {Inverse Problems}
}

@inproceedings{Kristiadi2020BeingBE,
  title={Being Bayesian, Even Just a Bit, Fixes Overconfidence in ReLU Networks},
  author={Agustinus Kristiadi and Matthias Hein and Philipp Hennig},
  booktitle={International Conference on Machine Learning},
  year={2020}
}

@InProceedings{Hein_2019_CVPR,
author = {Hein, Matthias and Andriushchenko, Maksym and Bitterwolf, Julian},
title = {Why ReLU Networks Yield High-Confidence Predictions Far Away From the Training Data and How to Mitigate the Problem},
booktitle = {Proceedings of the IEEE/CVF Conference on Computer Vision and Pattern Recognition (CVPR)},
year = {2019}
}

@InProceedings{pmlr-v70-guo17a,
  title = 	 {On Calibration of Modern Neural Networks},
  author =       {Chuan Guo and Geoff Pleiss and Yu Sun and Kilian Q. Weinberger},
  booktitle = 	 {Proceedings of the 34th International Conference on Machine Learning},
  pages = 	 {1321--1330},
  year = 	 {2017},
  editor = 	 {Precup, Doina and Teh, Yee Whye},
  volume = 	 {70},
  series = 	 {Proceedings of Machine Learning Research},
  month = 	 Aug,
  publisher =    {PMLR},
  pdf = 	 {http://proceedings.mlr.press/v70/guo17a/guo17a.pdf},
  url = 	 {https://proceedings.mlr.press/v70/guo17a.html},
  abstract = 	 {Confidence calibration – the problem of predicting probability estimates representative of the true correctness likelihood – is important for classification models in many applications. We discover that modern neural networks, unlike those from a decade ago, are poorly calibrated. Through extensive experiments, we observe that depth, width, weight decay, and Batch Normalization are important factors influencing calibration. We evaluate the performance of various post-processing calibration methods on state-of-the-art architectures with image and document classification datasets. Our analysis and experiments not only offer insights into neural network learning, but also provide a simple and straightforward recipe for practical settings: on most datasets, temperature scaling – a single-parameter variant of Platt Scaling – is surprisingly effective at calibrating predictions.}
}

@misc{sharma2023bayesian,
      title={Do Bayesian Neural Networks Need To Be Fully Stochastic?}, 
      author={Mrinank Sharma and Sebastian Farquhar and Eric Nalisnick and Tom Rainforth},
      year={2023},
      eprint={2211.06291},
      archivePrefix={arXiv},
      primaryClass={cs.LG}
}

@PhdThesis{Gal2016UncertaintyID,
  title={Uncertainty in Deep Learning},
  author={Gal, Yarin},
  year={2016},
  school={University of Cambridge}
}

@inproceedings{10.5555/3045118.3045290,
author = {Blundell, Charles and Cornebise, Julien and Kavukcuoglu, Koray and Wierstra, Daan},
title = {Weight Uncertainty in Neural Networks},
year = {2015},
publisher = {JMLR.org},
booktitle = {Proceedings of the 32nd International Conference on International Conference on Machine Learning - Volume 37},
pages = {1613–1622},
numpages = {10},
location = {Lille, France},
series = {ICML'15}
}

@InProceedings{pmlr-v80-zhang18l,
  title = 	 {Noisy Natural Gradient as Variational Inference},
  author =       {Zhang, Guodong and Sun, Shengyang and Duvenaud, David and Grosse, Roger},
  booktitle = 	 {Proceedings of the 35th International Conference on Machine Learning},
  pages = 	 {5852--5861},
  year = 	 {2018},
  editor = 	 {Dy, Jennifer and Krause, Andreas},
  volume = 	 {80},
  series = 	 {Proceedings of Machine Learning Research},
  publisher =    {PMLR},
  pdf = 	 {http://proceedings.mlr.press/v80/zhang18l/zhang18l.pdf},
  url = 	 {https://proceedings.mlr.press/v80/zhang18l.html},
  abstract = 	 {Variational Bayesian neural nets combine the flexibility of deep learning with Bayesian uncertainty estimation. Unfortunately, there is a tradeoff between cheap but simple variational families (e.g.&nbsp;fully factorized) or expensive and complicated inference procedures. We show that natural gradient ascent with adaptive weight noise implicitly fits a variational posterior to maximize the evidence lower bound (ELBO). This insight allows us to train full-covariance, fully factorized, or matrix-variate Gaussian variational posteriors using noisy versions of natural gradient, Adam, and K-FAC, respectively, making it possible to scale up to modern-size ConvNets. On standard regression benchmarks, our noisy K-FAC algorithm makes better predictions and matches Hamiltonian Monte Carlo’s predictive variances better than existing methods. Its improved uncertainty estimates lead to more efficient exploration in active learning, and intrinsic motivation for reinforcement learning.}
}

@inproceedings{Lakshminarayanan2016SimpleAS,
  title={Simple and Scalable Predictive Uncertainty Estimation using Deep Ensembles},
  author={Balaji Lakshminarayanan and Alexander Pritzel and Charles Blundell},
  booktitle={NIPS},
  year={2016}
}

@inproceedings{Daxbergeretal21,
  title = {Bayesian Deep Learning via Subnetwork Inference},
  author = {Daxberger, E. and Nalisnick, E. and Allingham, J. and Antorán, J. and Hernández-Lobato, J. M.},
  booktitle = {Proceedings of 38th International Conference on Machine Learning (ICML)},
  volume = {139},
  pages = {2510--2521},
  series = {Proceedings of Machine Learning Research},
  editors = {Meila, Marina and Zhang, Tong},
  publisher = {PMLR},
  month = jul,
  year = {2021},
  doi = {},
  month_numeric = {7},
  url = {https://proceedings.mlr.press/v139/daxberger21a.html}
}

@InProceedings{pmlr-v37-snoek15,
  title = 	 {Scalable Bayesian Optimization Using Deep Neural Networks},
  author = 	 {Snoek, Jasper and Rippel, Oren and Swersky, Kevin and Kiros, Ryan and Satish, Nadathur and Sundaram, Narayanan and Patwary, Mostofa and Prabhat, Mr and Adams, Ryan},
  booktitle = 	 {Proceedings of the 32nd International Conference on Machine Learning},
  pages = 	 {2171--2180},
  year = 	 {2015},
  editor = 	 {Bach, Francis and Blei, David},
  volume = 	 {37},
  series = 	 {Proceedings of Machine Learning Research},
  address = 	 {Lille, France},
  month = 	 Jul,
  publisher =    {PMLR},
  pdf = 	 {http://proceedings.mlr.press/v37/snoek15.pdf},
  url = 	 {https://proceedings.mlr.press/v37/snoek15.html},
  abstract = 	 {Bayesian optimization is an effective methodology for the global optimization of functions with expensive evaluations. It relies on querying a distribution over functions defined by a relatively cheap surrogate model. An accurate model for this distribution over functions is critical to the effectiveness of the approach, and is typically fit using Gaussian processes (GPs). However, since GPs scale cubically with the number of observations, it has been challenging to handle objectives whose optimization requires many evaluations, and as such, massively parallelizing the optimization. In this work, we explore the use of neural networks as an alternative to GPs to model distributions over functions. We show that performing adaptive basis function regression with a neural network as the parametric form performs competitively with state-of-the-art GP-based approaches, but scales linearly with the number of data rather than cubically. This allows us to achieve a previously intractable degree of parallelism, which we apply to large scale hyperparameter optimization, rapidly finding competitive models on benchmark object recognition tasks using convolutional networks, and image caption generation using neural language models.}
}

@inproceedings{
liang2018enhancing,
title={Enhancing The Reliability of Out-of-distribution Image Detection in Neural Networks},
author={Shiyu Liang and Yixuan Li and R. Srikant},
booktitle={International Conference on Learning Representations},
year={2018},
url={https://openreview.net/forum?id=H1VGkIxRZ},
}

@book{10.5555/1206873, 
author = {Evensen, Geir}, 
title = {Data Assimilation: The Ensemble Kalman Filter}, year = {2006},
isbn = {354038300X}, 
publisher = {Springer-Verlag},
address = {Berlin, Heidelberg} }

@article{Reich2011ADS,
  title={A dynamical systems framework for intermittent data assimilation},
  author={Sebastian Reich},
  journal={BIT Numerical Mathematics},
  year={2011},
  volume={51},
  pages={235-249},
  doi = {10.1007/s10543-010-0302-4}
}

@article{articleReich,
author = {Cotter, C.J. and Reich, Sebastian},
year = {2013},
pages = {91-134},
title = {Ensemble filter techniques for intermittent data assimilation},
volume = {13},
journal = {Radon Ser. Comput. Appl. Math.}
}

@article{Kovachki_2019,
doi = {10.1088/1361-6420/ab1c3a},
url = {https://dx.doi.org/10.1088/1361-6420/ab1c3a},
year = {2019},
month = aug,
publisher = {IOP Publishing},
volume = {35},
number = {9},
pages = {095005},
author = {Nikola B Kovachki and Andrew M Stuart},
title = {Ensemble Kalman inversion: a derivative-free technique for machine learning tasks},
journal = {Inverse Problems}
}

@InProceedings{pmlr-v48-gal16,
  title = 	 {Dropout as a Bayesian Approximation: Representing Model Uncertainty in Deep Learning},
  author = 	 {Gal, Yarin and Ghahramani, Zoubin},
  booktitle = 	 {Proceedings of The 33rd International Conference on Machine Learning},
  pages = 	 {1050--1059},
  year = 	 {2016},
  editor = 	 {Balcan, Maria Florina and Weinberger, Kilian Q.},
  volume = 	 {48},
  series = 	 {Proceedings of Machine Learning Research},
  address = 	 {New York, New York, USA},
  publisher =    {PMLR},
  pdf = 	 {http://proceedings.mlr.press/v48/gal16.pdf},
  url = 	 {https://proceedings.mlr.press/v48/gal16.html}
}

@inproceedings{NEURIPS2019_8558cb40,
 author = {Ovadia, Yaniv and Fertig, Emily and Ren, Jie and Nado, Zachary and Sculley, D. and Nowozin, Sebastian and Dillon, Joshua and Lakshminarayanan, Balaji and Snoek, Jasper},
 booktitle = {Advances in Neural Information Processing Systems},
 editor = {H. Wallach and H. Larochelle and A. Beygelzimer and F. d\textquotesingle Alch\'{e}-Buc and E. Fox and R. Garnett},
 pages = {},
 publisher = {Curran Associates, Inc.},
 title = {Can you trust your model\textquotesingle s uncertainty?  Evaluating predictive uncertainty under dataset shift},
 url = {https://proceedings.neurips.cc/paper_files/paper/2019/file/8558cb408c1d76621371888657d2eb1d-Paper.pdf},
 volume = {32},
 year = {2019}
}

@misc{Wilson2020TheCF,
      title={The Case for Bayesian Deep Learning}, 
      author={Andrew Gordon Wilson},
      year={2020},
      eprint={2001.10995},
      archivePrefix={arXiv},
      primaryClass={cs.LG}
}

@article{ABDAR2021243,
title = {A review of uncertainty quantification in deep learning: Techniques, applications and challenges},
journal = {Information Fusion},
volume = {76},
pages = {243-297},
year = {2021},
issn = {1566-2535},
doi = {https://doi.org/10.1016/j.inffus.2021.05.008},
url = {https://www.sciencedirect.com/science/article/pii/S1566253521001081},
author = {Moloud Abdar and Farhad Pourpanah and Sadiq Hussain and Dana Rezazadegan and Li Liu and Mohammad Ghavamzadeh and Paul Fieguth and Xiaochun Cao and Abbas Khosravi and U. Rajendra Acharya and Vladimir Makarenkov and Saeid Nahavandi}
}

@article{MacKay1992TheEF,
  title={The Evidence Framework Applied to Classification Networks},
  author={David John Cameron MacKay},
  journal={Neural Computation},
  year={1992},
  volume={4},
  pages={720-736}
}

@article{MacKay1991APB,
  title={A Practical Bayesian Framework for Backprop Networks},
  author={David John Cameron MacKay},
  journal={Neural Computation},
  year={1991}
}

@article{10.1162/neco.1992.4.3.448,
    author = {MacKay, David J. C.},
    title = "{A Practical Bayesian Framework for Backpropagation Networks}",
    journal = {Neural Computation},
    volume = {4},
    number = {3},
    pages = {448-472},
    year = {1992},
    month = 05,
    issn = {0899-7667},
    doi = {10.1162/neco.1992.4.3.448},
    url = {https://doi.org/10.1162/neco.1992.4.3.448},
    eprint = {https://direct.mit.edu/neco/article-pdf/4/3/448/812348/neco.1992.4.3.448.pdf},
}

@inproceedings{NIPS2011_7eb3c8be,
 author = {Graves, Alex},
 booktitle = {Advances in Neural Information Processing Systems},
 editor = {J. Shawe-Taylor and R. Zemel and P. Bartlett and F. Pereira and K.Q. Weinberger},
 pages = {},
 publisher = {Curran Associates, Inc.},
 title = {Practical Variational Inference for Neural Networks},
 url = {https://proceedings.neurips.cc/paper_files/paper/2011/file/7eb3c8be3d411e8ebfab08eba5f49632-Paper.pdf},
 volume = {24},
 year = {2011}
}

@inproceedings{Welling2011BayesianLV,
  title={Bayesian Learning via Stochastic Gradient Langevin Dynamics},
  author={Max Welling and Yee Whye Teh},
  booktitle={International Conference on Machine Learning},
  year={2011}
}

@book{neal2012bayesian,
  added-at = {2017-10-13T08:37:40.000+0200},
  author = {Neal, Radford M},
  biburl = {https://www.bibsonomy.org/bibtex/20b2c2f1ca8690b134837d678dbcb817e/becker},
  interhash = {175645791f666bce9c743054e47cd24f},
  intrahash = {0b2c2f1ca8690b134837d678dbcb817e},
  keywords = {bayes bayesian nn deep learning neural network toread citedby:scholar:count:2746 citedby:scholar:timestamp:2017-10-13},
  publisher = {Springer Science \& Business Media},
  timestamp = {2017-10-13T08:37:40.000+0200},
  title = {Bayesian learning for neural networks},
  volume = 118,
  year = 2012
}

@article{neal2011mcmc,
  title={MCMC using Hamiltonian dynamics},
  author={Neal, Radford M and others},
  journal={Handbook of markov chain monte carlo},
  volume={2},
  number={11},
  pages={2},
  year={2011},
  publisher={Chapman and Hall/CRC}
}

@misc{Haber2018NeverLB,
      title={Never look back - A modified EnKF method and its application to the training of neural networks without back propagation}, 
      author={Eldad Haber and Felix Lucka and Lars Ruthotto},
      year={2018},
      eprint={1805.08034},
      archivePrefix={arXiv},
      primaryClass={math.NA}
}

@article{doi:10.1137/19M1304891,
author = {Garbuno-Inigo, Alfredo and N\"{u}sken, Nikolas and Reich, Sebastian},
title = {Affine Invariant Interacting Langevin Dynamics for Bayesian Inference},
journal = {SIAM Journal on Applied Dynamical Systems},
volume = {19},
number = {3},
pages = {1633-1658},
year = {2020},
doi = {10.1137/19M1304891},

URL = {   
        https://doi.org/10.1137/19M1304891

},
eprint = {  
        https://doi.org/10.1137/19M1304891
}
}

@article{https://doi.org/10.1029/94JC00572,
author = {Evensen, Geir},
title = {Sequential data assimilation with a nonlinear quasi-geostrophic model using Monte Carlo methods to forecast error statistics},
journal = {Journal of Geophysical Research: Oceans},
volume = {99},
number = {C5},
pages = {10143-10162},
doi = {https://doi.org/10.1029/94JC00572},
url = {https://agupubs.onlinelibrary.wiley.com/doi/abs/10.1029/94JC00572},
year = {1994}
}

@article{doi:10.1137/19M1303162,
author = {Reich, Sebastian and Weissmann, Simon},
title = {Fokker--Planck Particle Systems for Bayesian Inference: Computational Approaches},
journal = {SIAM/ASA Journal on Uncertainty Quantification},
volume = {9},
number = {2},
pages = {446-482},
year = {2021},
doi = {10.1137/19M1303162},

URL = { 
    
        https://doi.org/10.1137/19M1303162
    
    

},
eprint = { 
    
        https://doi.org/10.1137/19M1303162    

}
}

@Article{e23080990,
AUTHOR = {Galy-Fajou, Théo and Perrone, Valerio and Opper, Manfred},
TITLE = {Flexible and Efficient Inference with Particles for the Variational Gaussian Approximation},
JOURNAL = {Entropy},
VOLUME = {23},
YEAR = {2021},
NUMBER = {8},
ARTICLE-NUMBER = {990},
URL = {https://www.mdpi.com/1099-4300/23/8/990},
PubMedID = {34441130},
ISSN = {1099-4300},
ABSTRACT = {Variational inference is a powerful framework, used to approximate intractable posteriors through variational distributions. The de facto standard is to rely on Gaussian variational families, which come with numerous advantages: they are easy to sample from, simple to parametrize, and many expectations are known in closed-form or readily computed by quadrature. In this paper, we view the Gaussian variational approximation problem through the lens of gradient flows. We introduce a flexible and efficient algorithm based on a linear flow leading to a particle-based approximation. We prove that, with a sufficient number of particles, our algorithm converges linearly to the exact solution for Gaussian targets, and a low-rank approximation otherwise. In addition to the theoretical analysis, we show, on a set of synthetic and real-world high-dimensional problems, that our algorithm outperforms existing methods with Gaussian targets while performing on a par with non-Gaussian targets.},
DOI = {10.3390/e23080990}
}

@book{reich_cotter_2015, place={Cambridge}, title={Probabilistic Forecasting and Bayesian Data Assimilation}, DOI={10.1017/CBO9781107706804}, publisher={Cambridge University Press}, author={Reich, Sebastian and Cotter, Colin}, year={2015}}

@inproceedings{10.1117/12.839590,
author = {Fred Daum and Jim Huang and Arjang Noushin},
title = {{Exact particle flow for nonlinear filters}},
volume = {7697},
booktitle = {Signal Processing, Sensor Fusion, and Target Recognition XIX},
editor = {Ivan Kadar},
organization = {International Society for Optics and Photonics},
publisher = {SPIE},
pages = {769704},
keywords = {particle filter, Kalman filter, log-homotopy, MCMC, curse of dimensionality, extended Kalman filter, particle flow, particle degeneracy},
year = {2010},
doi = {10.1117/12.839590},
URL = {https://doi.org/10.1117/12.839590}
}

@article{fournier2015rate,
  title={On the rate of convergence in Wasserstein distance of the empirical measure},
  author={Fournier, Nicolas and Guillin, Arnaud},
  journal={Probability theory and related fields},
  volume={162},
  number={3-4},
  pages={707--738},
  year={2015},
  publisher={Springer}
}

@article{galy2021flexible,
  title={Flexible and efficient inference with particles for the variational Gaussian approximation},
  author={Galy-Fajou, Th{\'e}o and Perrone, Valerio and Opper, Manfred},
  journal={Entropy},
  volume={23},
  number={8},
  pages={990},
  year={2021},
  publisher={MDPI}
}

@article{https://doi.org/10.1002/qj.2186,
author = {Amezcua, Javier and Ide, Kayo and Kalnay, Eugenia and Reich, Sebastian},
title = {Ensemble transform Kalman–Bucy filters},
journal = {Quarterly Journal of the Royal Meteorological Society},
volume = {140},
number = {680},
pages = {995-1004},
keywords = {Ensemble Kalman Filter, Kalman–Bucy Filter, weight-based formulations, stiff ODE},
doi = {https://doi.org/10.1002/qj.2186},
url = {https://rmets.onlinelibrary.wiley.com/doi/abs/10.1002/qj.2186},
year = {2014}
}

@misc{chen2023sampling,
      title={Sampling via Gradient Flows in the Space of Probability Measures}, 
      author={Yifan Chen and Daniel Zhengyu Huang and Jiaoyang Huang and Sebastian Reich and Andrew M Stuart},
      year={2023},
      eprint={2310.03597},
      archivePrefix={arXiv},
      primaryClass={stat.ML}
}

\end{document}